\title{Distributed Online Learning for Joint Regret with
	Communication Constraints}
\newcommand{\TODO}[1]{%
	\ifmmode
	\text{\textcolor{red}{TODO: #1}}
	\else
	\textcolor{red}{TODO: #1}
	\fi
}
\newcommand{\Dirk}[1]{%
	\ifmmode
	\text{\textcolor{green}{Dirk: #1}}
	\else
	\textcolor{green}{Dirk: #1}
	\fi
}
\renewcommand{\P}{\mathbb P}
\newcommand{\R}{\mathbb R}
\newcommand{\N}{\mathbb N}
\newtheorem{theorem}{theorem}
\newtheorem{lemma}[theorem]{Lemma}
\newtheorem{proposition}[theorem]{Proposition}
\newtheorem{corollary}[theorem]{Corollary}
\DeclareMathOperator{\E}{\mathbb E}
\DeclareBoldMathCommand{\w}{w}
\DeclareBoldMathCommand{\u}{u}
\DeclareBoldMathCommand{\b}{b}
\DeclareBoldMathCommand{\g}{g}
\DeclareBoldMathCommand{\x}{x}
\DeclareBoldMathCommand{\y}{y}
\DeclareBoldMathCommand{\s}{s}
\DeclareBoldMathCommand{\I}{I}
\DeclareBoldMathCommand{\z}{z}
\DeclareBoldMathCommand{\G}{G}
\renewcommand{\O}{\mathcal{O}}
\newcommand{\F}{\mathcal{F}}
\newcommand{\Bset}{\mathcal{B}}
\newcommand{\Kset}{\mathcal{K}}
\newcommand{\Pset}{\mathcal{P}}
\newcommand{\Qset}{\mathcal{Q}}
\newcommand{\Nset}{\mathcal{N}}
\newcommand{\Sset}{\mathcal{S}}
\newcommand{\Aset}{\mathcal{A}}
\newcommand{\reals}{\mathbb{R}}
\newcommand{\regret}{\mathcal{R}}
\newcommand{\lregret}{\tilde{\mathcal{R}}}
\newcommand{\domainw}{\mathcal{W}}
\newcommand{\sumT}{\sum_{t=1}^T}
\newcommand{\sumt}{\sum_{s=1}^t}
\newcommand\inner[2]{\langle #1, #2 \rangle}
\newcommand{\half}{\tfrac{1}{2}}
\newcommand{\etahat}{\hat{\eta}}
\newcommand{\Graph}{\mathcal{G}}
\newcommand{\ceil}[1]{\left\lceil #1 \right\rceil}
\newcommand{\floor}[1]{\left\lfloor #1 \right\rfloor}
\newcommand{\eps}{\varepsilon}
\newcommand{\erf}{\textnormal{erf}}
\renewcommand{\leq}{\leqslant}
\renewcommand{\geq}{\geqslant}
\renewcommand{\tilde}{\widetilde}
\renewcommand{\epsilon}{\varepsilon}
\renewcommand{\eps}{\varepsilon}
\renewcommand{\hat}{\widehat}
\tikzstyle{every picture}+=[remember picture]
\tikzstyle{agent}=[circle,draw,fill=blue!70,minimum size=0.30cm,inner sep=0pt]
\tikzstyle{agent2}=[agent,fill=gray!30]
\author{%
 {Dirk van der Hoeven} \textit{dirk@dirkvanderhoeven.com}\\
 Universit\`a degli Studi di Milano
 \and
 {H\'edi Hadiji} \textit{hedi.hadiji@gmail.com}\\
 University of Amsterdam
 \and 
 {Tim van Erven} \textit{tim@timvanerven.nl}\\
 University of Amsterdam
}
\begin{document}

\maketitle

\begin{abstract}%
We consider distributed online learning for joint regret with
communication constraints. In this setting, there are multiple agents
that are connected in a graph. Each round, an adversary first
activates one of the agents to issue a prediction and provides a
corresponding gradient, and then the agents are allowed to send a
$b$-bit message to their neighbors in the graph. All agents cooperate
to control the joint regret, which is the sum of the losses of the
activated agents minus the losses evaluated at the best fixed common
comparator parameters $\u$. We observe that it is suboptimal for
agents to wait for
gradients that take too long to arrive. Instead, the graph should
be partitioned into local clusters that communicate among
themselves. 
Our main result is a new method that can adapt to the optimal graph
partition for the adversarial activations and gradients, where the
graph partition is selected from a set of candidate partitions. A
crucial building block along the way is a new algorithm for online
convex optimization with delayed gradient information that is
comparator-adaptive, meaning that its joint regret scales with the
norm of the comparator $\|\u\|$. We further provide near-optimal
gradient compression schemes depending on the ratio of $b$ and the
dimension times the diameter of the graph.
\end{abstract}

\section{Introduction}

We consider decentralized \emph{online convex optimization} (OCO)
with multiple agents that share information across a network to
improve the prediction quality of the network as a whole. Our motivation
comes from cases where local computation is cheap, but communication is
relatively expensive. This is the case, for instance, in sensor
networks, where the energy cost of wireless communication is typically
the main bottleneck, and long-distance communication requires much more
energy than communication between nearby sensors \citep{RabatNowak2004}.
It also applies to cases where communication is relatively slow compared
to the volume of prediction requests that each agent must serve. For
instance, in climate informatics communication may be slow because
agents are geographically spread out
\citep{mcquade2012global,mcquade2017spatiotemporal}, and in finance or
online advertising the rate of prediction requests may be so high that
communication is slow by comparison. To model such scenarios, we limit
communication in two ways: first, agents can only directly communicate
to their neighbors in a graph $\Graph$ and, second, the messages that
the agents can send are limited to contain at most $b$ bits. We further
assume that learning is fully decentralized, so there is no central
coordinating agent as in federated learning \citep{kairouz2019advances},
and no single agent that dictates the predictions for all other agents
as in distributed online optimization for consensus problems
\citep{HosseiniEtAl2013,YanEtAl2013}.

To fix the setting, assume there are $N$ agents, which are cooperating
to make sequential predictions over the course of $T$ rounds. In every
round $t$, first one of the agents $I_t$ is activated by an
adversary to select a prediction $\w_t$ from a closed and convex domain
$\domainw \subseteq \reals^d$. Then this agent receives feedback from
the adversary in the form of the (sub)gradient $\g_t = \nabla
\ell_t(\w_t)$ of a convex loss function $\ell_t$ over~$\domainw$, with
bounded Euclidean norm $\|\g_t\|\leq G$. Finally, all agents are allowed
to communicate by sending a $b$-bit message to their neighbors in
$\Graph$, and the round ends. The common goal of the agents is to
control the \emph{joint regret} with respect to comparator parameters
$\u \in \domainw$:
\begin{equation*}
	\regret_T(\u) = \sumT\left(\ell_t(\w_t) - \ell_t(\u)\right).
\end{equation*}
We refer to this setting, as \emph{\underline{d}istributed
	\underline{o}nline \underline{c}onvex \underline{o}ptimization for
	\underline{j}oint regret with \underline{c}ommunication constraints}
(DOCO-JC). Apart from the communication limit $b$, the crucial
distinction between DOCO-JC and standard OCO
\citep{shalev2011online,hazan2016introduction} is that information about
the gradients $\g_t$ takes time to travel through the graph, so the
agents suffer from delayed feedback
\citep{mcmahan2014delay,joulani2016delay,hsieh2020multiagent}. This observation has
prompted \citet{hsieh2020multiagent} to consider a more abstract
framework, in which there is no explicit graph, but only assumptions
about the delays. For instance, if $\tau$ is the maximum delay before
$\g_t$ is known by every agent, then Corollary~2 of
\citeauthor{hsieh2020multiagent} implies a joint regret bound of
\begin{equation}\label{eqn:jointclusters}
	\regret_T(\u) = \O\big(\max_{\u \in \domainw} \|\u\| \sqrt{\tau T}\big).
\end{equation}
In our setting, $\tau$ corresponds to the maximum graph distance between
any two agents that are ever active, i.e.\ the diameter $D(\Graph)$ of
$\Graph$ if all agents are activated at least once.

Although modeling only delays is an elegant abstraction, we argue that
it is ultimately insufficient and that the graph structure should be
explicitly taken into account. To see this, consider the graph $\Graph$
from Figure~\ref{fig:galaxiesFarFarAway}. In this graph, there are two
clusters of agents that are very far apart. For simplicity, suppose that
only agents from the two clusters are ever active, while the agents that
connect the clusters only serve to pass on information. Then the maximum
delay $\tau$ can be made arbitrarily large by extending the line that
connects the two clusters. There exists a much better strategy, however,
which is to have the two clusters operate independently with a maximum
delay of $\tau_j = 2$ within each cluster $j=1,2$, leading to joint
regret
\begin{equation}\label{eqn:separateclusters}
	\regret_T(\u)
	= \O\big(\max_{\u \in \domainw} \|\u\| \sqrt{\tau_1T} + \max_{\u \in \domainw} \|\u\| \sqrt{\tau_2 T}\, \big) 
	= \O\big(\max_{\u \in \domainw} \|\u\| \sqrt{T}\, \big).
\end{equation}
Comparing \eqref{eqn:separateclusters} to \eqref{eqn:jointclusters} for
arbitrarily large $\tau$, we see that explicitly taking the graph
structure into account can lead to an arbitrarily large improvement over
modeling only delays. The takeaway from this example is that it is
better for an agent to ignore information when it has to wait too long
to receive it. The same conclusion still holds even if we replace $\tau$
by more refined measures of delay.
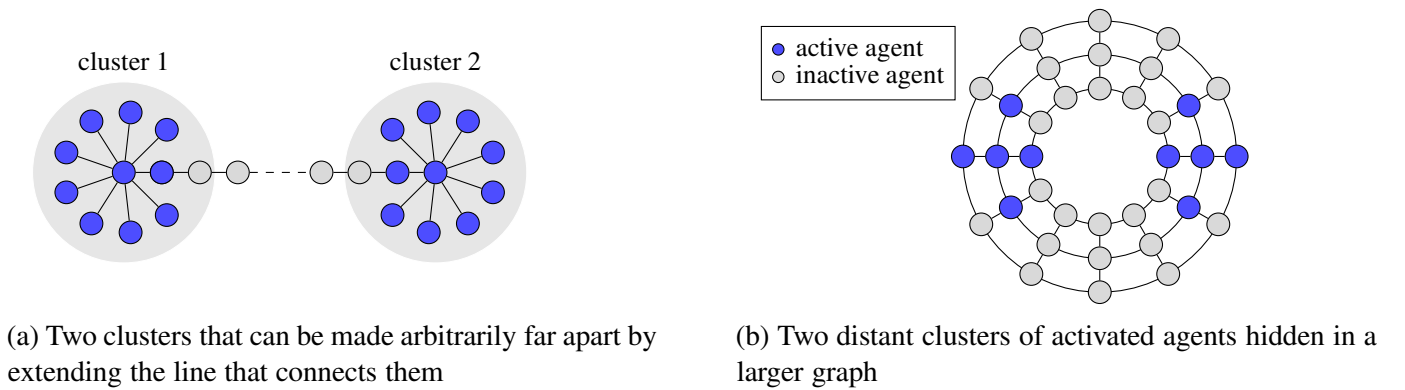
\begin{figure}[htb]
	\subcaptionbox{Two clusters that can be made arbitrarily far apart by
	extending the line that connects them\label{fig:galaxiesFarFarAway}}{%
		\newcommand{\clustersize}{8}
		\newcommand{\radius}{0.8cm}
		\newcommand{\linesp}{0.5cm}
		\centering
		\makebox{%
			\begin{minipage}[b]{0.45\textwidth}
				\begin{tikzpicture}
					\node[agent] (center1) at (0cm,0) {};
					\foreach \c in {1,...,4}{
						\ifthenelse{\c<3}{%
							\node[agent] (center\c) at ({(\c-2)*\linesp},0) {};
						}{
							\node[agent2] (center\c) at ({(\c-2)*\linesp},0) {};
						}
					}
					\foreach \c in {5,...,8}{
						\ifthenelse{\c>6}{%
							\node[agent] (center\c) at ({(\c-0.8)*\linesp},0) {};
						}{
							\node[agent2] (center\c) at ({(\c-0.8)*\linesp},0) {};
						}
					}
					\foreach \c [count=\cc from 2] in {1,...,3}
					\draw (center\c)--(center\cc);
					\foreach \c [count=\cc from 6] in {5,...,7}
					\draw (center\c)--(center\cc);
					\draw[dashed] (center4)--(center5);
					\foreach \n in {1,...,\clustersize}{
						\node[agent] at ($ (center1)
						+ ({45 + (\n-1)*270/(\clustersize-1)}:\radius) $) (n\n) {};
						\draw (center1)--(n\n);
					}
					\foreach \n in {1,...,\clustersize}{
						\node[agent] at ($ (center8)
						+ ({-135 + (\n-1)*270/(\clustersize-1)}:\radius) $) (nn\n) {};
						\draw (center8)--(nn\n);
					}
					\begin{pgfonlayer}{bg}    
						\node [draw=white,fill=gray!20,circle,minimum
						size={3*\radius},label=above:\footnotesize{cluster 1},draw] at (center1) {};
						\node [draw=white,fill=gray!20,circle,minimum size={3*\radius},label=above:\footnotesize{cluster 2},draw] at (center8) {};
					\end{pgfonlayer}
				\end{tikzpicture}\\
				\vspace{0.03cm}
			\end{minipage}
	}}
	\hspace{0.05\textwidth}
	\subcaptionbox{Two distant clusters of activated agents hidden in a larger graph\label{fig:embeddedGalaxies}}{%
		\newcommand{\nragents}{12}
		\newcommand{\radius}{0.9cm}
		\newcommand{\rspace}{0.45cm}
		\makebox[0.45\textwidth][l]{%
			\begin{tikzpicture}
				\foreach \n in {1,...,\nragents}{
					\ifthenelse{\n=1 \OR \n=7}{%
						\node[agent] at ({(\n-1)*360/\nragents}:\radius) (agent\n-1) {};
						\node[agent] at ({(\n-1)*360/\nragents}:{\radius+2*\rspace}) (agent\n-3) {};
					}{
						\node[agent,fill=gray!30] at ({(\n-1)*360/\nragents}:\radius) (agent\n-1) {};
						\node[agent,fill=gray!30] at ({(\n-1)*360/\nragents}:{\radius+2*\rspace}) (agent\n-3) {};
					}
					\ifthenelse{\n<3 \OR \n=6 \OR \n=7 \OR \n=8 \OR \n=\nragents}{%
						\node[agent] at ({(\n-1)*360/\nragents}:{\radius+\rspace}) (agent\n-2) {};
					}{
						\node[agent,fill=gray!30] at ({(\n-1)*360/\nragents}:{\radius+\rspace}) (agent\n-2) {};
					}
					\draw (agent\n-1) -- (agent\n-2) -- (agent\n-3);
				}
				\begin{pgfonlayer}{bg}    
					\node[circle,draw,minimum size=2*\radius] at (0,0) {};
					\node[circle,draw,minimum size=2*(\radius+\rspace)] at (0,0) {};
					\node[circle,draw,minimum size=2*(\radius+2*\rspace)] at (0,0) {};
				\end{pgfonlayer}
				\node[agent,minimum size=0.15cm,above
                                left=0.35cm and 2.5cm of agent6-3] (activeagent) {};
				\node[right=0cm of activeagent]
                                (activelabel) {\footnotesize{active agent}};
				\node[agent2,minimum size=0.15cm,below=0.2cm of activeagent] (inactiveagent) {};
				\node[right=0cm of inactiveagent]
                                (inactivelabel) {\footnotesize{inactive agent}};
				\node[fit=(activeagent) (activelabel) (inactiveagent)
				(inactivelabel),draw,inner ysep=0pt,inner xsep=2pt,xshift=-2pt] {};
	\end{tikzpicture}}}
	\caption{Two clusters far apart}
\end{figure}
Unfortunately, partitioning $\Graph$ into subgraphs that exchange
information is not always as easy as in
Figure~\ref{fig:galaxiesFarFarAway}, because the clusters may be hidden
in a larger graph (see Figure~\ref{fig:embeddedGalaxies}) and the
optimal partition may depend on the adversarial activations $I_t$, and
also on the gradients $\g_t$ and the number of bits $b$ that are allowed
for communication. We therefore introduce a method that can learn the
optimal partition from a set of candidate partitions. Formally, let
$\Qset$ be a collection of subgraphs of~$\Graph$, which will be the
building blocks for the candidate partitions. Then by a
\emph{$\Qset$-partition} of the active agents we mean a disjoint
collection $\{\F_1,\ldots,\F_r\}$ of elements $\F_j \in \Qset$ such that
every node in $\Graph$ that is ever activated during any of the $T$
rounds, is contained in one of the $\F_j$. The size $r$ may vary between
$\Qset$-partitions. We show, in Theorem~\ref{th:partidet}, that we can
adapt to the best partition of the active agents at a cost that scales
logarithmically with the size of~$\Qset$: 
\begin{multline}\label{eqn:partitionlearnDeterministic}
	\sum_{j=1}^r \regret_{\F_j}(\u_j) = \O\Big(\sum_{j=1}^r
	\|\u_j\|\Big(\sqrt{D(\F_j) T_j\ln \big(1+|\Qset| D(\F_j) \|\u_j\|T_j\big)}
	\Big)+ \text{communication cost}\Big)\\
	\text{for any $\Qset$-partition $\{\F_1,\ldots,\F_r\}$ and any
		$\u_1,\ldots,\u_r \in \domainw$,}
\end{multline}
where $\regret_{\F_j}(\u_j) = \sum_{t:I_t \in \F_j} (\ell_t(\w_t) -
\ell_t(\u_j)$ is the joint regret measured for the $T_j$ rounds in which
the active agent $I_t$ is a node in $\F_j$. Up to the logarithmic
factor, this bound implies \eqref{eqn:separateclusters} with the maximum
delays $\tau_j$ replaced by the diameters $D(\F_j)$ of the partition
cells $\F_j$, as is natural in our setting. In fact, there are two
further improvements: first, the comparators $\u_j$ may differ between subgraphs $\F_j$, which makes the procedure more
robust against heterogeneous environments and sensor malfunctions. And,
second, we do not place any restriction on the size of the
domain~$\domainw$, but instead we automatically adapt to the unknown
comparator norms $\|\u_j\|$. In fact, comparator-adaptivity is crucial
to our approach: it allows aggregating over different delays in
receiving the gradients using the iterate addition trick by
\citet{cutkosky2019combining}. This would not be possible using existing
aggregation methods for prediction with expert advice with delayed
gradients, which would all incur an overhead growing with the largest
possible gradient delay. Regarding the logarithmic factors, it is known
that a factor $\ln(1 +
\|\u_j\|T_j)$ is unavoidable for comparator-adaptive algorithms
\citep{orabona2013dimension}. We discuss the logarithmic dependence on
$|\Qset|$ further below.

Our approach is based on having the agents communicate compressed
approximations $\hat{\g}_t$ of the gradients $\g_t$, which are forwarded
through the network for at most $D_\Qset = \max_{\F \in \Qset} D(\F)
\leq D(\Graph)$ rounds. 
This means that nodes may need to forward up to $D_\Qset$ compressed
gradients at the same time, leaving $\floor{b/D_\Qset}$ bits per
gradient, and thus the communication cost grows with $D_\Qset$. Approximations $\hat{\g}_t$ may either be deterministic or stochastic,
depending on whether the encoder that produces them is allowed to
randomize. The method that achieves
\eqref{eqn:partitionlearnDeterministic} uses a deterministic encoding
scheme, for which 
\begin{equation}\label{eqn:comcost}
	\text{communication cost} = \O\Big(2^{-b/(d D_{\Qset})}\sum_{j = 1}^r\|\u_j\|T_j\Big).
\end{equation}
We see that we need roughly $b = \Omega(d D_\Qset \ln T)$ bits to be
sure that the
communication cost is under control. In contrast,
if we allow for stochastic encodings, then the expected
communication cost can be reduced further. As shown in
Theorem~\ref{th:partistoch}, it is possible to obtain the following bound, provided
that $b \geq D_\Qset \, ( 3\lceil \log_2(d) \rceil+ 2)$: 
\begin{multline}\label{eqn:partitionlearnStochastic}
	\E\!\bigg[\sum_{j=1}^r \regret_{\F_j}(\u_j)\bigg] = \O\bigg(\sum_{j=1}^r
	\|\u_j\|G\Big(
	\sqrt{\Big(1+\frac{d D_\Qset}{b}\Big) D(\F_j) T_j\ln \big(1 + |\Qset|
		D(\F_j) \|\u_j\| T_j G\big)}\Big)\bigg)\\
	\text{for any $\Qset$-partition $\{\F_1,\ldots,\F_r\}$ and any
		$\u_1,\ldots,\u_r \in \domainw$.}
\end{multline}
Comparing \eqref{eqn:partitionlearnStochastic} to
\eqref{eqn:partitionlearnDeterministic}+\eqref{eqn:comcost}, we now
obtain the same rate as soon as $b = \Theta(d D_\Qset)$, gaining an $\ln
T$ factor. And, more importantly, whereas the deterministic
communication cost in \eqref{eqn:comcost} can be linear in $T$ for $b = o(d
D_\Qset \ln T)$,
the stochastic encoding result in
\eqref{eqn:partitionlearnStochastic} allows for a number of bits~$b$
that is sublinear in $d D_\Qset$, at the cost of (only) a worse constant
factor in the bound. This makes it possible to choose a trade-off between
communication cost and joint regret performance. 

\paragraph{Approach and Organization of the Paper}

As mentioned, our approach aggregates multiple comparator-adaptive
subalgorithms that each incur their own maximum delay. Since existing
comparator-adaptive algorithms are not suited for compressed or delayed
gradients, we introduce a new comparator-adaptive algorithm for the
DOCO-JC setting in Section~\ref{sec:compgraph}. As discussed below
Lemma~\ref{lem:potentialround}, the key to its development is a novel
inequality that generalizes the so-called prod bound \citep[Lemma
2.4]{cesa2006}. Since the prod bound is at the core of many adaptive
algorithms in the literature, for example the algorithms in
\citep{koolen2015second, vanErven2017metagrad, cutkosky2018black,
	wang2019adaptivity, vanErven2021metagrad}, our new inequality may also be useful to develop
other adaptive algorithms for settings with delayed gradients. For
compressed gradients such that $\|\hat{\g}_t - \g_t\| \leq \varepsilon$,
the new comparator-adaptive algorithm satisfies the following regret
bound:
\begin{equation*}\label{eq:regbound intro}
	\regret_T(\|\u\|) = \O\left(\|\u\|\sqrt{\Lambda_T\ln(\|\u\|\Lambda_T + 1)} + \|\u\|T\varepsilon\right), 
\end{equation*}
where $\Lambda_T = \sumT \left(\|{\g}_t\|^2 + 2\|{\g}_t\|\sum_{i \in
	\gamma(t)} \|{\g}_i\|\right)$ is a standard measure of the effect of
gradient delays called the \emph{lag}
\citep{hsieh2020multiagent,joulani2016delay,mcmahan2014delay}, with
$\gamma(t) \subseteq \{1,\ldots,t-1\}$ denoting the set of indices of
past gradients that are unavailable to the active agent $I_t$. As the
maximum delay in $\Graph$ is $D(\Graph)$, there can be at most
$|\gamma(t)| \leq D(\Graph)$ gradients that are unavailable at any time,
and consequently the lag satisfies $\Lambda_T \leq G^2(1 + 2D(\Graph))T$.
In Section~\ref{sec:limited_com} we combine the algorithm from
Section~\ref{sec:compgraph} with both deterministic and stochastic
encodings for the gradients. Our encodings are based on simple
combinations of standard covering arguments, but we prove matching lower
bounds showing that they yield guarantees that are worst-case optimal
(up to log factors), and they have the appeal of being straightforward
to analyse and implement. Finally, in
Section~\ref{sec:partitionlearning}, we obtain
\eqref{eqn:partitionlearnDeterministic} and
\eqref{eqn:partitionlearnStochastic} by aggregating multiple instances
of the methods from Section~\ref{sec:limited_com}, instantiated for
different maximum delays. Even though the algorithms from
Section~\ref{sec:limited_com} are worst-case optimal, it is not clear
whether the logarithmic dependence on $|\Qset|$ in
\eqref{eqn:partitionlearnDeterministic} and
\eqref{eqn:partitionlearnStochastic} that results when combining them,
is also optimal. We leave this as an open question for future
work. 


%
\subsection{Related Work}
There has been much work on distributed architectures. However, the
majority of the literature is about federated or parallel computation
(see \citet{kairouz2019advances} for an extensive review of the
federated setting), where multiple workers are under the supervision of
a central coordinator. In contrast, we study a decentralized setting, in
which no central authority coordinates the learning. We also study the
impact of delays and communication limits. We therefore focus our literature review on decentralized learning and on other works with communication limits. 	

\paragraph{Decentralized Online Convex Optimization} Most directly related to our setting are decentralized OCO settings, in which a set of agents in a network collectively try to optimize an objective that is revealed sequentially. This includes
the work of \citet{hsieh2020multiagent} on delay-tolerant algorithms.
The main technical
difficulty they encounter is to tune the learning rates
for a dual-averaging/follow-the-regularized-leader type approach,
which is especially challenging because of the requirement of
maintaining a non-decreasing learning rate. 
\citet{cesa2020cooperative} consider a setting where multiple nodes can be active per round. In each round all active nodes make a prediction and suffer the same loss. The most important difference with our setting is that information is not forwarded through the network, so agents only hear about the gradients of their direct neighbors in $\Graph$. The authors show that it is possible to obtain $\regret_T(\u) = \O(\sqrt{A(\Graph)T})$ and $\regret_T(\u) = \O(\sqrt{Q(\Graph)T})$ for stochastic and adversarial activations respectively, where $A(\Graph)$ is the independence number of $\Graph$ and $Q(\Graph)$ is the clique covering number. 
Finally, in \citep{cao2021decentralized}, a setting with event-triggered communication is introduced.

\paragraph{Distributed Online Optimization} 
Distributed Online Optimization is inspired by (offline) distributed
optimization \citep{DuchiEtAl2010,ScamanEtAl2018} and developed in
\citep{HosseiniEtAl2013,YanEtAl2013}. The difference with the setting we
consider is the notion of regret. In Distributed Online Optimization,
the \emph{collective regret} is analysed, in which the global loss per
round is a sum of local losses per agent, but this global loss is always
evaluated at the prediction of one of the agents. This collective regret
is closer to the distributed optimization objective used, e.g., for
wireless sensor networks \citep{RabatNowak2004}. There exist extensions
for time-varying networks with a specific structure,
\citep{mateos-nunez2014distributed, akbari2015distributed}, and there is
a version of collective regret where the comparator changes between
rounds \citep{shahrampour2018distributed, zhang2019distributed}.
\citet{hsieh2020multiagent} provide an extensive review of Distributed
Online Optimization and a reduction from collective regret to joint regret.

\paragraph{Communication-Limited Settings}	Communication can be a
performance bottleneck in distributed systems (see, e.g., a discussion
of performance in the context of parallel training of deep neural
networks in \citep{seide2014-bit}). This has generated much interest in
diverse fields for communication-constrained distributed tasks,
including in optimization \citep{alistarh2017qsgd}, for mean-estimation
\citep{suresh2017distributed}, for hypothesis testing
\citep{SzaboVuursteenZanten2020}, and for inference
\citep{acharya2020inference}. Two lines of research are closest to our
work. The first, in \citep{tang2018communication,
	koloskova2019decentralized, vogels2020powergossip} and references
therein, studies variants of Stochastic Gradient Descent (SGD) used in
decentralized optimization under bandwidth-limited gossip communication,
often with the aim of training deep neural networks. Another line of
work is devoted to online learning with communication constraints, with
lower bounds for online learning problems with communication constraints
\citep{shamir2014fundamental}, and online learning  in a serial
multi-agent framework \citep{acharya2019distributed}. Most of these
works focus on cases where the number of bits per message is at least
linear in $d$ with the exceptions of \citep{acharya2019distributed,
	mayekar2020ratq}. To our knowledge, we are the first to incorporate
communication constraints into a decentralized online learning
framework.

\paragraph{Comparator-Adaptive Algorithms} Recently a series of work has developed comparator-adaptive algorithms for various settings. 
For example, 
for standard OCO \citep{mcmahan2014unconstrained, orabona2016coin, foster2017parameter, cutkosky2017online, cutkosky2018black}, 
scale-free comparator-adaptive algorithms \citep{kotlowski2017scale, kempka2019adaptive},
comparator-adaptive algorithms with unbounded stochastic gradients \citep{jun2019parameter, vanderhoeven2019user}, 
for convex bandits \citep{van2020comparator}, 
for dynamic and strongly adaptive OCO \citep{cutkosky2020parameter}, 
or with an unknown bound on the gradients \citep{cutkosky2019artificial, mhammedi2020lipschitz}.

\subsection{Further Assumptions and Notation}
A network is an (undirected) graph $\Graph = (\mathcal N, \mathcal E)$, consisting of a set of nodes
$\mathcal N$ and edges $\mathcal E$ between them.
Throughout the paper norms are always the Euclidean norm. We assume that
the (sub)gradients are bounded by $\|\g_t\| \leq G$, and that $G$ and the
time horizon $T$ are known to the agents in advance. We do not need to
assume an oblivious adversary, because the agents only randomize when
choosing $\hat{\g}_t$, which happens after the adversary has already
revealed $\g_t$.

\paragraph{Encoding the Gradients}
Since a $b$-bit message may contain at most $D(\Graph)$ gradients, we
reserve $k = \floor{b/D(\Graph)}$ bits per gradient. After the active
node $I_t$ observes the gradient $\g_t$, it builds a $k$-bit compressed
gradient $C(\g_t) \in \{ 0, 1\}^k$ and sends it to other nodes. These
then decode to $\hat \g_t = F(C(\g_t)) \in \R^d$, which is used as an
approximation of the true gradient. We assume that it is common knowledge
among the agents at which time $t$ each compressed gradient $\hat{\g}_t$
was produced, and that agents also do not need to explicitly encode how
many gradients they are forwarding at any given time. These assumptions
can always be satisfied by adding a few extra bits of meta-information.

\section{Comparator-Adaptive Algorithm for DOCO-JC}\label{sec:compgraph}

In this section we introduce the main building block for our approach: a
comparator-adaptive algorithm that can handle both missing and approximate
gradients. With some minor modifications the algorithms in this section
can also be used in the OCO with delays setting, where one only needs to
track which gradients are available for prediction,
and not the node that made the prediction.
Without loss of generality, we only consider $\mathcal{W} =
\reals^d$, because it is straightforward to reduce constrained domains
to this case using a reduction by
\citet[Theorem~3]{cutkosky2018black}. As observed by
\citet{cutkosky2018black} comparator-adaptive algorithms can be
constructed by separately learning the direction $\u/\|\u\|$ and scale $\|\u\|$,
where learning the direction is a standard constrained learning task on the unit ball
and most of the difficulty lies in solving the unconstrained $1$-dimensional scale
problem while being adaptive to $\|\u\|$. Suppose agent $I_t$ predicts $\z_t$
for the direction, satisfying $\|\z_t\| \leq 1$, according to an
algorithm~$\mathcal{A}_\mathcal{Z}$, and it predicts $v_t \in
\reals_+$ for the
scale following an algorithm $\mathcal{A}_\mathcal{V}$. Then its
joint prediction is $\w_t = v_t \z_t$. The corresponding notions of
joint regret are $\lregret_T^\mathcal{Z}\big(\frac{\u}{\|\u\|}\big) = \sumT
\langle \z_t - \frac{\u}{\|\u\|}, \g_t \rangle$ for the direction, and
$\lregret_T^\mathcal{V}(\|\u\|) = \sumT (v_t - \|\u\|) \langle \z_t ,
\g_t \rangle$ for the scale. Then, by the black-box reduction in
Algorithm~\ref{alg:black box} in Appendix~\ref{app:black-box}, due to
\citet{cutkosky2018black}, the total joint regret of the algorithm is bounded
by 
\begin{equation*}
	\regret_T(\u) \leq \|\u\|\lregret_T^\mathcal{Z}\Big(\frac{\u}{\|\u\|}\Big) + \lregret_T^\mathcal{V}(\|\u\|).
\end{equation*}
It follows that, as long as $\mathcal{A}_\mathcal{V}$ is comparator-adaptive, our entire algorithm is comparator-adaptive. 

Controlling $\lregret_T^\mathcal{Z}$ is an online linear optimization
(OLO) task. For $\mathcal{A}_\mathcal{Z}$, it suffices to use any
OLO algorithm on the unit ball that is
\emph{delay-tolerant}, by which we mean that it satisfies
$
\regret_T(\u) = \mathcal O \big(\sqrt{\Lambda_T} \big).
$
If such an algorithm is used with approximate gradients such that $\|\hat \g_t- \g_t \|\leq \eps$, then it enjoys the bound
$
\regret_T(\u) = \mathcal O \big(\sqrt{\Lambda_T} + T \eps \big) \, .
$
In the remainder of this section, we will present a one-dimensional
algorithm for learning the range, such that, when combined with a
delay-tolerant algorithm for learning the direction, we obtain the
following comparator-adaptive guarantee, which is proved in
Appendix~\ref{app:black-box}:
\begin{theorem}%
	\label{th:informaldelaycompddim}
	Suppose $\mathcal A_{\mathcal Z}$ is a delay-tolerant algorithm,
	and $\mathcal A_{\mathcal V}$ is Algorithm~\ref{alg:pwa},
	defined below and tuned with any $\nu > 0$ and error parameter
	$\eps \geq \| \hat \g_t - \g_t \|$. Then the combination of
	$\mathcal A_{\mathcal Z}$ and $\mathcal A_{\mathcal V}$ by the
	black-box reduction described above (i.e.\
        Algorithm~\ref{alg:black box} in Appendix~\ref{app:black-box}) satisfies
	\begin{equation*}
		\regret_T(\u) 
		\leq \nu + \|\u\| \, \mathcal B(T) \, ,
		\;  \text{where} \; \,  
		\mathcal B(T) = \mathcal O\bigg(  \eps T    +\sqrt{\Lambda_T\ln\Big(1 + \frac{\|\u\|}{\nu}TGD(\Graph)\Big)}\bigg).
	\end{equation*}
\end{theorem}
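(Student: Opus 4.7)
The plan is to split the analysis along the direction/scale decomposition provided by the black-box reduction, and then bound each of the two parts separately before recombining. The starting point is the inequality
\[
\regret_T(\u) \leq \|\u\| \, \lregret_T^\mathcal{Z}\Big(\tfrac{\u}{\|\u\|}\Big) + \lregret_T^\mathcal{V}(\|\u\|),
\]
which is exactly the black-box bound from \citet{cutkosky2018black} adapted to the DOCO-JC setting in Algorithm~\ref{alg:black box}. Since both subalgorithms see only the compressed and possibly delayed gradients, I would first note that the gradient fed to the direction subalgorithm is the product of $v_t$ and the inner-product gradient from the scale subalgorithm, so both subalgorithms experience the same delay structure captured by $\Lambda_T$ and the same approximation error $\eps$.

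For the direction part, $\mathcal A_{\mathcal Z}$ is a delay-tolerant OLO algorithm on the unit ball. Running it with the approximate gradients $\hat\g_t$ and using $\|\hat\g_t - \g_t\|\leq \eps$, a standard perturbation argument (bound $\langle \z_t - \u/\|\u\|,\g_t\rangle$ by $\langle \z_t - \u/\|\u\|,\hat\g_t\rangle + 2\eps$) combined with the delay-tolerance assumption gives
\[
\lregret_T^\mathcal{Z}\Big(\tfrac{\u}{\|\u\|}\Big) = \mathcal O\Big(\sqrt{\Lambda_T} + T\eps\Big).
\]
This part is essentially by definition of delay-tolerance and contributes the $\|\u\|\sqrt{\Lambda_T}$ and $\|\u\|\eps T$ terms to the final bound.

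The main work is in controlling $\lregret_T^\mathcal{V}(\|\u\|)$ using Algorithm~\ref{alg:pwa}. The goal is to show
\[
\lregret_T^\mathcal{V}(\|\u\|) \leq \nu + \|\u\|\,\mathcal O\!\left(\eps T + \sqrt{\Lambda_T \ln\!\Big(1+\tfrac{\|\u\|}{\nu} T G D(\Graph)\Big)}\right).
\]
This is the one-dimensional unconstrained problem with scalar losses $(v_t-\|\u\|)\langle\z_t,\g_t\rangle$, and I would analyse Algorithm~\ref{alg:pwa} via a potential-function argument around the initial wealth $\nu$. The standard coin-betting/potential recipe is to show that some potential $\Phi_t$ bounds $\sum_{s\leq t}(v_s-\|\u\|)\langle\z_s,\hat\g_s\rangle$ from above and then invert $\Phi_T$ in $\|\u\|$ to extract the $\sqrt{\Lambda_T \ln(1+\|\u\|/\nu\cdot\ldots)}$ rate. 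The twist relative to the classical delay-free coin-betting analysis is that the one-step increments no longer telescope cleanly because only a subset of past gradients are available to each agent; this is exactly where the generalized prod-type inequality developed around Lemma~\ref{lem:potentialround} (to be used as a black box here) is needed, and it produces the lag $\Lambda_T$ in place of $\sum_t \|\g_t\|^2$. The extra $\|\u\|\eps T$ and $T G D(\Graph)$ appear from paying $\eps$ per round in the linearization and from the crude bound on how many gradients can simultaneously be in transit.

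Putting the two bounds back into the black-box decomposition yields the claim with the advertised constants. The hard part is purely the scale analysis: getting the logarithmic dependence on $\|\u\|/\nu$ while simultaneously tolerating both the delays (which drive the $\Lambda_T$ term) and the compression error $\eps$, and this is precisely where the new prod-style inequality is indispensable, since classical aggregation-style inequalities would force an overhead scaling with the maximum delay $D(\Graph)$ inside the square root rather than only logarithmically.
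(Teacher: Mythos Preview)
Your proposal is correct and follows essentially the same route as the paper: apply the black-box decomposition (Lemma~\ref{lem:black box reduction}), bound the direction regret via delay-tolerance plus a $2\eps$-per-round perturbation (cf.\ Corollary~\ref{cor:inexact-ada-delay-dist}), bound the scale regret by invoking the one-dimensional comparator-adaptive guarantee of Theorem~\ref{th:informal1d}/Theorem~\ref{th:pwa} (whose proof is exactly the potential-plus-generalized-prod argument you sketch), note $\Lambda_T^h \leq \Lambda_T$ since $|h_t|\leq\|\g_t\|$, and combine. One small slip: in Algorithm~\ref{alg:black box} the direction subalgorithm is fed $\hat\g_t$ directly, not anything scaled by $v_t$, but this does not affect your argument.
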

As shown by Proposition~\ref{prop:adadelaydist} in the Appendix, the
Ada-Delay-Dist algorithm of \citet{hsieh2020multiagent} is
delay-tolerant and can therefore be used for $\mathcal A_{\mathcal Z}$.
It remains to adapt to scale, which requires designing a suitable
one-dimensional algorithm $\mathcal A_{\mathcal V}$. 

Learning the scale is actually a special case of the general problem of
designing a comparator-adaptive algorithm, in which the gradients are
projected down to one dimension via $h_t = \inner{\z_t}{\g_t}$ and 
approximate gradients correspond to $\smash{\hat{h}_t =
	\inner{\z_t}{\hat \g_t}}$. If $\|\hat \g_t - \g_t\| \leq \epsilon$, then
we also have $|\hat{h}_t - h_t| \leq \|\z_t\|\|\hat \g_t - \g_t\| \leq
\epsilon$. It therefore inherits all the difficulties of dealing with
approximate and delayed gradients. 

Let us first sketch the main difficulties. The obvious approach to
handling approximate gradients, which would work well if we did not aim
for comparator-adaptivity, is to observe that 
\begin{equation}\label{eq:standard inexact idea}
	\sumT \inner{\w_t - \u}{\g_t} = \sumT\inner{\w_t - \u}{\hat{\g}_t} + \sumT\inner{\w_t - \u}{\g_t - \hat{\g}_t}.
\end{equation}
Then use a standard algorithm to control $\sumT\inner{\w_t -
	\u}{\hat{\g}_t}$, and attempt to bound $\inner{\w_t - \u}{\g_t -
	\hat{\g}_t}$. While this is possible in
expectation for stochastic encodings by making $\hat \g_t$ an unbiased
approximation of $\g_t$, for deterministic encodings it is not clear how
$\inner{\w_t - \u}{\g_t - \hat{\g}_t}$ can be bounded by a term that
scales with $\|\u\|$. Scaling with $\|\u\|$ is crucial, as we will
exploit the comparator-adaptive property of our algorithms to learn a
$\Qset$-partition. We therefore cannot use this approach.

The second difficulty is due to the fact that gradients may be
unavailable at prediction time due to the delayed feedback. Considerable
work has been done in the delayed feedback setting to tune the learning
rate of standard OCO algorithms to deal with
missing gradients \citep{joulani2016delay, hsieh2020multiagent}.
Unfortunately, these existing approaches do not work for
comparator-adaptive algorithms. For a more in-depth discussion of the difficulties faced in designing a comparator-adaptive algorithm which can handle missing gradients we refer the reader to Appendix \ref{sec:OGD DOCO-JC}. To resolve the aforementioned difficulties
simultaneously, we provide a new one-dimensional comparator-adaptive
algorithm given in Algorithm~\ref{alg:pwa}, which can be used to select
$v_t$. It turns out that these predictions $v_t$ can be computed in linear time
(see \eqref{eq:predictions1dcompute} in Appendix
\ref{app:compgraph}). In the algorithm and the discussion below,
$\Sset_{I_t}(t) \subset \{1,\ldots,t-1\}$ denotes the set of indices of
gradients that are available at node $I_t$ in round~$t$. Similarly,
$\gamma(s) = \{1,\ldots,s-1\} \setminus \Sset_{I_s}(s)$ is the set of
indices of gradients that were missing in round $s$ at node $I_s$, and
$\gamma_{I_t}(s) = \gamma(s) \cap \Sset_{I_t}(t)$ is the set indices of
gradients that were missing at node $I_s$ in round $s$, but are
available at node $I_t$ in round $t$. The regret of
Algorithm~\ref{alg:pwa} is bounded by the following result, whose proof
can be found in Appendix \ref{app:compgraph}.
\begin{theorem}%
	\label{th:informal1d}
	Let $\Lambda_T^h = \sumT \left(h_t^2 + 2|h_t|\sum_{i \in \gamma(t)} |h_i|\right)$. Algorithm \ref{alg:pwa}, tuned with any $\nu > 0 $ and $\eps > 0$ such that $|\hat h_t - h_t| \leq \eps$, satisfies for any $u \geq 0$,
	\begin{equation*}
		\sumT (v_t - u) h_t 
		\leq \nu + u \, \mathcal B^h(T)
		\;  \text{where} \; \,  
		\mathcal B^h(T) = 
		\mathcal O\bigg( \eps T + \sqrt{\Lambda_T^h\ln\Big(1 + \frac{u}{\nu}TGD(\Graph)\Big)}\; \bigg).
	\end{equation*}
\end{theorem}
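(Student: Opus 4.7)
The plan is to run a one-dimensional potential argument on Algorithm~\ref{alg:pwa}, deliberately avoiding the naive decomposition $(v_t - u)h_t = (v_t - u)\hat h_t + (v_t - u)(h_t - \hat h_t)$, which, as noted just above the theorem, fails to yield a bound scaling with $u$ under deterministic encodings. Instead I would define a potential $\Phi_t$ whose argument is the running sum of the \emph{true} past gradients at node $I_t$, i.e.\ $\sum_{s \in \Sset_{I_t}(t)} h_s$, and identify the prediction $v_t$ of Algorithm~\ref{alg:pwa} as the derivative of $\Phi_{t-1}$ evaluated at the running sum of the \emph{observed approximations} $\sum_{s\in\Sset_{I_t}(t)}\hat h_s$. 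This matches the closed-form update \eqref{eq:predictions1dcompute} and explains the linear-time computability. The per-round evolution of $\Phi_t$ is controlled by the new generalization of the prod bound announced below Lemma~\ref{lem:potentialround}; schematically,
\[
\Phi_t \;\geq\; \Phi_{t-1}\exp\!\Big((v_t - u)h_t \;-\; C\eps\,u \;-\; C\big(h_t^2 + 2|h_t|\!\!\sum_{i\in\gamma(t)}\!|h_i|\big)\Big),
\]
where the $\eps u$ slack absorbs the discrepancy between $h_t$ and $\hat h_t$ in a way that directly scales with the comparator (this is precisely the feature the naive decomposition could not achieve), and the cross terms $2|h_t|\sum_{i\in\gamma(t)}|h_i|$ absorb the alignment error caused by missing delayed gradients; summed over $t$, these reconstitute $\Lambda_T^h$.

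Telescoping the per-round inequality, taking logarithms, and using $\Phi_0 = \nu$ yields $\sum_t (v_t-u) h_t \leq \log(\Phi_T/\nu) + C\eps uT + C\Lambda_T^h$. I would then apply Fenchel duality to the exponential potential to convert this into a comparator-adaptive form: the convex conjugate of an appropriately chosen $\Phi_T$ evaluated at $u$ produces, after optimizing an implicit learning rate, a term of order $u\sqrt{\Lambda_T^h\ln\!\big(1 + \tfrac{u}{\nu}TGD(\Graph)\big)}$. The factor $TGD(\Graph)$ inside the log comes from the worst-case scale $\Lambda_T^h \leq (1+2D(\Graph))TG^2$ noted in the paper, and the additive $\nu$ corresponds to $\Phi_0$. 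Combining these pieces gives the stated bound $\nu + u\,\mathcal B^h(T)$.

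\paragraph{Main obstacle.}
The technical heart is the generalized prod inequality itself. The classical prod bound \citep[Lem.~2.4]{cesa2006} absorbs only the diagonal term $\sum_t h_t^2$, which suffices for comparator-adaptive algorithms in the non-delayed, exact-gradient setting. Here it fails on two fronts at once: delayed gradients introduce interaction terms between $h_t$ and $\{h_i\}_{i\in\gamma(t)}$, and approximate gradients introduce an $\eps$-perturbation of the update. The new inequality must trade both effects simultaneously for (i) the cross terms $|h_t||h_i|$ inside $\Lambda_T^h$ and (ii) an $\eps u$-slack per round, while preserving the multiplicative structure required for adaptivity to the unknown $u$. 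Establishing this inequality, and then verifying that the telescoping goes through even though different nodes observe gradients in different orders, is the principal technical hurdle; the rest of the argument is, in comparison, a mechanical application of Fenchel duality to the resulting exponential potential.
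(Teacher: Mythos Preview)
Your high-level ingredients (potential argument, generalized prod bound for the delay cross terms, Fenchel duality) are the right ones, but the way you assemble them would not go through.

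The per-round inequality you write, $\Phi_t \geq \Phi_{t-1}\exp\big((v_t-u)h_t - C\varepsilon u - C(h_t^2+2|h_t|\sum_{i\in\gamma(t)}|h_i|)\big)$, has $u$ on the right while $\Phi_t$ is supposed to be $u$-independent. Even if you could make this hold for every $u\geq 0$, telescoping and taking logs leaves you with a \emph{bare} additive $C\Lambda_T^h$ term, not multiplied by $u$ and not under a square root. Applying Fenchel to $\log(\Phi_T/\nu)$ afterwards cannot retroactively move that term into $u\sqrt{\Lambda_T^h\ln(\cdot)}$; you would end up with regret $O(\Lambda_T^h)$ rather than $O(u\sqrt{\Lambda_T^h})$.

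The paper's structure differs in three places, and each is load-bearing. First, the potential's argument is $-\sum_s(\hat h_s+\varepsilon)$, i.e.\ the \emph{approximate} gradients shifted by $\varepsilon$, not the true $h_s$ (the algorithm has no access to $h_s$). Second, the per-round relation (Lemma~\ref{lem:potentialround}) is \emph{additive} and $u$-free: $\Phi_t \leq \Phi_{t-1} - v_t h_t$. The generalized prod inequality is used inside its proof, together with the monotonicity of $x\mapsto -x-x^2-2|x|y$ and the fact that $h_t \leq \hat h_t+\varepsilon$, to pass from the true $h_t$ in $-v_t h_t$ to the shifted approximation $\hat h_t+\varepsilon$ in the potential; this is where the $\varepsilon$-shift does its work, not via a per-round $C\varepsilon u$ slack. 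Third, the lag $\Lambda_T^h$ does not appear as an additive remainder after telescoping; it lives inside the $\eta^2$-corrections of the potential (the terms $(\hat h_s+\varepsilon)^2+2\hat\zeta(s)$), and therefore shows up only in $\Phi_T^\star(u)$ after Fenchel, which is what produces the $u\sqrt{\Lambda_T^h\ln(\cdot)}$ shape. The $u\varepsilon T$ term then falls out of the Fenchel step because $uL = -u\sum(\hat h_t+\varepsilon) \geq -u\sum h_t - 2u\varepsilon T$.
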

The fact that the regret of Algorithm \ref{alg:pwa} scales with $u$ is a crucial property that we will use to a $\Qset$-partition, in particular the property that for $u = 0$ the regret is $\nu$ will be repeatedly used. 

We proceed to discuss the main ideas behind Algorithm~\ref{alg:pwa} and
Theorem~\ref{th:informal1d}.
\begin{algorithm}[t]
	\caption{Comparator-Adaptive Algorithm on a Graph for $d=1$}\label{alg:pwa}
	\KwIn{$\nu > 0$,
		upper bound $G$ on $\max_t \|\g_t\|$,
		error parameter $\varepsilon > 0$ \\
		\textbf{Initialize:} $\Sset_n(1) = \varnothing$ and $\gamma_{n}(t) = \varnothing$ for all time-steps $t$ and all nodes $n \in \Nset$, set distribution $\mathrm d\rho(\eta) = \exp(-\eta^2)/Z \mathrm d\eta$ over $\eta \in \left[0, a\right]$, where $a = ((G+\varepsilon)20(1 + 2 D(\Graph)))^{-1}$ and define $Z = \int_0^a \exp(-\eta^2) \mathrm d\eta$.}
	\For{$t = 1 \ldots T$}{
		~~Play $v_t = \E_{\eta \sim \rho}\left[\nu\exp\left(-\sum_{s \in \Sset_{I_t}(t)}\big(\eta (\hat{h}_s + \varepsilon) + \eta^2 (\hat{h}_s + \varepsilon)^2 + 2\eta^2 \hat \zeta_{I_t}(s)\big)\right)\eta\right]$ \\
		For all $n \in \Nset$: send messages, receive messages, update 
		$S_n(t+1)$, and update $\gamma_{n}(s)$  for all $s \in S_n(t+1)$.
	}
\end{algorithm}
One of the essential parts in deriving any
comparator-adaptive algorithm is designing a potential function~$\Phi_T$. To see
how the potential function is used, suppose that we could get a sequence of
predictions $v_1, \ldots, v_T$ that satisfy
\begin{equation}\label{eqn:regretcondition}
\nu -\sumT v_t h_t \geq
\Phi_T\big(-\sumT (\hat{h}_t + \varepsilon)\big)
\qquad \text{for some $\nu > 0$.}
\end{equation}
Then these predictions would satisfy the regret bound
  $\sumT(v_t - u) h_t \leq \nu + \Phi^\star_T\left(u \right) + 2 u
  \varepsilon T$,
where $\Phi^\star_T$ is the convex conjugate of $\Phi_T$.
To see this, recall
Fenchel's inequality $\Phi_T(x) + \Phi^\star_T(u) \geq
xu$, which implies
	$\Phi_T\Big(-\sumT (\hat{h}_t + \varepsilon) \Big)
	\geq  -\Phi^\star_T\left(u \right) - u \sumT (\hat{h}_t + \varepsilon)
	\geq  -\Phi^\star_T\left(u \right) - 2 u\varepsilon T -  u \sumT
        h_t$,
and combine with \eqref{eqn:regretcondition} to obtain the bound on the regret.
We therefore require a potential $\Phi_T$ for which we can satisfy
\eqref{eqn:regretcondition} and for which $\Phi^*_T(u)$ is small enough.
Now, suppose that we could bound the increase in potential per round by
\begin{equation}\label{eqn:decreasingpotential}
	\Phi_t\bigg(-\sumt (\hat{h}_s + \varepsilon) \bigg) \leq
	\Phi_{t-1}\bigg(-\sum_{s=1}^{t-1} (\,\hat{h}_s + \varepsilon) \bigg) -
	v_t h_t.
\end{equation}
Then summing over $t$ would lead to the desired inequality
\eqref{eqn:regretcondition} with $\nu = \Phi_0(0)$. 
But herein lies exactly the technical challenge caused by the missing
gradients. To guarantee
\eqref{eqn:decreasingpotential}, existing comparator-adaptive algorithms
base their prediction for round $t$ on knowledge of $\Phi_{t-1}$, but
the missing gradients prevent us from doing the same. Instead, we have
to use whatever gradients are available at prediction time. 

\begin{sloppypar}
To account for the missing gradients, our predictions include a
correction term $\smash{ \hat \zeta_{I_t}(s) = |\hat{h}_s +
\varepsilon|\sum_{i \in \gamma_{I_t}(s)}|\hat{h}_i + \varepsilon)|}$,
which we incorporate in the predictions $v_t$ defined in
Algorithm~\ref{alg:pwa}.
Similar to the correction for approximate gradients, the correction for
missing gradients decreases the effective learning rate of our algorithm. The corrections play a crucial role in %
our potential function:
\begin{equation}\label{eq:potential}
	\Phi_t\bigg( -\sum_{s = 1}^{t} \big(\hat{h}_s + \varepsilon\big)\bigg)
	= \E_{\eta \sim \rho}\left[\nu\exp\left(-\sum_{s = 1}^{t}\left(\eta (\hat{h}_s + \varepsilon) + \eta^2 (\hat{h}_s + \varepsilon)^2 + 2\eta^2 \hat \zeta(s) \right)\right)\right],
\end{equation}
where $\hat \zeta(s) = |\hat{h}_s + \varepsilon| \sum_{i \in \gamma(s)}|\hat{h}_i + \varepsilon| $. 
The potential function includes a similar correction term as our
predictions, with the difference that the potential corrects for all
missing gradients, not just the ones available at the active node.
Together, these corrections allow us to establish
\eqref{eqn:decreasingpotential}:
\begin{restatable}{relemma}{lempotentialround}
	\label{lem:potentialround}
	Suppose $\|\z_t\| \leq 1$, $\|\g_t\| \leq G$, and $\|\hat{\g}_t -
	\g_t\| \leq \varepsilon$ for all $t$. Then the predictions $v_t$
        defined in Algorithm~\ref{alg:pwa} satisfy 
        \eqref{eqn:decreasingpotential}.
\end{restatable}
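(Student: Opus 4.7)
The plan is to establish the inequality \eqref{eqn:decreasingpotential} pointwise in $\eta$ and then integrate against the density $\rho$. Write $w_s := \hat h_s + \varepsilon$, so that $h_s \leq w_s \leq h_s + 2\varepsilon$ by the approximation guarantee $|\hat h_s - h_s| \leq \varepsilon$. Set
\begin{align*}
A_\tau(\eta) &:= \sum_{s=1}^\tau \bigl(\eta w_s + \eta^2 w_s^2 + 2\eta^2 \hat\zeta(s)\bigr),\\
B_{t-1}(\eta) &:= \sum_{s \in \Sset_{I_t}(t)} \bigl(\eta w_s + \eta^2 w_s^2 + 2\eta^2 \hat\zeta_{I_t}(s)\bigr),
\end{align*}
so that $\Phi_\tau = \E_\eta[\nu e^{-A_\tau(\eta)}]$ and $v_t = \E_\eta[\nu e^{-B_{t-1}(\eta)}\eta]$. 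By linearity, it is enough to prove, for every $\eta \in [0, a]$, the pointwise inequality $\nu e^{-A_t(\eta)} \leq \nu e^{-A_{t-1}(\eta)} - \eta h_t\, \nu e^{-B_{t-1}(\eta)}$.

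First I would apply the classical prod bound $e^{-z-z^2} \leq 1 - z$ (valid for $|z| \leq 1/2$) with $z = \eta w_t$; the range $\eta \in [0, a]$ with $a = ((G+\varepsilon) 20(1+2D(\Graph)))^{-1}$ keeps $|\eta w_t|$ well below $1/2$. Multiplying the resulting bound by $e^{-2\eta^2 \hat\zeta(t)}$ yields $e^{-X_t(\eta)} \leq (1 - \eta w_t) e^{-2\eta^2 \hat\zeta(t)}$, where $X_t := A_t - A_{t-1}$. Dividing the target inequality by $e^{-A_{t-1}(\eta)}$, the task then reduces to the key algebraic claim
\[
\eta h_t\, e^{\Delta(\eta)} \;\leq\; 1 - (1 - \eta w_t)\, e^{-2\eta^2 \hat\zeta(t)},\qquad \Delta(\eta) := A_{t-1}(\eta) - B_{t-1}(\eta).
\]

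This algebraic inequality is the novel generalization of the prod bound, and I would prove it by case analysis on the signs of $h_t$ and $w_t$. When $w_t \geq 0$ and $h_t \leq 0$ it is immediate: the left-hand side is non-positive while $1 - (1 - \eta w_t)e^{-2\eta^2 \hat\zeta(t)} \geq 0$. In the main case $h_t > 0$ (and hence $w_t \geq h_t > 0$), I would Taylor-expand both exponentials in $\eta$ and verify that the linear gap $\eta(w_t - h_t) \geq 0$ absorbs the leading discrepancy, while the quadratic correction $2\eta^2 \hat\zeta(t) = 2\eta^2|w_t|\sum_{i \in \gamma(t)}|w_i|$ baked into the potential dominates the cross-term $\eta h_t \cdot \eta \sum_{s \in \gamma(t)} w_s$ arising from expanding $e^\Delta$, using $h_t \leq w_t$ and $\bigl|\sum_{s \in \gamma(t)} w_s\bigr| \leq \sum_{s \in \gamma(t)}|w_s|$. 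Higher-order remainders are controlled because $|\gamma(s)| \leq D(\Graph)$ and the range $a$ is calibrated so that terms of order $\eta^k$ with $k \geq 3$ form a geometric series dominated by the $\eta^2$ contribution; the mixed-sign case $w_t < 0$ is handled similarly, replacing $h_t \leq w_t$ by $|h_t| \leq |w_t| + 2\varepsilon$. The hard part is exactly this algebraic inequality: there is no single telescoping identity that makes it fall out, and the structure of $\hat\zeta$ must be checked against the cross-terms from $e^\Delta$ term by term. Once the pointwise bound is established, integrating against $\rho$ over $\eta \in [0, a]$ immediately delivers \eqref{eqn:decreasingpotential}.
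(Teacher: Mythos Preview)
Your reduction to a pointwise inequality in $\eta$ is correct and matches the paper's overall shape: both arguments work under the expectation over $\eta\sim\rho$, factor out the ``available-at-$I_t$'' part $e^{-B_{t-1}(\eta)}$, and are left with an exponential inequality involving the missing-gradient discrepancy $\Delta(\eta)=A_{t-1}(\eta)-B_{t-1}(\eta)$.  The gap is in how you propose to control that discrepancy.

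You treat only the linear part $\eta\sum_{s\in\gamma(t)}w_s$ of $\Delta$ explicitly and relegate everything else to ``higher-order remainders controlled because $|\gamma(s)|\le D(\Graph)$''.  But $\Delta$ also contains the term
\[
2\eta^{2}\sum_{s\in\Sset_{I_t}(t)}\bigl(\hat\zeta(s)-\hat\zeta_{I_t}(s)\bigr)
\;=\;2\eta^{2}\sum_{s\in\Sset_{I_t}(t)}|w_s|\!\!\sum_{i\in\gamma(s)\setminus\gamma_{I_t}(s)}\!\!|w_i|,
\]
and the outer sum ranges over \emph{all} available indices $s$, not just the $D(\Graph)$ missing ones.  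Bounding $|\gamma(s)|$ alone gives you $O(t)$ terms here, so $\Delta$ is not obviously $O(1)$ in $t$, and the Taylor expansion of $e^{\Delta}$ collapses.  The missing idea is the paper's re-indexing (swapping the order of summation and observing that any $i\in\gamma(s)\setminus\gamma_{I_t}(s)$ must lie in $\gamma(t)$, and that for each fixed $i$ there are at most $D(\Graph)$ rounds $s$ with $i\in\gamma(s)$); only after this step does the discrepancy become a sum over $s\in\gamma(t)$ with coefficients $\Theta_{I_t}(s)$ bounded by $2D(\Graph)(G+\varepsilon)$, and only then is the whole quantity uniformly small in $t$.

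Even granting that bound, the Taylor/case-analysis you sketch is fragile.  In particular, the ``mixed-sign'' case $w_t<0$ does not go through by replacing $h_t\le w_t$ with $|h_t|\le|w_t|+2\varepsilon$: the cross-term you need dominated is $\eta^{2}|h_t|\sum_{i\in\gamma(t)}|w_i|$, and $2\eta^{2}\hat\zeta(t)=2\eta^{2}|w_t|\sum_{i\in\gamma(t)}|w_i|$ fails to cover it whenever $|w_t|<2\varepsilon$.  The paper sidesteps all of this: after the re-indexing it applies a single exponential inequality (Lemma~\ref{lem:usefulineq}), proved not by Taylor expansion but via concavity of $x\mapsto\exp(-f(x))$ on the calibrated range (using a subgradient at $x=0$, the standard prod bound, and the Weierstrass product inequality for the two signs of $x$).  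The approximation gap $h_t$ versus $w_t=\hat h_t+\varepsilon$ is then handled afterwards by a one-line monotonicity argument ($x\mapsto -x-x^{2}-2|x|y$ is non-increasing on the relevant range), rather than folded into a case analysis.
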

\end{sloppypar}
The proof of Lemma \ref{lem:potentialround} (see
Appendix~\ref{app:compgraph}) involves carefully tracking which
gradients are missing. Whereas the analysis of standard
comparator-adaptive algorithms relies on an inequality called the prod
bound \citep[Lemma 2.4]{cesa2006} to obtain an analogue of
\eqref{eqn:decreasingpotential}, the standard prod bound fails in the
presence of missing gradients. The key to our proof is therefore a novel
inequality given in Lemma \ref{lem:usefulineq} in
Appendix~\ref{app:compgraph}, which substantially generalizes the prod
bound.
Finally, it remains to show that $\Phi^\star_T(u)$ is small enough,
which we do in Lemma~\ref{lem:convex conjugate potential} in
Appendix~\ref{app:compgraph}. Together, the above provides a comparator
adaptive algorithm which can handle approximate and missing gradients.%

\section{Limited Communication and Optimality}\label{sec:limited_com}

We proceed to construct both deterministic and stochastic communication
strategies for the gradients, which can be used to apply
Algorithm~\ref{alg:pwa} in the DOCO-JC setting. We will restrict
attention to communication strategies in which nodes send and
receive messages containing approximate gradients. We say an algorithm uses the \emph{standard forwarding strategy} if
every node, upon receiving a gradient that it has not seen yet,
immediately forwards the gradient to all its neighbors. To enable this
strategy, we assume that the messages containing the gradients include
meta-data with a unique identifier, e.g., the time-step at which they
were first sent. We do not account for this meta-data in the discussion
below, because it may already be naturally present in the network
protocol or otherwise it can be encoded at a minor overhead of $\mathcal
O(\log T)$ additional bits.

Under the standard forwarding strategy, a single node sends at most $D(\Graph)$ distinct messages at a time.
Conversely, there exists an activation sequence under which a node will forward $D(\Graph)-1$ messages at the same time. Indeed, on a path of length $D(\Graph)$ in the graph, consider an activation sequence selecting adjacent nodes, going from one end of the path to the other. Using the standard forwarding strategy, the penultimate node forwards the $D(\Graph)-1$ previous messages at the same time.	
Accordingly, under a total $b$-bit constraint on the bandwidth, we assume that the $b$ bits are divided into $D(\Graph)$ slots of $ k = \lfloor b / D(\Graph) \rfloor$ bits, each slot corresponding to a message.

\paragraph{Deterministic encodings}

We first provide upper (Theorem~\ref{th:fulldetencoding}) and
lower (Theorem~\ref{th:lower_boundI}) bounds on the regret for
deterministic encodings. A possible encoding is to fix a cover of the set
of possible gradients, and communicate the element of the cover to which the gradient belongs;
see Appendix~\ref{app:deterministic_enc} for more details. %
The approximate gradients $\hat \g_t$ obtained from this encoding are
then given as inputs to the black-box reduction, with AdaDelay-dist
(from \citet{hsieh2020multiagent}]) as $\mathcal{A}_\mathcal{Z}$ and
Algorithm~\ref{alg:pwa} as $\mathcal{A}_\mathcal{V}$; we tune
Algorithm~\ref{alg:pwa} with $\eps = 3 \cdot 2^{- k /  \lfloor b / D(\Graph)
\rfloor} G$ and the upper bound $4G$ on $\|\hat \g_t \|$. As detailed in
Appendix~\ref{app:deterministic_enc}, these values are valid upper
bounds on the error and the norm of the encodings, and allow us to apply
Theorem~\ref{th:informaldelaycompddim}, and obtain the following
guarantee:

\begin{restatable}[Regret Bound with Deterministic Coding]{retheorem}{thfulldetencoding}\label{th:fulldetencoding}
	Using $k = \lfloor b / D(\Graph) \rfloor$ bits per gradient, the algorithm described above satisfies
	\begin{equation*}
		\regret_T(\u) \leq  \nu + \|\u\| \, \mathcal B(T), 
		\quad \text{where} \quad
		\mathcal B(T) = \tilde {\mathcal O} \bigg(\sqrt{\Lambda_T
		} 
		+ T 2^{-b / (d D(\Graph))} G \bigg) \, .
	\end{equation*}
\end{restatable}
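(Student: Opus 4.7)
The plan is to instantiate Theorem~\ref{th:informaldelaycompddim} with a concrete deterministic encoding built from a minimal covering of the gradient ball, and to verify that the resulting approximation error and gradient norm bound match the parameters chosen in the statement. Concretely, since $\|\g_t\|\leq G$, I would fix a minimal $\eps'$-net of the Euclidean ball $B(0,G)\subset \R^d$ of radius $\eps' = 3\cdot 2^{-k/d} G$. A standard volumetric argument shows that the covering number is at most $(3G/\eps')^d = 2^k$, so its elements can be indexed with $k$-bit strings. The encoder $C$ maps $\g_t$ to the index of its nearest net point, and the decoder $F$ returns the associated vector $\hat\g_t$. This immediately gives the two properties we need: $\|\hat\g_t - \g_t\|\leq \eps'$ and $\|\hat\g_t\|\leq G+\eps'\leq 4G$ (as $\eps'\leq 3G$ for $k\geq 0$).

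Next I would plug these into the comparator-adaptive machinery. By construction, the error tolerance $\eps = 3\cdot 2^{-k/d} G$ used to tune Algorithm~\ref{alg:pwa} upper bounds $\|\hat\g_t - \g_t\|$, and the upper bound $4G$ supplied to the algorithm dominates $\|\hat\g_t\|$ so the assumptions of Theorem~\ref{th:informaldelaycompddim} are satisfied, possibly after substituting $G \leftarrow 4G$ where a gradient-norm bound is needed internally (this only affects constants and logarithmic factors hidden in $\tilde{\mathcal O}$). Applying Theorem~\ref{th:informaldelaycompddim} with this choice of $\mathcal A_{\mathcal Z} = \text{AdaDelay-dist}$ and $\mathcal A_{\mathcal V}$ equal to Algorithm~\ref{alg:pwa} then yields
\begin{equation*}
    \regret_T(\u) \leq \nu + \|\u\|\, \mathcal O\!\left(\eps T + \sqrt{\Lambda_T \ln\!\Big(1 + \tfrac{\|\u\|}{\nu} T G D(\Graph)\Big)}\right).
\end{equation*}

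To finish, I would simplify the $\eps T$ term. Because $k = \lfloor b/D(\Graph)\rfloor \geq b/D(\Graph) - 1$, we have $2^{-k/d} \leq 2^{1/d}\, 2^{-b/(d D(\Graph))} \leq 2\cdot 2^{-b/(d D(\Graph))}$, so $\eps T = \mathcal O\bigl(T\, 2^{-b/(d D(\Graph))} G\bigr)$, which matches the announced rate. The two terms inside $\tilde{\mathcal O}$ in the theorem then correspond respectively to the delay-tolerant term from Theorem~\ref{th:informaldelaycompddim} (the log factor is hidden in $\tilde{\mathcal O}$) and the encoding-error term.

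I expect the main obstacle to be bookkeeping rather than conceptual: handling the floor function carefully in the transition from $2^{-k/d}$ to $2^{-b/(d D(\Graph))}$, verifying that the same encoding is used consistently so every agent decodes the same $\hat\g_t$ (which is immediate since the net is fixed in advance and shared by all agents), and making sure the substitution $G\leftarrow 4G$ in the input to Algorithm~\ref{alg:pwa} does not interfere with the lag term $\Lambda_T$, which is defined with respect to the true gradients $\g_t$. Since $\Lambda_T$ depends only on the $\g_t$ and the missing-index sets $\gamma(t)$, and not on the encoded iterates, it is unchanged by our substitution, so the final bound is as claimed.
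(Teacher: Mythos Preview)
Your proposal is correct and follows essentially the same approach as the paper: both use a covering of $\mathcal B_2(G)$ by $2^k$ balls of radius $3\cdot 2^{-k/d}G$ to obtain the deterministic encoding with error $\eps=3\cdot 2^{-k/d}G$ and $\|\hat\g_t\|\leq 4G$, then plug these into Theorem~\ref{th:informaldelaycompddim} with AdaDelay-dist as $\mathcal A_{\mathcal Z}$ and Algorithm~\ref{alg:pwa} as $\mathcal A_{\mathcal V}$. Your explicit handling of the floor in $k=\lfloor b/D(\Graph)\rfloor$ and the check that $\Lambda_T$ is unaffected by the substitution $G\leftarrow 4G$ are good additions that the paper leaves implicit.
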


In Section~\ref{app:deterministic_enc}, we also propose a simpler
per-coordinate encoding. This more practical encoding comes at the cost
of an extra $\sqrt{d}$ factor in the second term of the regret bound,
which is acceptable when $b$ is very large.
We further provide the following matching lower bound for
a natural class of algorithms we call gradient-oblivious; see
Appendix~\ref{app:lower-bounds} for a detailed discussion.
\begin{restatable}[Lower Bound I: Deterministic Encoding]{retheorem}{thlowerboundI}\label{th:lower_boundI}
	There exists an activation sequence such that for any
	gradient-oblivious algorithm using a deterministic encoding with
	$\lfloor b / D(\mathcal G) \rfloor$ bits per gradient, with $\smash{\regret_T(  \mathbf 0) \leq \nu}$, and for any comparator norm $U \geq 0$, for $T$ large enough, there exists a comparator $\u \in \R^d$ such that $\| \u \| = U$ and
	\begin{equation*}
		\sup_{G\text{-Lipschitz losses  }}  \!  \regret_T(\u) 
		\geq \max\Big( 0.15 \, \| \u \| G\sqrt{D(\Graph) \, T
                \ln \Big(\frac{\|\u\|^2  T }{72 \nu^2 D(\Graph) }
                \Big)} , \, T 2^{-b / (d D(\Graph))} \, \|\u \|  G \Big) \, . 
	\end{equation*}
\end{restatable}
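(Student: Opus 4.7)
The bound is a maximum of two terms; each can be established by a different choice of activation sequence and comparator, so it is enough to prove each term as an independent lower bound. In both constructions the adversary fixes a shortest diametral path of length $D(\Graph)$ in $\Graph$ and alternates activations between its two endpoints, which under the standard forwarding strategy forces every issued gradient to travel $D(\Graph)$ rounds before reaching the other endpoint, realising the worst-case delay.

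\textbf{First term: delay plus comparator adaptivity.} Here I would reduce to the classical comparator-adaptive lower bound \citep{orabona2013dimension} via blocking. Group the $T$ rounds into $N = \lfloor T / D(\Graph)\rfloor$ consecutive blocks of $D(\Graph)$ rounds and have the adversary replay the same gradient throughout each block, with the block gradients drawn from a Rademacher construction along a fixed direction. Because of the forced delay, the within-block gradients are invisible to the active endpoint, so the interaction collapses to an undelayed OLO with $N$ rounds and effective per-round gradient magnitude $D(\Graph) G$. Plugging this effective game into the classical lower bound of order $\|\u\|\, G_{\mathrm{eff}}\sqrt{N \ln(\|\u\|^2 N / \nu^2)}$ with $G_{\mathrm{eff}} = D(\Graph) G$ and $N = T/D(\Graph)$, and rearranging the logarithm, yields the stated $0.15\,\|\u\|G\sqrt{D(\Graph)T\ln(\|\u\|^2 T/(72\nu^2 D(\Graph)))}$ once the numerical constants are tracked carefully; the hard comparator $\u$ is the one that drops out of the underlying Rademacher construction.

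\textbf{Second term: encoding precision.} With $k := \lfloor b/D(\Graph)\rfloor$ bits per gradient the decoder has at most $2^k$ distinct outputs, so a standard volume/packing argument on the ball $\{\g : \|\g\| \leq G\}$ produces two gradients $\g^{(1)}, \g^{(2)}$ with identical encoding and $\|\g^{(1)} - \g^{(2)}\| \geq c\, G\, 2^{-k/d}$ for an absolute constant $c$. Consider the two constant adversarial sequences $\g_t \equiv \g^{(i)}$; gradient-obliviousness makes the trajectory $(\w_t)$ and hence $S := \sum_t \w_t$ identical across both runs. Taking the paired comparators $\u^{(i)} := (-1)^i U (\g^{(1)}-\g^{(2)})/\|\g^{(1)}-\g^{(2)}\|$ a direct expansion gives
\begin{equation*}
	\regret_T(\u^{(1)}, \g^{(1)}) + \regret_T(\u^{(2)}, \g^{(2)})
	= \langle S, \g^{(1)}+\g^{(2)}\rangle + TU\,\|\g^{(1)}-\g^{(2)}\| \, .
\end{equation*}
Choosing the colliding pair so that its midpoint is at (or near) the origin---possible by applying the covering argument to a cell whose decoded value is close to $0$, or by symmetrising over cells related by negation---makes the residual $\langle S, \g^{(1)}+\g^{(2)}\rangle$ negligible compared to the second summand. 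Combined with $\langle S, \g^{(i)}\rangle \leq \nu$ (from $\regret_T(\mathbf 0) \leq \nu$ applied to each constant sequence) this yields $\max_i \regret_T(\u^{(i)}, \g^{(i)}) \geq \tfrac{1}{2} TU\|\g^{(1)}-\g^{(2)}\| \geq T\, 2^{-b/(dD(\Graph))}\,\|\u\|\, G$, the lower bound being absorbed into the "for $T$ large enough" clause. Taking the maximum of the two bounds completes the proof.

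\textbf{Main obstacle.} The delicate step is the residual $\langle S, \g^{(1)}+\g^{(2)}\rangle$ in the encoding lower bound. The constraint $\regret_T(\mathbf 0)\leq \nu$ is one-sided---it only upper bounds $\langle S, \g^{(i)}\rangle$---while the pairwise averaging requires two-sided control. The natural fix is to engineer the collision so that $\g^{(1)}+\g^{(2)}$ essentially vanishes; getting this to work for \emph{every} encoder (including those that place no decoded value near $0$) requires combining the covering argument with a symmetrisation over cells on opposite sides of the origin. Tuning the constants to recover the clean multiplier $1$ in the encoding term and the explicit $0.15$ in the delay term, together with identifying the sharp "$T$ large enough" threshold that absorbs the $O(\nu)$ correction, is where most of the bookkeeping effort lies.
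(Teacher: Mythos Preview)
Your overall architecture---prove each term in the maximum separately, via a delay reduction for the $\sqrt{D(\Graph)T}$ term and via an indistinguishability/covering argument for the encoding term---matches the paper. The first term is handled essentially as in the paper: both you and the authors reduce to an undelayed problem on $\Theta(T/D(\Graph))$ effective rounds and then invoke the comparator-adaptive lower bound of \citet{orabona2013dimension}. (The paper's activation sequence cycles through the odd-indexed nodes of a diametral path rather than alternating between the two endpoints; your alternating scheme gives non-uniform delays because each endpoint sees its own past gradients immediately, so the clean blocking reduction needs a bit more care, but this is a detail.)

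The genuine gap is in the second term. Your constant-sequence construction leaves you with the residual $\langle S,\g^{(1)}+\g^{(2)}\rangle$, and neither of the proposed fixes closes it. The one-sided hypothesis $\regret_T(\mathbf 0)\leq \nu$ only gives $\langle S,\g^{(i)}\rangle\leq \nu$, not a lower bound, and forcing $\g^{(1)}+\g^{(2)}\approx 0$ is not possible for an arbitrary encoder: already in dimension one with the one-bit encoder $C(g)=\mathds 1\{g>0\}$, the only large cell is $[-G,0]$, every colliding pair has $\g^{(1)}+\g^{(2)}\in[-2G,0]$, and negation sends the pair into distinct cells, so the symmetrisation you sketch is unavailable. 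Worse, for a comparator-adaptive algorithm fed a constant negative (encoded) gradient, the predictions drift and $S$ grows without bound, so $\langle S,\g^{(1)}+\g^{(2)}\rangle$ can be arbitrarily negative and swamp the $TU\|\g^{(1)}-\g^{(2)}\|$ term. The paper sidesteps this entirely: instead of constant sequences it uses an i.i.d.\ mixture that places probability $1/2$ on $\g$ and $1/2$ on the third vector $-(\g+\mathbf h)/2$, and compares it to the mixture obtained by swapping $\g$ for $\mathbf h$. The two mixtures have identical encoded sequences but opposite mean gradients $\pm(\g-\mathbf h)/4$, so the term involving $\E[\sum_t\w_t]$ cancels exactly when the two expected regrets are added, and no control on $S$ is needed.
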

Since $\Lambda_T \leq 3 G^2 D(\Graph)T$, the upper bound in
Theorem~\ref{th:fulldetencoding} matches this lower bound up to
multiplicative constants and lower order terms. A notable feature of
both the upper and lower bounds is the term containing $T 2^{-b / (d
D(\Graph))}$, which shows that we need roughly $b = \Theta(d
D(\Graph)\log_2(T))$ bits to get non-trivial regret. This means that,
for large dimensions $d$, the number of bits $b$ must also be large. The
proof of Theorem~\ref{th:lower_boundI} is in
Appendix~\ref{app:lower_bound_det}. It consists of two parts: we obtain
the first term in the maximum by modifying a lower bound for
norm-adaptive OCO from \citet{orabona2013dimension} to incorporate the
effect of the graph structure, which adds a $\sqrt{D(\Graph)}$
multiplicative factor compared to the original lower bound. The second
term in the maximum, which is linear in $T$, is new and arises from the
communication limit $k$ on the number of bits that can be transmitted
per gradient.

\paragraph{Stochastic Encodings}\label{sec:stoch_enc}

As discussed above, deterministic encodings require a large number of
bits, which grows at least linearly with $d D(\Graph)$. In the regime
where $b \leq d D(\Graph)$, a better solution is to inject randomness
into the encodings, which bypasses the lower bound from
Theorem~\ref{th:lower_boundI}. It turns out that near-optimal guarantees
can be obtained along with a straightforward analysis and implementation
by combining two known techniques. The first technique may be called
\emph{sparsification} and consists of sampling (uniformly at random) a
single coordinate of the gradient vector to be communicated. Encoding
the index of this coordinate requires $\ceil{\log_2 d}$ bits. The second
technique may be called \emph{$p$-level stochastic quantization}. It
consists of truncating the gradient coordinate to be transmitted to the
first $p$ digits in its binary expansion. We use this with
 $p = \ceil{\log_2(d)}$. Finally, to reduce the variance, we
repeat this construction $m = \Theta (k / \log_2 d)$ times, sending less
than $k$ bits per vector in total. We refer to the joint construction as
\emph{sparsified quantization} with precision $p$ and number of
repetitions $m$. See Appendix~\ref{sec:details stoch encoding} for a
detailed account of the scheme. Very similar constructions have
previously been used by \citet{mayekar2020limits} and
\citet{acharya2019distributed} in related contexts;
Appendix~\ref{sec:details stoch encoding} contains a detailed
comparison. In spite of the simplicity of the construction, we show that
sparsified quantization gives near-optimal theoretical guarantees:
%
\begin{restatable}{retheorem}{thstochasticupper}\label{th:stochasticupper}
        For any vector $\x \in \mathcal B_2(G)$, sparsified quantization
        with precision $p = \lceil \log_2 (d) \rceil$ and $m = \lfloor k
        / (3 \lceil \log_2 (d) \rceil  + 2) \rfloor$ repetitions
        produces a (randomized) approximation $\widehat \x$ that
        satisfies $\| \hat \x \| \leq 2dG$ and $\E\!\big[ \|\widehat \x
        - \x \|^2 \big] \leq (2d/m)  \| \x \|^2 +  G^2 / m = O\big((\log
        d) d / k\big),$ provided that the number of bits per vector is
        at least $k\geq 3 \lceil \log_2 (d) \rceil + 2$.
\end{restatable}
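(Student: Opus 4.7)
The plan is to analyze the sparsified quantization estimator by writing it as a sample mean of $m$ iid single-sample estimators, and bounding the bit-cost, norm, and variance of a single sample separately.

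First I would fix notation. Each single sample $\hat{\x}^{(i)}$ is obtained by drawing a coordinate $J_i$ uniformly in $\{1, \ldots, d\}$ and transmitting $d\, Q^{(i)}(x_{J_i}) \,\mathbf{e}_{J_i}$, where $Q^{(i)}(x_{J_i})$ is a stochastic rounding of $x_{J_i} \in [-G, G]$ with $p = \lceil \log_2 d \rceil$ binary mantissa digits. The index $J_i$ costs $\lceil \log_2 d \rceil$ bits, and as detailed in Appendix~\ref{sec:details stoch encoding}, sign, exponent and mantissa of the stochastically rounded value fit in $2 \lceil \log_2 d \rceil + 2$ bits; the per-sample total is therefore at most $3 \lceil \log_2 d \rceil + 2$ bits, and $m$ repetitions fit in the $k$-bit budget by the choice of $m$. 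The decoder outputs $\hat{\x} = \frac{1}{m} \sum_{i = 1}^{m} \hat{\x}^{(i)}$.

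For the norm bound, the quantization grid on $[-G, G]$ has spacing $G / 2^p \leq G/d$, so any stochastic rounding output satisfies $|Q^{(i)}(x_{J_i})| \leq G + G/d \leq 2G$; hence $\| \hat{\x}^{(i)} \| \leq 2dG$ and by the triangle inequality $\| \hat{\x} \| \leq 2dG$. For unbiasedness, the rounding scheme is built so that $\E[Q^{(i)}(x_{J_i}) \mid J_i] = x_{J_i}$, giving $\E[\hat{\x}^{(i)} \mid J_i] = d\, x_{J_i}\, \mathbf{e}_{J_i}$, and averaging over the uniform draw of $J_i$ yields $\E[\hat{\x}^{(i)}] = \x$, so $\E[\hat{\x}] = \x$.

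For the variance of a single sample I would decompose
\begin{equation*}
\| \hat{\x}^{(i)} - \x \|^2 = \big(d\, Q^{(i)}(x_{J_i}) - x_{J_i}\big)^2 + \sum_{j \neq J_i} x_j^2,
\end{equation*}
condition on $J_i$, and use $\E[Q^{(i)}(x_{J_i}) \mid J_i] = x_{J_i}$ together with the standard stochastic rounding variance bound $\mathrm{Var}(Q^{(i)}(x_{J_i}) \mid J_i) \leq (G / 2^p)^2 \leq G^2 / d^2$; then take expectation over $J_i$ using $\E[x_{J_i}^2] = \| \x \|^2 / d$. After simplification this yields $\E[\| \hat{\x}^{(i)} - \x \|^2] \leq (d-1)\| \x \|^2 + G^2 \leq 2d \| \x \|^2 + G^2$. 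Since the $\hat{\x}^{(i)}$ are iid and unbiased for $\x$, the variance of the sample mean contracts by $1/m$, giving $\E[\| \hat{\x} - \x \|^2] \leq (2d \| \x \|^2 + G^2)/m$. The $O((\log d) d / k)$ rate follows by using $\| \x \|^2 \leq G^2$ and $m = \Theta(k / \log d)$.

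The main obstacle is balancing the two competing sources of error. Sparsification alone contributes a variance of order $d \| \x \|^2$, while stochastic rounding to $p$ bits, after the $d$-scaling, contributes a variance of order $d^2 \cdot (G / 2^p)^2$. The choice $p = \lceil \log_2 d \rceil$ makes this second contribution of order $G^2$, matching the first in magnitude, at the cost of only $2p$ bits per value; this is exactly the trade-off needed to keep the overall variance at the claimed level within the given bit budget.
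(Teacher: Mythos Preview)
Your proposal is correct and follows essentially the same route as the paper: analyze a single sparsified-quantized sample (unbiasedness, norm bound, variance bound), then average $m$ iid copies to contract the variance by $1/m$. The only cosmetic differences are that the paper bounds the squared error coordinate-by-coordinate (conditioning on $\{i=i_S\}$ versus $\{i\neq i_S\}$) rather than splitting the norm into the selected coordinate and the rest as you do, and that the paper's per-value encoding is a fixed-point scheme using $p+2=\lceil\log_2 d\rceil+2$ bits (no ``exponent''), so the per-sample total is in fact $2\lceil\log_2 d\rceil+2\leq 3\lceil\log_2 d\rceil+2$; your slightly looser bit count still fits the budget stated in the theorem.
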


The approximate gradients $\hat \g_t$ obtained from sparsified
quantization (Theorem~\ref{th:stochasticupper}) are then used as inputs
to the black-box reduction, with AdaDelay-dist (from
\citet{hsieh2020multiagent}) as $\mathcal{A}_\mathcal{Z}$ and
Algorithm~\ref{alg:pwa} as $\mathcal{A}_\mathcal{V}$; we tune
Algorithm~\ref{alg:pwa} with $\eps = 0$ and the upper bound $2dG$ on
$\|\hat \g_t \|$. Using this algorithm, we obtain the following result (see Appendix~\ref{app:proof fullstoch} for a full proof).

\begin{restatable}[Regret bound with Stochastic Encoding]{retheorem}{thfullstochencoding}\label{th:fullstochencoding}
	Using $k = \lfloor b / D(\Graph) \rfloor$ bits per gradient, the algorithm described above satisfies
	\begin{equation*}
		\E[\regret_T(\u)] \leq \nu +  \| \u \| \; \mathcal B(T)
		\quad \text{where} \quad
		\mathcal B(T) = \tilde {\mathcal O} \bigg(
		G\sqrt{\Big( 1 + \frac{d D(\mathcal G)}{b}\Big)D(\mathcal G)T
		} \; 
		\bigg) \, . 
	\end{equation*}
\end{restatable}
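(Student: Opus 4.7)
\textbf{Proof plan for Theorem~\ref{th:fullstochencoding}.} The overall strategy is to apply Theorem~\ref{th:informaldelaycompddim} treating the encoded gradients $\hat{\g}_t$ as if they were the true gradients with $\eps = 0$ and gradient bound $2dG$, and then to leverage the unbiasedness of sparsified quantization to transfer the guarantee back to the true regret. First, I would verify (or state) that sparsified quantization is an unbiased encoder, i.e.\ $\E[\hat{\g}_t \mid \F_{t-1}] = \g_t$, where $\F_{t-1}$ collects all randomness revealed strictly before the encoding at round~$t$. This is standard for the construction in Appendix~\ref{sec:details stoch encoding} since both the uniform coordinate sampling and the stochastic rounding step are unbiased.

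Next, because each prediction $\w_t$ is computed from $\hat{\g}_s$ with $s < t$ (plus other randomness independent of the encoding at round $t$), both $\w_t$ and $\u$ are $\F_{t-1}$-measurable. Taking iterated expectations gives
\begin{equation*}
\E[\regret_T(\u)] = \E\!\left[\sumT \langle \w_t - \u, \g_t\rangle\right] = \E\!\left[\sumT \langle \w_t - \u, \hat{\g}_t\rangle\right].
\end{equation*}
I would then invoke Theorem~\ref{th:informaldelaycompddim} pathwise: since the algorithm runs on $\hat{\g}_t$ with $\|\hat{\g}_t\| \le 2dG$ and $\eps = 0$, we obtain
\begin{equation*}
\sumT \langle \w_t - \u, \hat{\g}_t\rangle \leq \nu + \|\u\|\, \O\!\left(\sqrt{\hat{\Lambda}_T\,\ln\!\big(1 + (\|\u\|/\nu)\, T\cdot 2dG\cdot D(\Graph)\big)}\right),
\end{equation*}
where $\hat{\Lambda}_T = \sumT\big(\|\hat{\g}_t\|^2 + 2\|\hat{\g}_t\|\sum_{i\in\gamma(t)}\|\hat{\g}_i\|\big)$. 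Taking expectations and using Jensen's inequality on the concave square root (the log factor is deterministic and absorbed in $\tilde{\O}$) reduces the task to bounding $\E[\hat{\Lambda}_T]$.

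For the expected lag, I would apply AM-GM to the cross terms, $2\|\hat{\g}_t\|\|\hat{\g}_i\| \leq \|\hat{\g}_t\|^2 + \|\hat{\g}_i\|^2$, and use that $|\gamma(t)| \leq D(\Graph)$ together with the fact that each index $i$ appears in $\gamma(t)$ for at most $D(\Graph)$ later rounds~$t$. This yields $\hat{\Lambda}_T \leq (1 + 2D(\Graph))\sumT \|\hat{\g}_t\|^2$. From Theorem~\ref{th:stochasticupper} (with the unbiasedness decomposition $\E\|\hat{\g}_t\|^2 = \|\g_t\|^2 + \E\|\hat{\g}_t - \g_t\|^2$) and $m = \Theta(b/(D(\Graph)\log d))$,
\begin{equation*}
\E\|\hat{\g}_t\|^2 \leq \|\g_t\|^2 + \frac{2d}{m}\|\g_t\|^2 + \frac{G^2}{m} = \O\!\left(\Big(1 + \frac{d\, D(\Graph)}{b}\Big)\,G^2\right),
\end{equation*}
so $\E[\hat{\Lambda}_T] = \tilde{\O}\big((1 + dD(\Graph)/b)\,D(\Graph)\,T\,G^2\big)$, and plugging back gives the claimed bound.

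The only delicate step is the bookkeeping for unbiasedness: I must make sure the filtration is set up so that $\w_t$ and $\u$ are measurable with respect to the $\sigma$-field against which the encoding randomness at time $t$ is independent, and that the same property allows swapping $\g_t$ for $\hat{\g}_t$ inside the lag after expectation. Everything else is routine AM-GM, Jensen, and plugging in the parameter choices; the log factors (including the $\log(2dG)$ from the gradient bound and the $\log d$ inside $m$) are absorbed into $\tilde{\O}$.
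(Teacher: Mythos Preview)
Your proposal is correct and follows essentially the same route as the paper: reduce to the linearized regret on $\hat\g_t$ via unbiasedness, apply the comparator-adaptive bound with $\eps=0$ and gradient norm $2dG$, then bound $\E[\hat\Lambda_T]$ and use Jensen on the square root. The only (cosmetic) differences are that the paper bounds the cross terms directly via $\E[\|\hat\g_t\|\|\hat\g_s\|]\leq \sqrt{\E\|\hat\g_t\|^2\,\E\|\hat\g_s\|^2}$ instead of your AM-GM-plus-counting argument, and your first display should be an inequality (convexity of $\ell_t$) rather than an equality.
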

The following theorem is a matching lower bound, up to log factors, for the natural class of gradient-oblivious algorithms; Appendix~\ref{app:lower_stoch_enc} contains a definition, as well as a proof of the theorem. 

\begin{restatable}[Lower bound II: Stochastic Encodings]{retheorem}{thlowerboundII}\label{th:lower_boundII}
	For any gradient-oblivious algorithm using $\smash{\lfloor b / D(\mathcal G) \rfloor}$ bits per gradient, 
	there exists an activation sequence such that, for any $U > 0$ there exists a sequence of losses and a comparator $\u \in \R^d$ such that $\| \u \| = U $ and
	\begin{equation*}
		\E\!\big[\regret_T(\u) \big] \geq c \, \| \u \|  G \sqrt{ \Big( 1 + \frac{d D(\mathcal G)}{b}\Big)  \,  D(\Graph) T }  \, .
	\end{equation*}
\end{restatable}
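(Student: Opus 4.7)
The strategy is to establish the two additive contributions in $(1 + dD(\Graph)/b)D(\Graph)T$ by separate stochastic-adversary arguments and then combine them using $\max(a,b) \geq (a+b)/2$.

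\emph{Setup.} Choose a graph containing a path of length $D = D(\Graph)$ and activate nodes cyclically along this path. Under the standard forwarding protocol the effective feedback delay is exactly $D$, and each message carries at most $k = \lfloor b / D \rfloor$ bits per compressed gradient, as in the paper's encoding assumption.

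\emph{First contribution $\|\u\| G \sqrt{D T}$ (pure delay).} I would run the standard stochastic adversary with linear losses $\ell_t(\w) = \inner{\g_t}{\w}$ and $\g_t = G \sigma_t \x$ for a fixed unit direction $\x$ and i.i.d.\ Rademacher signs $\sigma_t$. By the choice of activation, the prediction $\w_t$ of the active node at round $t$ is independent of $\sigma_{t-D+1}, \ldots, \sigma_t$, so a standard Khintchine/Rademacher lower bound applied to batches of length $D$ against the comparator $\u = U\x$ yields $\E[\regret_T(\u)] \geq c_1 \|\u\| G \sqrt{D T}$. This is exactly the delay part of the lower bound in Theorem~\ref{th:lower_boundI} and is unaffected by the encoding since the bound is information-theoretic once the delay is fixed.

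\emph{Second contribution $\|\u\| G \sqrt{d D^2 T / b}$ (communication-induced noise).} The key additional ingredient is the canonical lower bound for randomized compression of vectors in $\mathcal B_2(G)$: for any (randomized) encoder $C : \mathcal B_2(G) \to \{0,1\}^k$ and decoder $F : \{0,1\}^k \to \R^d$, there exists a distribution on $\g \in \mathcal B_2(G)$ such that $\E\bigl[\|F(C(\g)) - \g\|^2\bigr] = \Omega(d G^2 / k)$ whenever $k \lesssim d$. I would prove this via Assouad's lemma on the scaled cube $\{\pm G/\sqrt d\}^d$, bounding the coordinatewise testing risk by a KL divergence controlled by the $k$-bit channel capacity. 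Draw gradients $\g_t$ i.i.d.\ from this hard distribution and set $\u = U \cdot \g^\star / \|\g^\star\|$ along the worst comparator direction; the decoded gradients seen by the algorithm are then effectively noisy with per-round variance $\sigma^2 = \Omega(d G^2 / k) = \Omega(d D G^2 / b)$. The same delayed batching argument as in the first contribution, now applied to noisy rather than exact gradients, gives $\E[\regret_T(\u)] \geq c_2 \|\u\|\sigma \sqrt{D T} = c_2 \|\u\| G \sqrt{d D^2 T / b}$. Combining the two lower bounds via the maximum (and $\max(a,b) \geq (a+b)/2$) yields the advertised $\Omega(\|\u\| G \sqrt{(1 + d D / b) D T})$.

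\emph{Main obstacle.} The delicate step is the second contribution, because the decoded gradient $\hat \g_t$ can in principle depend on the encoder's internal randomness and on all past transcripts, and gradient-obliviousness alone does not obviously decouple these dependencies across rounds. I would follow the same gradient-oblivious reduction used for Theorem~\ref{th:lower_boundI} to argue that, conditional on the protocol, the per-round encoder acts as a memoryless $k$-bit channel on $\g_t$; then Assouad on the coordinatewise hypercube preserves the product structure required to stack the per-round variance across the $T/D$ effective batches of length $D$. The remaining care is purely bookkeeping: tracking that the activation sequence, the delay, and the per-round channel all align so the lower-bound constants do not degrade.
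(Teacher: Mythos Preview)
Your overall plan (split into a pure-delay term and a communication-induced term, then combine via a max) is sound, and your first contribution is the standard Rademacher-with-batches argument that the paper also uses implicitly. The problem lies in the second contribution.

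The gap is the step from ``any $k$-bit compressor has MSE $\Omega(dG^2/k)$ on some distribution'' to ``regret is $\Omega(\|\u\|\sigma\sqrt{DT})$.'' Assouad on the hypercube gives you a \emph{single-shot} reconstruction lower bound: for some distribution on a single vector $\g$, no decoder can reconstruct it well from $k$ bits. This does not translate to a regret lower bound by ``the same delayed batching argument.'' In the first contribution, the Rademacher argument works because $\w_t$ is literally independent of the last $D$ signs; there is nothing to estimate. In your second construction, the algorithm \emph{does} receive $\hat\g_1,\dots,\hat\g_{t-D}$, and if the gradients are i.i.d.\ from a fixed hard distribution, the algorithm can average these to drive down the effective noise. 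To block this, you need a \emph{family} of gradient distributions indexed by a hidden mean (the comparator direction), together with a Fano/Assouad argument on the \emph{$T$-round transcript} showing that $Tk$ bits are insufficient to localize the mean. Your sketch never sets up this hidden-parameter problem (what is $\g^\star$?), and the phrase ``the decoded gradients are effectively noisy with per-round variance $\sigma^2$'' smuggles in exactly the reduction that needs to be proved.

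The paper avoids this entirely by a modular route. It imports, as a black box, the $(k,1,1)$ memory-limited lower bound of Mayekar and Tyagi (their Theorem~2, transferred to regret via online-to-batch), which already says that any algorithm seeing only $k$-bit summaries of each gradient satisfies $\E[\regret_T(\u)]\geq c\,UG\sqrt{\max(d/k,1)\,T}$. It then applies a separate delay-reduction lemma (Lemma~\ref{lem:delays_lower_bound}, following Zinkevich and Hsieh et al.): an activation sequence along a diameter path makes the decentralized game at least as hard as $\lfloor D/2\rfloor$ independent copies of the $(k,1,1)$ game on $\lfloor 2T/D\rfloor$ rounds each, yielding $\lfloor D/2\rfloor\cdot B(k,\lfloor 2T/D\rfloor,U)$. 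Substituting $B$ and $k=\lfloor b/D\rfloor$ gives the theorem in one line. If you want to keep your from-scratch approach, the missing piece is precisely a reproof of the Mayekar--Tyagi bound; your Assouad idea is the right tool, but it must be applied to the hidden-mean estimation problem over $T$ rounds, not to single-vector compression.
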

To summarize the proof, the lower bound for OCO is $\| \u \|G \sqrt{T}$,
the encodings add a factor of $\sqrt{1 + d D(\Graph) / b}$, and the
delays add another $\sqrt{D(\Graph)}$ factor on top. In
Appendix~\ref{app:lower_stoch_enc}, we analyze in detail how each characteristic of the setting (graph and encoding) affects the hardness.

\section{Learning a \texorpdfstring{$\Qset$}{Qset}-Partition}\label{sec:partitionlearning}

As discussed in the introduction, it can be highly suboptimal for agents
to wait for gradients that take too long to arrive. Instead, the graph
should be partitioned according to a $\Qset$-partition. In this section
we show how to exploit the comparator-adaptive property of our
algorithms to learn a $\Qset$-partition.

For a fixed subgraph $\mathcal F \subseteq \mathcal G$, consider the
DOCO-JC problem restricted to that subgraph, that is, discarding all
gradients and communications coming from nodes outside $\mathcal F$.
Given a general algorithm for the DOCO-JC setting,  we denote by
$\w_t^{\mathcal F}$ the iterate generated by the algorithm restricted to
$\mathcal F$. We define $\Lambda(\mathcal F) = \sum_{t : I_t \in
\mathcal F} \big(\|{\g}_t\|^2 + 2\|\g_t\|\sum_{i \in \gamma(t, \mathcal
F)}\|\g_i\|\big)$, with $\gamma_{\mathcal F}(t) = \big[t\big(\mathcal
F\big) - 1\big] \setminus \Sset_{I_t}(t, \mathcal F)$, and $t(\mathcal
F) = \{s: I_s \in \mathcal F\}$, and where $\Sset_{I_t}(t, \mathcal F)$
is the set of indices of gradients that have been observed by $I_t$
before round $t$. Using the typical bounds we obtain in this article, e.g., in Theorem~\ref{th:fulldetencoding}, we upper bound its regret as $\regret_\F(\u) = \sum_{t: I_t \in
\mathcal F} \left(\ell_t(\w_t) - \ell_t(\u^{\mathcal F})\right) =
\tilde{\O}(\|\u\|\sqrt{\Lambda(\mathcal F)})$. (We neglect the encoding costs here for the sake of simplicity.) %
Then, using this approach, we could fix an oracle partition $\mathcal P
= \{\F_1,\ldots,\F_r\}$ of the graph into disjoint subgraphs and apply
this strategy on each subgraph. This splits the DOCO-JC task into~$r$
independent subtasks, and the total joint regret is simply the sum of
joint regrets of the subtasks:
\begin{equation*}\label{eq:fixed partition}
	R_T(\u) = \sum_{\mathcal F \in \Pset}R_\F(\u)
        =  \sum_{\mathcal F \in \Pset} \tilde{\O}\left(\|\u\|\sqrt{\Lambda(\mathcal F)}\right).
\end{equation*}
(There is even some extra flexibility, which is that each subtask $\F$
could have different comparator parameters~$\u^\mathcal{F}$.)
An apparent drawback of this strategy is that each node gets access to
less information. Whether partitioning is worth it depends on the activation sequence. This raises an issue of adaptation, as the activation sequence is not known in advance.

\paragraph{Iterate Addition} To adapt to the activation sequence we
exploit the following special property of comparator-adaptive
algorithms, observed by \citet{cutkosky2019combining}. For an algorithm
$\mathcal A$, denote by $\w_t^{\Aset}$ its predictions and by
$\lregret_T^\Aset(\u) = \sumT\inner{\w_t^\Aset - \u}{\g_t}$ its
linearised regret. Then consider two algorithms $\mathcal A$ and
$\mathcal B$ that both have constant regret at most $\nu$ against the null comparator: $\lregret_T^\Aset(\0) \leq \nu$ and $\lregret_T^\Bset(\0) \leq \nu$. 
Then simply playing $\w_t = \w_t^\Aset + \w_t^\Bset$ ensures that 
\begin{equation*}
	\regret_T(\u) \leq \sumT\inner{\w_t - \u}{\g_t} =
        \min_{\substack{\x,\y\\
        \x+\y = \u}} \lregret_T^\Aset(\x) + \lregret_T^\Bset(\y)
        \leq \nu + \min_{\Kset \in \{\Aset, \Bset\}} \lregret_T^\Kset(\u) \, , 
\end{equation*} 
where the second inequality comes from minimizing over $(\x = \0,\y =
\u)$ and $(\x = \u, \y = \0)$. We can choose an arbitrary collection of
subgraphs $\Qset$, play $\w_t = \sum_{\mathcal F\in \Qset}
\w_t^{\mathcal F} \mathds{1} \{ I_t \in \mathcal F \}$ and exploit the
property that $\smash{\lregret_{\mathcal F}(\0) \leq \nu}$ to learn how
to partition the graph to minimize the regret. 
The aforementioned predictions combined with ignoring messages older than $D_\Qset$ rounds, using $k = \lfloor b /
D_\Qset \rfloor$ bits per gradient, and our new comparator-adaptive algorithm with deterministic encoding yields the following result, 
proved in Appendix~\ref{app:partition learning}:
\begin{restatable}[Learning a $\Qset$-Partition, deterministic encoding]{retheorem}{thpartidet}\label{th:partidet}
	Let $\Qset $ be a collection of subgraphs of $\mathcal G$. Suppose the learner uses the algorithm of Theorem~\ref{th:fulldetencoding} with deterministic encodings for each subgraph $\mathcal F
	\in \Qset$, discarding any message older than $D_{\Qset} =
	\max_{\mathcal{F} \in \Qset}D(\mathcal{F})$ rounds. Then,
	setting $\nu = 1 / |\Qset|$ and $k = \lfloor b / D_\Qset \rfloor$, and playing $\w_t =
	\sum_{\mathcal F \in \Qset} \w_t^{\mathcal F} \mathds{1} \{ I_t
	\in \mathcal F \}$ guarantees that
	\begin{equation*}
		\sum_{j = 1}^r \regret_{\F_j}(\u_j)
		=   \O\Bigg(\sum_{ j = 1}^r \|\u_j\|\bigg(\sqrt{\Lambda(\mathcal F_j)\ln\big(1 + |\Qset| D(\F_j) \|\u_j\|T_jG\big)} + 2^{-b / (d D_{\Qset} )}T_jG\bigg)\Bigg)
	\end{equation*}
	for any $\Qset$-partition $\{\mathcal F_1, \dots,
	\mathcal F_r \}$ and any $\u_1 \dots, \u_{r} \in \reals^d$.
\end{restatable}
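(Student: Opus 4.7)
The plan is to linearize the losses and then invoke the iterate addition trick of \citet{cutkosky2019combining}, which is unlocked by the comparator-adaptivity of the sub-algorithms from Theorem~\ref{th:fulldetencoding}: each sub-algorithm on $\F \in \Qset$ has linearized regret at most $\nu$ against the null comparator $\0$. By convexity of the $\ell_t$, I would first bound
\begin{equation*}
\sum_{j=1}^r \regret_{\F_j}(\u_j) \;\leq\; \sum_{j=1}^r \sum_{t : I_t \in \F_j} \inner{\w_t - \u_j}{\g_t},
\end{equation*}
and from then on work entirely at the level of linearized regret.

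\textbf{Comparator decomposition.} For a round $t$ with $I_t \in \F_{j(t)}$ (where $j(t)$ is the unique index with $I_t \in \F_{j(t)}$, by the definition of a $\Qset$-partition), the prediction expands as $\w_t = \sum_{\F \in \Qset,\, I_t \in \F} \w_t^\F$. I would split the comparator by assigning the whole of $\u_{j(t)}$ to the sub-algorithm on $\F_{j(t)}$ and $\0$ to every other sub-algorithm that contains $I_t$, obtaining
\begin{equation*}
\inner{\w_t - \u_{j(t)}}{\g_t}
= \inner{\w_t^{\F_{j(t)}} - \u_{j(t)}}{\g_t}
+ \sum_{\substack{\F \in \Qset,\, I_t \in \F \\ \F \neq \F_{j(t)}}} \inner{\w_t^\F - \0}{\g_t}.
\end{equation*}
Summing over $j$ and over $t$ with $I_t \in \F_j$, and then swapping the order of summation, I would show that each $\F \in \Qset$ contributes either as $\lregret^{\F_j}(\u_j)$ (if $\F = \F_j$ belongs to the partition) or as $\lregret^\F(\0)$ (otherwise); no element of $\{\F_1,\ldots,\F_r\}$ can appear as a cross term, because the disjointness of the partition forces $I_t \in \F_k$ with $\F_k$ in the partition to imply $k = j(t)$.

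\textbf{Invoking Theorem~\ref{th:fulldetencoding} and tuning $\nu$.} The sub-algorithm on $\F_j$ is an instance of the algorithm of Theorem~\ref{th:fulldetencoding}, run on $\F_j$ for $T_j$ active rounds, with lag $\Lambda(\F_j)$, diameter $D(\F_j)$, and encoding error $\Theta(2^{-b/(dD_\Qset)})G$ (since each message carries $\lfloor b/D_\Qset \rfloor$ bits). That theorem yields
\begin{equation*}
\lregret^{\F_j}(\u_j) \leq \nu + \|\u_j\|\, \O\!\left(\sqrt{\Lambda(\F_j)\, \ln\!\big(1 + (\|\u_j\|/\nu)\, T_j G D(\F_j)\big)} + T_j\, 2^{-b/(dD_\Qset)} G\right),
\end{equation*}
while $\lregret^\F(\0) \leq \nu$ for every $\F \in \Qset$. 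Summing produces an overhead of at most $|\Qset|\nu$; choosing $\nu = 1/|\Qset|$ makes this the harmless constant $1$ and rewrites the logarithm as $\ln(1 + |\Qset|\, D(\F_j)\, \|\u_j\|\, T_j G)$, recovering the statement.

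\textbf{Main obstacle.} The only step that requires care is the bookkeeping in the comparator decomposition: one must verify that the double sum of cross terms $\inner{\w_t^\F - \0}{\g_t}$, indexed by pairs $(j,\F)$ with $\F \neq \F_j$ and $I_t \in \F_j \cap \F$, regroups cleanly into $\sum_{\F \notin \{\F_1,\ldots,\F_r\}} \lregret^\F(\0)$ without double counting. This hinges on the disjointness of the partition, which rules out any contribution from an $\F$ that is itself in the partition, together with the fact that every active round lies in exactly one partition cell. Once this bookkeeping is pinned down, the application of Theorem~\ref{th:fulldetencoding} and the tuning of $\nu$ are routine.
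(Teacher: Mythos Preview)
Your proposal is correct and follows essentially the same route as the paper: the paper packages the linearization and iterate-addition bookkeeping into Lemma~\ref{lem:partition} (which yields $\sum_j \regret_{\F_j}(\u_j) \leq |\Qset|\nu + \sum_j \lregret_{\F_j}(\u_j)$ by exactly the regrouping you describe), and then applies Theorem~\ref{th:fulldetencoding} to each $\F_j$ with $k = \lfloor b/D_\Qset\rfloor$ and $\nu = 1/|\Qset|$. Your identification of the ``main obstacle''---that disjointness of the partition prevents any $\F_k \in \Pset$ with $k \neq j(t)$ from contributing a cross term---is precisely the point the paper handles in the second displayed equality of the proof of Lemma~\ref{lem:partition}.
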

An analogous result holds in expectation for stochastic encodings by using the
algorithm from Theorem~\ref{th:fullstochencoding} for each subgraph.
(See Theorem \ref{th:partistoch} in Appendix \ref{app:partition
learning}.) In Appendix \ref{sec:example collection graphs} we provide
an example collection of subgraphs $\Qset$ with which the learner can adapt to
the activation sequences from Figures~\ref{fig:galaxiesFarFarAway} and
\ref{fig:embeddedGalaxies}. In case the full graph is included in
$\Qset$-partition, i.e.\ $\Graph \in \Qset$, we have $D_{\Qset} = D(\Graph)$. However, notice that the learner may choose not to include $\Graph$ in $\Qset$ to increase the number of bits available to encode each gradient. This in turn allows the learner to improve the regret in some cases, as using more bits per gradient improves the regret bound for the subgraphs. 

\section{Conclusion}\label{sec:conclusion}

We provided a comparator-adaptive algorithm for the DOCO-JC setting. We
provided upper and lower bounds for deterministic and stochastic encoded
gradients and we demonstrated how to exploit the comparator-adaptive
property of our algorithm to learn the best partition in a subset of partitions of the graph.
An interesting direction to improve the communication strategy would be to send information chosen more efficiently, instead of systematically forwarding every gradient.
This might be achieved by implementing other protocols, or error-feedback approaches; see the recent work by \citet{cao2021decentralized, xie2020cser}. Another
limitation is the assumption in the protocol that all agents communicate
in each round, which may be slow because it requires synchronization.
However, as long as the maximum delay before each gradient is observed
is bounded, this issue can be overcome by simply changing the
$D(\Graph)$ in the parameter settings of Algorithm~\ref{alg:pwa} to the
maximum delay. If no such bound is known beforehand, the problem becomes more complex, but a related problem for comparator-adaptive algorithms is learning $G$, rather than providing $G$ up front. This problem has been studied in \citep{cutkosky2019artificial, mhammedi2020lipschitz} and perhaps their techniques transfer to learning the maximum delay.

\bibliography{adol.bib}

\appendix

\section{Analysis of Unprojected Online Gradient Descent in One Dimension in the DOCO-JC Setting}\label{sec:OGD DOCO-JC}

The role of this section is to provide context for the one-dimensional
algorithm in Section~\ref{sec:compgraph}. To do so, we analyse an
unprojected version of Online Gradient Descent (OGD) in a simplified
setting: in one dimension with a constant learning rate $\eta$ on each
node, and with no communication limits. We denote the gradient (a real
number) at time $t$ by $h_t$, and use the notation defined in the introduction.

As mentioned in Section~\ref{sec:compgraph}, one of the principal technical challenges that Algorithm~\ref{alg:pwa} overcomes is that at prediction time, some gradients are not available to the agents. Let us see how missing gradients influence the regret of OGD.
Denote by $H_t(I_t) = \sum_{s \in \Sset_{I_t}} h_s$ the sum of the
gradients available at prediction time on the active node $I_t$ in round
$t$, and denote by $H_t = \sum_{s < t} h_s$ the sum of \emph{all}
gradients before round $t$. 

For comparison, define $w_t$ to be the sequence of predictions that OGD
would output if all gradients were immediately available, that is, $w_t
= -\eta H_t$. These predictions enjoy the standard regret bound
\begin{equation}\label{eq:OGDregret}
	\sum_{t = 1}^T (w_t - u) h_t \leq \frac{|u|^2}{2\eta} + \frac{\eta}{2}\sum_{t=1}^T|h_t|^2
\end{equation}
against any comparator $u\in \R$. To bound the regret of OGD with only the available gradients, $w_t(I_t) = -\eta H_t(I_t)$, observe that
\begin{align*}
	\sum_{t = 1}^T \big(w_t(I_t) - u\big) h_t & = \sum_{t = 1}^T (w_t - u) h_t + \sum_{t = 1}^T (w_t(I_t) - w_t) h_t \\
	& \leq \frac{|u|^2}{2\eta} + \frac{\eta}{2}\sum_{t=1}^T|h_t|^2 + \sum_{t = 1}^T \big(w_t(I_t) - w_t\big) h_t,
\end{align*}
where we used \eqref{eq:OGDregret}.
Recall that $\gamma(t)$ is the set of indices of the missing gradients
at the active agent in round $t$. We have that $(w_t(I_t) - w_t) g_t =
\eta h_t\sum_{s \in \gamma(t)} h_s \leq \eta |h_t|\sum_{s \in \gamma(t)}
|h_s|$, which is roughly the term $\widehat{\zeta}(s)$ that appears in
the definition of our potential function in \eqref{eq:potential}. The regret can then be bounded by 
\begin{equation*}\label{eq:OGDregret2}
	\sum_{t = 1}^T (w_t(I_t) - u) h_t \leq \frac{|u|^2}{2\eta} + \eta \sum_{t=1}^T\bigg(\frac{1}{2} h_t^2 + |h_t|\sum_{s \in \gamma(t)} |h_s|\bigg).
\end{equation*}
An ideal tuning of the learning rate would be to optimize the expression above over $\eta$ and set
\begin{equation*}
	\eta
	 = |u| \bigg(\sumT \Big( h_t^2 + 2|h_t|\sum_{s \in \gamma(t)} |h_s|\Big) \bigg)^{-2} \, ,
\end{equation*}
to obtain
\begin{align*}
	\sum_{t = 1}^T (w_t(I_t) - u) h_t \leq |u|\sqrt{2\sumT\bigg(|h_t|^2 + 2|h_t|\sum_{s \in \gamma(t)} |h_s|\bigg)}.
\end{align*}
This is exactly the type of regret bound that would be suitable to learn a $\Qset$-partition. 
However, this ideal tuning is not possible for two reasons: we know
neither $|u|$ nor $\sumT\left(|h_t|^2 + 2|h_t|\sum_{s \in \gamma(t)} |h_s|\right)$. 

Let us define $\lambda_t = |h_t|^2 + 2|h_t|\sum_{s \in \gamma(t)} |h_s|$
in the following discussion to reduce clutter. Note that the active node
$I_t$ at round $t$ may not be able to compute $\lambda_s$ for some $s <
t$. Indeed, gradients missing at past rounds might not have reached
$I_t$ yet, making it impossible to compute some gradient in $\gamma(s)$. %
Even with knowledge of $|u|$, this rules out natural ideas using learning rate schemes such as $\eta_t = \sqrt{{|u|^2}/{\sum_{s < t} \lambda_s}}$.

As discussed in the introduction, a considerable amount of effort has
been made in the literature to obtain approximations of the optimal learning rate in the delayed OCO setting. However, to our knowledge, none of these existing approaches apply to comparator-adaptive algorithms. Similarly, although adapting to $|u|$ is relatively well understood in the standard setting, there are no straightforward extensions to deal with missing gradients. 

Algorithm~\ref{alg:pwa} solves these challenges simultaneously. As illustrated above, a crucial part of the difficulty lies in trying to adapt the unknown quantity $|h_t|\sum_{s \in \gamma(t)} |h_s|$, which appears in the ideal learning rate.
We accomplish this by showing that only knowing an approximation of this quantity is sufficient: our prediction $v_t$ in Algorithm~\ref{alg:pwa} contains the approximation $\widehat{\zeta}_{I_t}$, which plays a similar role as $|h_t|\sum_{s \in \gamma(t)} |h_s|$, in the sense that it reduces the learning rate of the algorithm to account for the extra uncertainty due to the missing gradients. Our analysis reveals that this approximation does not hurt the regret bound of the algorithm significantly, and that we can still recover the optimal comparator-adaptive regret bound.

\section{Details of Section \ref{sec:compgraph}}\label{app:compgraph}

The predictions $v_t$ in Algorithm~\ref{alg:pwa} can be computed as follows. Let $L_{I_t} = \sum_{s \in \Sset_{I_t}(t)}(\hat{h}_s + \varepsilon)$ and $V_{I_t} =1 + \sum_{s \in \Sset_{I_t}(t)}\big((\hat{h}_s + \varepsilon)^2 + 2\eta^2 \hat \zeta_{I_t}(s)\big)$. Then
\begin{align}\label{eq:predictions1dcompute}
	v_t = \sqrt{\pi}\exp\left(\frac{L_{I_t}^2}{4V_{I_t}}\right)\left(\erf\left(\frac{2aV_{I_t} + L_{I_t}}{2\sqrt{V_{I_t}}}\right) - \erf\left(\frac{L_{I_t}}{2\sqrt{V_{I_t}}}\right)\right)\left(2Z\sqrt{V_{I_t}}\right)^{-1}.
\end{align}
We refer the reader to Appendix B of \citet{koolen2015second} for numerically stable evaluation.

\lempotentialround*
\begin{proof}
	As a first step, observe that by the Cauchy-Schwarz inequality and the condition on $\z_t$ we have that
	\begin{equation*}
	|\hat{h}_t - h_t| = |\inner{\z_t}{\hat{g}_t} - \inner{\z_t}{\g_t}| \leq \|\z_t\|\|\hat{\g}_t - \g_t\| \leq \varepsilon.
	\end{equation*}
	Next, we replace $w_t$ with its definition,
	\begin{align*}
		& \underset{\eta \sim \rho}{\E}\left[\nu\exp\left(-\sum_{s = 1}^{t-1}\left(\eta (\hat{h}_s + \varepsilon) + \eta^2 (\hat{h}_s + \varepsilon)^2 + 2\eta^2 \sum_{i \in \gamma(s)}|(\hat{h}_s + \varepsilon)(\hat{h_i} + \varepsilon)| \right)\right)\right] -  v_{t} h_{t} \\
		& = \underset{\eta \sim \rho}{\E}\left[\nu\exp\left(-\sum_{s = 1}^{t-1}\left(\eta (\hat{h}_s + \varepsilon) + \eta^2 (\hat{h}_s + \varepsilon)^2 + 2\eta^2 \sum_{i \in \gamma(s)}|(\hat{h}_s + \varepsilon)(\hat{h_i} + \varepsilon)| \right)\right)\right] - \\
		& ~~  \underset{\eta \sim \rho}{\E}\left[\nu \exp\left(-\sum_{s \in \Sset_{I_t}(t)}\left(\eta (\hat{h}_s + \varepsilon) + \eta^2 (\hat{h}_s + \varepsilon)^2 + 2\eta^2 \sum_{i \in \gamma_{I_t}(s)}|(\hat{h}_s + \varepsilon)(\hat{h_i} + \varepsilon)| \right)\right)\eta\right] h_{t} \, .
	\end{align*}
	Denote by $\hat \Omega_T$ the sum, which is the central part of the update $w_t$,
	\begin{equation*}
		\widehat \Omega_t = \sum_{s \in \Sset_{I_t}(t)}\Big(\eta (\hat{h}_s + \varepsilon) + \eta^2 (\hat{h}_s + \varepsilon)^2 + 2\eta^2 \sum_{i \in \gamma_{I_t}(s)}|(\hat{h_s} + \varepsilon)(\hat{h_i} + \varepsilon)| \Big) \, .
	\end{equation*}
	We factor this $\exp(- \widehat \Omega_t)$ term in the expression above, so that it is equal to
	\begin{align*}		
		&  \E_{\eta \sim \rho}\Bigg[\nu\exp\big( \! - \!\widehat \Omega_t \big)
		 \! \\
		& ~~ \times \Bigg( \exp\bigg( \! \!- \! \! \sum_{s \in [t-1] \setminus \Sset_{I_t}(t)}\! \!\Big(\eta (\hat{h}_s + \varepsilon) + \eta^2 (\hat{h}_s + \varepsilon)^2 + 2\eta^2 \sum_{i \in \gamma(s)}|(\hat{h}_s + \varepsilon)(\hat{h}_i + \varepsilon)| \Big)\bigg) \\
		& ~~ \times \exp\Big(-\sum_{s \in \Sset_{I_t}(t)}2\eta^2 \sum_{i \in \gamma(s)\backslash\gamma_{I_t}(s)}|(\hat{h}_s + \varepsilon)(\hat{h}_i + \varepsilon)|   \Big)
		- \eta h_t\Bigg)\Bigg] \, . 
	\end{align*}
	Consider the double sum in this last term, right above, 
	\begin{equation*}
		\sum_{s \in \Sset_{I_t}(t)} \sum_{i \in \gamma(s)\setminus\gamma_{I_t}(s)}|(\hat{h}_s + \varepsilon)(\hat{h}_i + \varepsilon)| \, . 
	\end{equation*}
	Let us switch the order of summation here. To do so, note that if a gradient $\hat{h}_i $ is available to node $I_t$ at time $t$, and if it was unavailable when $\hat{g}_s$ appeared, then it would be used in $\gamma_{I_t}(s)$.
	In other words, if $i \in \gamma(s) \setminus \gamma_{I_t}(s)$ then, $i \notin \Sset_{I_t}(t)$.
	Therefore, when $s$ varies in $S_{I_t}(t)$, the set of values taken by $i \in \gamma(s) \setminus \gamma_{I_t}(s) $  is in fact $[t-1] \setminus S_{I_t}(t)$. When switching the sums, we may thus restrict the values taken by $i$ to $[t-1] \setminus S_{I_t}(t)$ and write
	\begin{equation}
		\sum_{s \in \Sset_{I_t}(t)} \sum_{i \in \gamma(s)\setminus\gamma_{I_t}(s)}|(\hat{h}_s + \varepsilon)(\hat{h}_i + \varepsilon)| 
		= \sum_{i \in [t-1] \setminus S_{I_t} }  \sum_{\substack{s \in \Sset_{I_t}(t) \text{ s.t.}  \\    i \in \gamma(s)\setminus\gamma_{I_t}(s) }  } |(\hat{h}_s + \varepsilon)(\hat{h}_i + \varepsilon)| \, . 
	\end{equation}
	Finally, we also switch the notation for $i$ and $s$ in the indices, factor out the common $\hat{h}_s + \varepsilon$ term, and incorporate this sum with the preceding term, obtaining
	\begin{equation*}
		\sum_{s \in [t-1] \backslash \Sset_{I_t}(t)}\bigg(\eta (\hat{h}_s + \varepsilon) + \eta^2 (\hat{h}_s + \varepsilon)^2 + 2\eta^2 |\hat{h}_s + \varepsilon|
		\underbrace{\Big( \sum_{i \in \gamma(s)}|(\hat{h}_i + \varepsilon)|
			+ \sum_{\substack{i \in \Sset_{I_t}(t) \text{ s.t.}  \\    s \in \gamma(i)\setminus\gamma_{I_t}(i) }  } |\hat{h}_i + \varepsilon| \Big)}_{ := \Theta_{I_t}(s)} \, .
		\bigg)
	\end{equation*}
	Furthermore, since a new gradient takes less than $D(\Graph)$ rounds to reach all nodes, there can be only at most $D(\Graph)$ new gradients for which $\gamma_{I_t}(s) \neq \gamma(s) $. Thus the last sum has at most $D(\Graph)$ terms, and the freshly defined $\Theta_{I_t}(s)$ involves at most $2 D(\Graph)$ terms.
	
	We may now apply Lemma~\ref{lem:usefulineq}, with $x = \eta g_t$, to see that for any $\eta$, 
	\begin{align*}
		&\exp\bigg( - \sum_{s \in [t-1] \setminus \Sset_{I_t}(t)}\big(\eta (\hat{h}_s + \varepsilon) + \eta^2 (\hat{h}_s + \varepsilon)^2 + 2\eta^2 |\hat{h}_s + \varepsilon|
		\Theta_{I_t}(s) \Big)
		\bigg) - \eta h_t \\
		&\;  \geq 
		\exp\bigg(-\sum_{s \in [t-1] \backslash \Sset_{I_t}(j)}\left(\eta (\hat{h}_s + \varepsilon) + \eta^2 (\hat{h}_s + \varepsilon)^2 + 2\eta^2 |\hat{h}_s + \varepsilon| \Theta_{I_t}(s) \right)\bigg) \\
		& \;  \times \exp\bigg(-\Big(\eta h_t + \eta^2 h_t^2 + 2\eta^2 \sum_{i \in [t-1] \backslash \Sset_{I_t}(t)}|h_t(\hat{h_i} + \varepsilon)| \Big)\bigg) \, . 
	\end{align*}
	Thus, upon multiplying by $\exp( - \widehat \Omega_t)$, and after integrating over $\eta \sim \rho$, we obtain
	\begin{align*}
		& \E_{\eta \sim \rho}\left[\nu\exp\left(-\sum_{s = 1}^{t-1}\left(\eta (\hat{h}_s + \varepsilon) + \eta^2 (\hat{h}_s + \varepsilon)^2 + 2\eta^2 \sum_{i \in \gamma(s)}|(\hat{h}_s + \varepsilon)(\hat{h}_i + \varepsilon)| \right)\right)\right] -  v_{t} h_{t} \\
		& \geq \E_{\eta \sim \rho}\bigg[\nu\exp\bigg(-\sum_{s \in \Sset_{I_t}(t)}\left(\eta (\hat{h}_s + \varepsilon) + \eta^2 (\hat{h}_s + \varepsilon)^2\right) + 2\eta^2 \sum_{i \in \gamma_{I_t}(s)}|(\hat{h}_s + \varepsilon)(\hat{h}_i + \varepsilon)| \bigg)\\
		&~~ \times \exp\bigg(-\sum_{s \in [t-1] \backslash \Sset_{I_t}(j)}\left(\eta (\hat{h}_s + \varepsilon) + \eta^2 (\hat{h}_s + \varepsilon)^2 + 2\eta^2 |\hat{h}_s + \varepsilon| \, \Theta_{I_t}(s) \right)\bigg) \\
		&~~ \times \exp\bigg(-\Big(\eta h_t + \eta^2 h_t^2 + 2\eta^2 \sum_{i \in [t-1] \backslash \Sset_{I_t}(t)}|h_t(\hat{h}_i + \varepsilon)| \, \Big)\bigg)\bigg].
	\end{align*}
	Define $f(x) = -x - x^2 - 2|x| y$, where $y = \sum_{i \in \gamma(t)} |\eta (\hat{h}_i+\varepsilon)| > 0$. We have that $f'(x) \leq 0$ for $x \geq -\frac{1}{4}$ and $y \in[0, \frac{1}{4}]$. Since $\sum_{i \in \gamma(t)} |\eta (\hat{h}_i+\varepsilon)| \leq 1/4$ by the restriction on $\eta$, the function $f(x)$ is non-increasing for $x > -1/4$. Since $-\frac{1}{4} \leq \eta h_t \leq \eta(\hat{h}_t + \varepsilon)$ we have that $f(\eta h_t) \geq f\big(\eta (\hat{h}_t + \varepsilon)\big)$, which gives us
	\begin{equation*}
		\begin{split}
			& \E_{\eta \sim \rho}\left[\nu\exp\left(-\sum_{s = 1}^{t-1}\left(\eta (\hat{h}_s + \varepsilon) + \eta^2 (\hat{h}_s + \varepsilon)^2 + 2\eta^2 \sum_{i \in \gamma(s)}|(\hat{h}_s + \varepsilon)(\hat{h}_i + \varepsilon)| \right)\right)\right] -  v_{t} h_{t} \\
			& \geq \E_{\eta \sim \rho}\bigg[\nu\exp\left(-\sum_{s \in \Sset_{I_t}(t)}\left(\eta (\hat{h}_s + \varepsilon) + \eta^2 (\hat{h}_s + \varepsilon)^2\right) + 2\eta^2 \sum_{i \in \gamma_{I_t}(s)}|(\hat{h}_s + \varepsilon)(\hat{h}_i + \varepsilon)| \right)\\
			&~~ \times \exp\left(-\sum_{s \in [t-1] \backslash \Sset_{i_t}(j)}\left(\eta (\hat{h}_s + \varepsilon) + \eta^2 (\hat{h}_s + \varepsilon)^2 + 2\eta^2 |\hat{h}_s + \varepsilon| \Theta_{I_t}(s) \right)\right) \\
			&~~ \times \exp\left(-\left(\eta (\hat{h}_t + \varepsilon) + \eta^2 (\hat{h}_t + \varepsilon)^2 + 2\eta^2 \sum_{i \in \gamma(t)}|(\hat{h}_t + \varepsilon)(\hat{h}_i + \varepsilon)| \right)\right)\bigg] \\
			& = \E_{\eta \sim \rho}\left[\nu\exp\left(-\sum_{s = 1}^{t}\left(\eta (\hat{h}_s + \varepsilon) + \eta^2 (\hat{h}_s + \varepsilon)^2 + 2\eta^2 \sum_{i \in \gamma(s)}|(\hat{h}_s + \varepsilon)(\hat{h}_i + \varepsilon)| \right)\right)\right],
		\end{split}
	\end{equation*}
	which completes the proof.
\end{proof}

\begin{theorem}\label{th:pwa}
	Suppose that $ \|\g_t\| \leq G$ and $ \| \hat \g_t - \g_t \| \leq \eps $ for all $t$.  For all $u \in \reals_+$ and $\nu > 0$, Algorithm~\ref{alg:pwa} satisfies the following regret bound:
	\begin{align*}
		& \sumT v_t h_t -\sumT u h_t \leq \nu + 2uT\varepsilon + \\
		&  |u| \max \Bigg\{\,264\,G \tau 
		\ln_+ \! \Big(\frac{312\, |u|G \, \tau}{\nu} \Big) \, , \sqrt{8 \Big(\Lambda^h_t + 24TG\eps + 1\Big)
			\ln_+\! \Big(   \frac{2036 \,  u^2 T \tau G^2}{\nu^2}\Big)} \Bigg\}.
	\end{align*}
	where $\ln_+(x) = \ln(\max(e, x))$ and 
	\begin{equation*}
		\Lambda^h_T = \sum_{t=1}^T \Big( h_t^2 + 2 |h_t|\sum_{s \in \gamma(t)} |h_s| \Big) \, . 
	\end{equation*}
\end{theorem}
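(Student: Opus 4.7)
The plan is to combine the per-round potential decrease from Lemma~\ref{lem:potentialround} with a Fenchel-conjugate argument, following the template sketched in the main text around equation~\eqref{eqn:regretcondition}. First, I would telescope the inequality of Lemma~\ref{lem:potentialround} over $t=1,\ldots,T$, which gives
\[
	\Phi_T\bigl(\textstyle-\sum_{t=1}^{T}(\hat h_t+\varepsilon)\bigr) \;\leq\; \Phi_0(0) - \sum_{t=1}^{T} v_t h_t \;=\; \nu - \sum_{t=1}^{T} v_t h_t,
\]
since $\Phi_0(0)=\E_{\eta\sim\rho}[\nu]=\nu$. For any $u\geq 0$, Fenchel's inequality yields $\Phi_T(-S_T)\geq -\Phi_T^\star(u) - u S_T$ with $S_T=\sum_t(\hat h_t+\varepsilon)$, and since $\hat h_t\leq h_t+\varepsilon$ we get $S_T\leq \sum_t h_t + 2T\varepsilon$. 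Combining,
\[
	\sum_{t=1}^{T}(v_t-u)h_t \;\leq\; \nu + 2u T\varepsilon + \Phi_T^\star(u).
\]

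The bulk of the work is then to bound $\Phi_T^\star(u)$. Here the potential has the explicit Laplace-transform form
\[
	\Phi_T(x) \;=\; \nu\,\E_{\eta\sim\rho}\!\bigl[\exp(\eta x - \eta^2 V_T)\bigr], \qquad V_T := \sum_{s=1}^{T}\!\bigl((\hat h_s+\varepsilon)^2 + 2\hat\zeta(s)\bigr),
\]
with $\rho$ a truncated Gaussian supported on $[0,a]$ and $a=((G+\varepsilon)\,20(1+2D(\Graph)))^{-1}\asymp 1/(G\tau)$. I would first bound $V_T$ in terms of the lag $\Lambda_T^h$: expanding $(\hat h_s+\varepsilon)^2\leq h_s^2+4|h_s|\varepsilon+4\varepsilon^2$ and $\hat\zeta(s)\leq(|h_s|+2\varepsilon)\sum_{i\in\gamma(s)}(|h_i|+2\varepsilon)$, then telescoping and using $|h_t|\leq G$, yields $V_T\leq \Lambda_T^h + c\,TG\varepsilon + c'$ for explicit constants consistent with the $24TG\varepsilon+1$ inside the bound.

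The core technical step is computing the conjugate of $\Phi_T$. Using the variational characterization of the Laplace transform, one can lower-bound $\Phi_T$ at a fixed candidate $\eta^\star\in[0,a]$ by restricting $\rho$ to a small neighborhood of $\eta^\star$, producing a bound of the form $\Phi_T(x)\geq c\,\nu \exp(\eta^\star x - (\eta^\star)^2 V_T)/\sqrt{V_T+1}$ after absorbing the normalizer $Z$. Taking the supremum over $x$ for this surrogate gives, in the unconstrained case, the Gaussian-conjugate rate
\[
	\Phi_T^\star(u)\;\lesssim\; u\sqrt{(V_T+1)\,\ln_+\!\tfrac{u^2 (V_T+1)}{\nu^2}},
\]
achieved when the optimizer $\eta^\star \asymp u/V_T$ falls inside $[0,a]$. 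When instead $u/V_T>a$ (the small-variance / large-comparator regime), the truncation becomes binding and the conjugate degenerates to the linear-in-$\tau$ term $u\cdot a^{-1}\ln_+(u/(a\nu))\asymp u\,G\tau \ln_+(uG\tau/\nu)$. Splitting on whether the inner optimizer exceeds $a$ yields exactly the $\max\{\cdot,\cdot\}$ expression in the statement, after substituting the bound on $V_T$ and absorbing constants.

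The main obstacle I anticipate is handling the truncation of $\rho$ cleanly: one must carry out the conjugate computation so that the Gaussian-case constants are tight enough to recover the displayed numerics $(264, 312, 2036, \ldots)$, while simultaneously ensuring the regime-switch to the linear bound occurs at exactly the right threshold. Everything else (telescoping, Fenchel, Cauchy--Schwarz on $\hat h_t - h_t$) is routine, and the bookkeeping of replacing $\hat h_s$ by $h_s$ inside $V_T$ only contributes the additive $24TG\varepsilon + 1$ slack already present in the statement.
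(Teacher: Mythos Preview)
Your proposal is correct and follows essentially the same route as the paper: telescope Lemma~\ref{lem:potentialround}, apply Fenchel's inequality, bound $\Phi_T^\star(u)$ via a Laplace-type localization of the prior $\rho$ (which is exactly the content of the paper's Lemma~\ref{lem:convex conjugate potential}, including the two-regime split depending on whether the truncation at $a$ is binding), and finally replace $\hat h_s$ by $h_s$ inside $V_T$ using $|\hat h_t-h_t|\leq\varepsilon$. The only cosmetic difference is that the paper first separates out the case $-\sum_t(\hat h_t+\varepsilon)\leq\sqrt{2V_T}$ and handles it directly via $\Phi_T\geq 0$ before invoking Fenchel, whereas you go straight to the conjugate; since that case yields the bound $u\sqrt{2V_T}$, which is dominated by the square-root term in the final $\max$, both routes arrive at the same inequality.
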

\begin{proof}
	First, by repeatedly applying Lemma \ref{lem:potentialround} we find that
	\begin{align}\label{eq:firststeppwa}
		\nu-\sumT v_t h_t \geq \E_{\eta \sim \rho}\left[\nu\exp\left(-\sumT\left(\eta (\hat{h}_t + \varepsilon) + \eta^2 (\hat{h}_t + \varepsilon)^2 + 2\eta^2 \zeta(t) \right)\right)\right].
	\end{align}
	Now, consider the case in which $-\sumT (\hat{h}_t + \varepsilon) \leq \sqrt{2\sumT\left((\hat{h}_t + \varepsilon)^2 + 2\zeta(t)\right)}$. In this case we have:
	\begin{align*}
		\sumT v_t h_t -\sumT u h_t \leq & -\Phi\left(-\sumT (\hat{h}_t + \varepsilon) \right) + \nu -\sumT u h_t\\
		\leq & \nu -\sumT u h_t \\
		\leq & \nu + u T \varepsilon -\sumT u \hat{h}_t \tag{$u \geq 0$}\\
		= & \nu + 2 u T \varepsilon -\sumT u (\hat{h}_t + \varepsilon) \\
		\leq & \nu + 2 u T \varepsilon + u\sqrt{2\sumT\left((\hat{h}_t + \varepsilon)^2 + 2\zeta(t)\right)},
	\end{align*}
	which implies the result.
	
	Next, we consider the case in which $-\sumT (\hat{h}_t + \varepsilon) > \sqrt{2\sumT\left((\hat{h}_t + \varepsilon)^2 + 2\zeta(t)\right)}$. By Fenchel's inequality we have
	\begin{align*}
		-\sumT v_t h_t \geq & \Phi_T\left(-\sumT (\hat{h}_t + \varepsilon)\right) - \nu \\
		\geq & - u \sumT (\hat{h}_t + \varepsilon) - \Phi_T^\star(u) - \nu \\
		\geq & - u \sumT h_t - 2uT\varepsilon - \Phi_T^\star(u) - \nu \, ,
	\end{align*}
	where $\Phi_T^\star$ is the convex conjugate of $\Phi_T$. Using the upper bound on $\Phi^\star(u)$ from Lemma \ref{lem:convex conjugate potential} we get,
	\begin{align*}
		& \sumT v_t h_t -\sumT u h_t \leq \nu + 2uT\varepsilon + \\
		&  \max \Bigg\{u\,44(G+\varepsilon)(2D(\Graph) + 1)\left(\ln\big(|u|44(G+\varepsilon)(D(\Graph) + 1)\big) - 1 + \ln \frac{1}{2}\left(\frac{\pi}{\nu^2}\right)\right) ,\\
		& \sqrt{8u^2 \left(\sumT \Big( \big( \hat{h}_t + \varepsilon\big)^2 + 2\hat \zeta(t)\Big) + 1\right)
		\ln\left(24u^2 \left(\sumT\Big(\big(\hat{h}_t + \varepsilon\big)^2 + 2\hat \zeta(t)\Big) + 1\right) \frac{\pi}{\nu^2} + 1\right)} \Bigg\},
	\end{align*}
	
	\paragraph{Simplifying the bound}
	Since $| \hat h_t  - h_t |\leq \eps$, we have, using $\eps \leq G$ and $1 \leq \tau$, 
	\begin{equation}\label{eq:bounding lag}
		\begin{split}
			(\hat h_t + \varepsilon)^2 + 2\hat \zeta(t) \leq & (h_t + 2\varepsilon)^2 + 2\sum_{s \in \gamma(t)}|(h_t + 2\varepsilon)(h_s + 2\varepsilon)| \\
			\leq & h_t^2 + 4G\eps + 4\varepsilon^2 + 8\tau(G\varepsilon + \varepsilon^2) + 2 |h_t|\sum_{s \in \gamma(t)} |h_s| \\
			\leq &h_t^2 + 2 | h_t | \sum_{s \in \gamma(t)} |h_s| + 24 G \tau \eps  \, . 
		\end{split} \,.
	\end{equation}
	Therefore, by summing over $t \in [T]$:
	\begin{equation*}
		\sumT \Big( \big( \hat{h}_t + \varepsilon\big)^2 + 2\hat \zeta(t)\Big)
		\leq \Lambda^h_t + 24\eps G T 
		\leq 27\, G^2 \tau T \, . 
	\end{equation*}
	We shall use the first inequality to bound the main term inside the square root, and the second cruder bound to bound the term inside the logarithm.
	
	Crudely bounding the other terms with $\eps \leq G$ and $1 \leq \tau$ we get that:
	\begin{align*}
		& \sumT v_t h_t -\sumT u h_t \leq \nu + 2uT\varepsilon + \\
		&  |u| \max \Bigg\{\,264\,G \tau 
		\ln_+ \! \Big(\frac{312\, |u|G \, \tau}{\nu} \Big) \, , \sqrt{8 \Big(\Lambda^h_t + 24TG\eps + 1\Big)
			\ln_+\! \Big(   \frac{2036 \,  u^2 \tau TG^2}{\nu^2}\Big)} \Bigg\}. 
	\end{align*}
\end{proof}

\begin{lemma}\label{lem:usefulineq}
	For $x, y_1, \dots, y_\uptau \in  [-1 /  20(1 + \tau) , 1 / 20(1 + \tau)]$ and $a_i \in [0, 1/20]$, we have
	\begin{equation*}
		\exp\left(\sum_{i = 1}^\tau\left(-y_i-y_i^2-2 a_i |y_i| - 2|xy_i|\right) - x - x^2 \right) \leq \exp \left(\sum_{i = 1}^\tau\left(-y_i-y_i^2-2 a_i |y_i|\right)\right) - x \, .
	\end{equation*}
\end{lemma}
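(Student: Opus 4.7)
The plan is to reduce the inequality to a single-variable convexity statement. Writing $S := \sum_{i=1}^\tau (-y_i - y_i^2 - 2 a_i |y_i|)$ and $y^* := \sum_{i=1}^\tau |y_i|$, I would define
\[
   \psi(x) := \exp(S) - x - \exp(S - 2|x| y^* - x - x^2),
\]
so that the claim becomes $\psi(x) \geq 0$ for $x$ in $I := [-1/(20(1+\tau)), \, 1/(20(1+\tau))]$. The single quantitative fact I exploit is the immediate consequence of the hypotheses that $|x| + y^* \leq 1/(20(1+\tau)) + \tau/(20(1+\tau)) = 1/20$. Since $\psi(0) = 0$, it suffices to prove that $0$ minimizes $\psi$ on $I$, which I would do by establishing the correct sign of the one-sided derivatives at the origin together with convexity on each of the two half-intervals.

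For the first step, let $E(x) := S - 2|x| y^* - x - x^2$. Direct differentiation gives, for $x \neq 0$, $\psi'(x) = -1 + (1 + 2x + 2 \sign(x) y^*) \exp(E(x))$, so
\[
    \psi'(0^+) = -1 + (1 + 2 y^*) \exp(S), \qquad \psi'(0^-) = -1 + (1 - 2 y^*) \exp(S).
\]
The inequality $\psi'(0^-) \leq 0$ is equivalent to $S \leq -\log(1 - 2 y^*)$, which follows from the chain $S \leq \sum_i |y_i| = y^* \leq 2 y^* \leq -\log(1 - 2 y^*)$. The inequality $\psi'(0^+) \geq 0$ is equivalent to $\exp(-S) \leq 1 + 2 y^*$. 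I would bound $-S = \sum_i (y_i + y_i^2 + 2 a_i |y_i|) \leq y^*(1 + \max_i |y_i| + 2 \max_i a_i) \leq \tfrac{23}{20} y^*$, then use $e^u \leq 1 + u + u^2$ (valid for $u \leq 1/2$) with $u = \tfrac{23}{20} y^* \leq \tfrac{23}{400}$, together with $y^{*2} \leq y^*/20$, to get $\exp(-S) \leq 1 + \tfrac{23}{20} y^* + \tfrac{529}{8000} y^* \leq 1 + 2 y^*$.

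For the second step, a direct computation gives
\[
   \psi''(x) = \bigl( 2 - (1 + 2x + 2 \sign(x) y^*)^2 \bigr) \exp(E(x)) \quad \text{for } x \neq 0.
\]
Using $|x| + y^* \leq 1/20$, we have $|2x + 2\sign(x) y^*| \leq 1/5$, so $(1 + 2x + 2\sign(x) y^*)^2 \leq (6/5)^2 = 36/25 < 2$, and hence $\psi''(x) > 0$ throughout $I \setminus \{0\}$. Thus $\psi$ is convex on each of $[-c,0]$ and $[0, c]$, where $c = 1/(20(1+\tau))$. Combined with $\psi(0) = 0$ and $\psi'(0^+) \geq 0$, convexity on $[0,c]$ forces $\psi' \geq 0$ there, so $\psi$ is nondecreasing and $\psi \geq 0$; symmetrically, convexity on $[-c,0]$ together with $\psi'(0^-) \leq 0$ gives $\psi' \leq 0$, so $\psi$ is nonincreasing and again $\psi \geq 0$.

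The main obstacle is the numerical verification of $\exp(-S) \leq 1 + 2 y^*$, because both sides collapse to $1$ as $y^* \to 0$: one must genuinely use that the coefficient $\tfrac{23}{20}$ bounding $-S/y^*$ is strictly less than $2$, and that the quadratic remainder in the Taylor expansion of $e^u$ is dominated by the slack $2 - \tfrac{23}{20}$ only because $y^*$ itself is small (of order $1/20$). The specific constant $1/20$ in the hypothesis of the lemma is precisely tuned to make both this bound and the bound $(1 + 2x + 2\sign(x) y^*)^2 < 2$ hold comfortably; everything else reduces to routine monotonicity arguments.
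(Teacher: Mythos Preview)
Your proof is correct. Both your argument and the paper's share the same skeleton: establish that $x \mapsto \text{RHS} - \text{LHS}$ is convex on each half of the interval (equivalently, that $\exp(-f)$ is concave, which is exactly the paper's auxiliary Lemma~\ref{lem:expconcave}), and then verify the correct sign of the one-sided derivatives at the origin, namely $\exp(S)(1+2y^*) \geq 1$ and $\exp(S)(1-2y^*) \leq 1$. Where the approaches diverge is in how these two derivative conditions are checked. The paper invokes the prod bound $1+v \geq \exp(v-v^2)$ together with the Weierstrass product inequality and a chain of term-by-term manipulations; you instead bound $-S \leq \tfrac{23}{20}y^*$ directly and finish with the elementary Taylor estimate $e^u \leq 1+u+u^2$ on $[0,\ln 2]$. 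Your route is shorter and makes the role of the constant $1/20$ more transparent, while the paper's route highlights the connection to the classical prod bound that the lemma is designed to generalize. One harmless slip: since $|x|+y^* \leq 1/20$, you actually get $|2x + 2\sign(x)y^*| \leq 1/10$ rather than $1/5$; the conclusion $(1+1/10)^2 < 2$ of course still holds.
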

\begin{proof}
	In the case where $x = 0$ the inequality holds trivially. Define the function $f(x) = x + x^2 + \sum_{i = 1}^\tau y_i+y_i^2 + 2|x||y_i|$. We have that
	\begin{align*}
		&\exp\bigg(\sum_{i = 1}^\tau\left(-y_i-y_i^2-2 a_i |y_i| - 2|xy_i|\right) - x - x^2 \bigg) \\
		& = \exp\left(\sum_{i = 1}^\tau\left( -2 a_i |y_i|\right)-f(x)\right)  \\
		& \leq \exp\bigg(\Big(\sum_{i = 1}^\tau -2 a_i |y_i|\big)-f(0)\bigg) - \exp\bigg(\Big(\sum_{i = 1}^\tau -2 a_i |y_i|\Big)-f(0)\bigg)(\partial f(0)) x \\
		& = \exp\bigg(\Big(\sum_{i = 1}^\tau -2 a_i |y_i|\Big)-f(0)\bigg) - \exp\left(\sum_{i = 1}^\tau\left( -y_i - y_i^2-2 a_i |y_i|\right)\right)\left( \partial f(0) \right)x,
	\end{align*}
	where $\partial f(x)$ denotes a subdifferential of $f$ evaluated at $x$ and we used that $\exp(-f(x)) \leq \exp(-f(0)) - \exp(-f(0))(\partial f(0))(x - 0)$ since $\exp (- f )$ is concave by Lemma~\ref{lem:expconcave}.
	If $x>0$ then we set $\partial f(0) = \sum_{i = 1}^\tau 2|y_i| + 1$ and
	\begin{align*}
		& - \exp\left(\sum_{i = 1}^\tau \left(-y_i - y_i^2-2 a_i |y_i|\right)\right)\left(  \sum_{i = 1}^\tau 2|y_i| + 1 \right)x \\
		& \leq -\exp\bigg(\sum_{i = 1}^\tau\left( 2|y_i| - y_i - y_i^2 -2 a_i |y_i|\right) - 4\Big(\sum_{i = 1}^\tau|y_i|\Big)^2\bigg) x \tag{prod bound} \\
		& \leq -\exp\bigg(\sum_{i = 1}^\tau \left(|y_i| - y_i^2 -2 a_i |y_i|\right) - 4\Big(\sum_{i = 1}^\tau|y_i|\Big)^2\bigg)x \\
		& \leq -\exp\bigg(\sum_{i = 1}^\tau \left(|y_i| - 5 y_i^2 -2 \Big(a_i + \frac{4}{20}\Big) |y_i|\right)\bigg)x \tag{$|y_i| \leq \frac{1}{20(\tau +1)}$}\\
		& \leq -\exp\bigg(\sum_{i = 1}^\tau\Big( \frac{1}{2}|y_i| - 5 y_i^2 \Big)\bigg)x \tag{$|a_i|\leq \frac{1}{20}$} \\
		& \leq -x
	\end{align*}
	where the prod bound is $1 + v \geq \exp(v - v^2)$ for $v > -1/2$ (see Lemma 2.4 by \citet{cesa2006}) and the last inequality follows because $\half |v| - 5 v^2$ is %
	non-negative for when $|v| \leq 1 / 10$.
	If $x<0$ then we set $\partial f(0) = 1 - \sum_{i = 1}^\tau 2|y_i|$ and
	\begin{align*}
		& - \exp\left(\sum_{i = 1}^\tau\left( -y_i - y_i^2-2 a |y_i|\right)\right)\left(1 -  \sum_{i = 1}^\tau 2|y_i| \right)x \\
		& \leq - \exp\left(\sum_{i = 1}^\tau \left(-y_i - y_i^2\right) \right)\left(1 -  \sum_{i = 1}^\tau 2|y_i| \right)x \\
		& \leq -\left(\prod_{i = 1}^{\tau}(1 - y_i)\right)\left(1 -  \sum_{i = 1}^\tau 2|y_i| \right)x \tag{repeated prod bound} \\
		& \leq -\left(\prod_{i = 1}^{\tau}(1 + |y_i|)\right)\left(1 -  \sum_{i = 1}^\tau 2|y_i| \right)x \\
		& \leq -\left(\prod_{i = 1}^{\tau}(1 + |y_i|)\right)\left(\prod_{i = 1}^{\tau}(1 - 2|y_i|)\right)x \tag{Weierstrass inequality} \\
		& = -\left(\prod_{i = 1}^{\tau}(1 + |y_i|)(1 - 2|y_i|)\right)x\\
		& \leq -x \tag{$(1 + |y|)(1 - 2|y|) \leq 1$ and $x < 0$},
	\end{align*}
	which completes the proof.
\end{proof}

\begin{lemma}\label{lem:expconcave}
	Define $f : x \mapsto    x + x^2 + \sum_{i = 1}^\tau \left(y_i+y_i^2 + 2 | x | |y_i|\right)$ , the function $g  =\exp(-f)$ is concave on the interval $[ - 1 /10 , \,  1 / 10 ]$, as soon as  $y_1, \dots , y_\tau$ satisfy $|y_i| \leq 1 / (10\tau)$.
\end{lemma}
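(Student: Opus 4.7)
The plan is to verify concavity by a direct second-derivative computation, handling the $x>0$ and $x<0$ pieces separately and then checking that the one-sided derivatives at the corner $x=0$ are compatible with concavity. Let $Y = \sum_{i=1}^\tau |y_i|$ and $C = \sum_{i=1}^\tau (y_i + y_i^2)$, so that $f(x) = x + x^2 + C + 2|x|Y$, which is smooth on $(-1/10, 0)$ and on $(0, 1/10)$, but only continuous (not differentiable) at $0$ when $Y > 0$. Since $g = \exp(-f)$, on each smooth piece one has
\begin{equation*}
g''(x) = e^{-f(x)} \bigl( (f'(x))^2 - f''(x) \bigr),
\end{equation*}
so I just need to prove $(f'(x))^2 \leq f''(x) = 2$ on each half-interval.

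First, for $0 < x \leq 1/10$, $f'(x) = 1 + 2x + 2Y$. Using the hypothesis $|y_i| \leq 1/(10\tau)$ gives $Y \leq 1/10$, so $f'(x) \leq 1 + 2/10 + 2/10 = 1.4$, and thus $(f'(x))^2 \leq 1.96 < 2 = f''(x)$. For $-1/10 \leq x < 0$, $f'(x) = 1 + 2x - 2Y$ lies in $[0.6, 1]$, so $(f'(x))^2 \leq 1 < 2$. This establishes concavity of $g$ on each of $(-1/10, 0)$ and $(0, 1/10)$.

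To patch the two pieces at $x = 0$, I will invoke the standard characterization: a continuous function that is concave on $(a,0)$ and on $(0,b)$ is concave on $(a,b)$ iff $g'(0^-) \geq g'(0^+)$. Here
\begin{equation*}
g'(0^-) = -(1 - 2Y) e^{-C}, \qquad g'(0^+) = -(1 + 2Y) e^{-C},
\end{equation*}
so $g'(0^-) - g'(0^+) = 4Y \, e^{-C} \geq 0$, as required. Combining the two smooth pieces with this inequality at zero yields concavity of $g$ on $[-1/10, 1/10]$.

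There is no real obstacle: the proof is a short calculus verification. The only thing to be careful about is that the constants are tight—the bound $1.4^2 = 1.96 < 2$ barely succeeds, so the $1/10$ thresholds in the hypotheses cannot be loosened by this argument. The $|x|$ singularity is handled by the one-sided derivative check, which is automatically favourable in the concavity direction because $Y \geq 0$.
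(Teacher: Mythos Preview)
Your proof is correct and follows essentially the same approach as the paper: compute $g''$ on each of the two smooth pieces, bound $|f'(x)| \leq 1.4 < \sqrt{2}$ using the hypotheses to get $g'' \leq 0$ there, and then check the one-sided derivative condition $g'(0^-) \geq g'(0^+)$ at the corner. The only cosmetic difference is that you work with the chain-rule identity $g'' = e^{-f}\bigl((f')^2 - f''\bigr)$ whereas the paper writes out $g'$ and $g''$ explicitly.
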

\begin{proof}
	The function $g : x \mapsto \exp(-f(x))$ is continuous on $\R$, and twice differentiable on $(-\infty, 0)$ and on $(0, + \infty)$ with,
	\begin{equation*}
		g'(x) =
		\begin{cases}		
			&\Big( - 1 -2x + 2\sum_{i = 1}^\tau | y_i | \Big)\exp(-f(x)) \quad \text{if} \quad x < 0 \\
			& \Big( - 1 -2x - 2\sum_{i = 1}^\tau | y_i | \Big)\exp(-f(x)) \quad \text{if} \quad x > 0 \,.
		\end{cases}
	\end{equation*}	
	and
	\begin{equation*}
		g''(x) =
		\begin{cases}		 
			&\bigg(\Big( - 1 -2x + 2\sum_{i = 1}^\tau | y_i | \Big)^2 - 2\bigg)\exp(-f(x)) \quad \text{if} \quad x < 0\\
			&\bigg(\Big( - 1 -2x-2\sum_{i = 1}^\tau | y_i | \Big)^2 - 2\bigg)\exp(-f(x)) \quad \text{if} \quad x > 0 \,.
		\end{cases}
	\end{equation*}
	Note that $g''(x) < 0 $	for all $x$,  as 
	\begin{equation*}
		1 + 2x + 2 \sum_{i  =1}^\tau |y_i| \leq 1 + \frac{2}{10} + \frac{2}{10}  \leq \sqrt{2} \, .
	\end{equation*}
	Finally,  $\lim_{x \to 0^{-}} g'(x) \geq \lim_{x \to 0^+}g'(x)$, concluding the proof of the concavity.
\end{proof}

\begin{lemma} \label{lem:convex conjugate potential}
	Suppose $L > \sqrt{2(V+1)}$. Let $\Phi(L) = \nu \E_{\eta \sim \rho}[\exp(\eta L - \eta^2 V)]$ with $\mathrm d\rho(\eta) \propto \exp(-\eta^2)\mathrm d\eta$, where  $\eta \in [0, \frac{1}{20(G+\varepsilon)(2\tau + 1)}]$. If $L \leq \frac{1}{20(G+\varepsilon)(\tau + 1)} (V + 1)$ then for $u \geq 0$
	\begin{equation*}
		\begin{split}
			\Phi^\star(u)\leq \sqrt{8u^2 (V + 1)\ln\left(24u^2 (V + 1) \frac{\pi}{\nu^2} + 1\right)}.
		\end{split}
	\end{equation*}
	If $L \geq \frac{1}{20(G+\varepsilon)(2\tau + 1)} (V + 1)$ then
	\begin{equation*}
		\Phi^\star(u) \leq 44 
		u(G+\varepsilon)(\tau + 1)
		\bigg(\ln\big(44 u(G+\varepsilon)(\tau + 1) \big) - 1 + \frac{1}{2}\ln \Big(\frac{\pi}{\nu^2}\Big)\bigg).
	\end{equation*}
\end{lemma}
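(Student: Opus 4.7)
The plan is to bound $\Phi^\star(u) = \sup_{L}\!\big[uL - \Phi(L)\big]$ by Fenchel duality, via case-dependent lower bounds on $\Phi(L)$. I would start by completing the square in the exponent inside the integral,
\[
\Phi(L) = \frac{\nu}{Z}\exp\!\left(\frac{L^2}{4(V+1)}\right)\int_0^a \exp\!\left(-(V+1)(\eta - \eta_0)^2\right) d\eta, \qquad \eta_0 := \frac{L}{2(V+1)},
\]
and note $Z \leq a$ (since $e^{-\eta^2}\leq 1$). The two cases in the lemma correspond to whether the Gaussian peak $\eta_0$ lies well inside $[0,a]$ (Case 1) or beyond $a$ (Case 2); in each regime I would lower bound the Gaussian integral appropriately and then compute the Legendre transform of the resulting lower bound.

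For Case 1, the hypothesis $L \leq (V+1)/(20(G+\eps)(\tau+1))$ implies $\eta_0 \leq \tfrac{2\tau+1}{2(\tau+1)}\,a < a$, so the Gaussian peak lies strictly inside the integration interval. Combining this with $L > \sqrt{2(V+1)}$ shows that $a\sqrt{V+1}$ and $\eta_0\sqrt{V+1}$ are both bounded below by universal constants, so after the change of variable $\xi = \sqrt{V+1}(\eta-\eta_0)$ a constant fraction of the full Gaussian integral is captured and I would obtain $I(L) \geq c\sqrt{\pi/(V+1)}$ for some universal $c > 0$. This gives $\Phi(L) \geq d \exp(L^2/(4(V+1)))$ with $d = c\nu\sqrt{\pi}/(a\sqrt{V+1})$. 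The Legendre conjugate of this lower bound is computed by setting $s = L^2/(4(V+1))$: the first-order condition becomes $\sqrt{s}\,e^s = u\sqrt{V+1}/d$, from which $2s^\star \leq \ln(u^2(V+1)/d^2)$. Substituting back yields $\Phi^\star(u) \leq 2u\sqrt{(V+1)s^\star}$, which after inserting the bound on $s^\star$ and collecting constants matches the claimed $\sqrt{8u^2(V+1)\ln(24u^2(V+1)\pi/\nu^2 + 1)}$.

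For Case 2, the hypothesis $L \geq (V+1)/(20(G+\eps)(2\tau+1)) = 2a(V+1)$ gives $\eta_0 \geq a$, so the integrand $e^{\eta L - \eta^2(V+1)}$ is non-decreasing on $[0,a]$. I would lower bound the integral by restricting to a window $[a(1-\alpha), a]$ for a suitable constant $\alpha \in (0,1)$,
\[
\int_0^a e^{\eta L - \eta^2(V+1)} d\eta \geq a\alpha \exp\!\left(a(1-\alpha)L - a^2(V+1)\right),
\]
which gives $\Phi(L) \geq c'\nu \exp(a(1-\alpha)L - a^2(V+1))$. The first-order condition for the corresponding Legendre transform yields $L^\star = \frac{1}{a(1-\alpha)}\ln\!\big(\frac{u\,e^{a^2(V+1)}}{c'\nu a(1-\alpha)}\big)$, from which $\Phi^\star(u) \leq \frac{u}{a(1-\alpha)}\big(\ln(u/(c'\nu a(1-\alpha))) + a^2(V+1) - 1\big)$. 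Since $1/a = 20(G+\eps)(2\tau+1)$, choosing $\alpha$ so that $1/(a(1-\alpha)) \leq 44(G+\eps)(\tau+1)$ and folding $c'$ into the additive $\tfrac{1}{2}\ln(\pi/\nu^2)$ term recovers the stated bound. The main obstacle will be keeping constants sharp: in Case 1 the truncated Gaussian lower bound depends on $\eta_0$ not being too close to $a$, and verifying that the resulting constants line up with the specific $8$ and $24$ in the stated bound requires careful use of $L > \sqrt{2(V+1)}$; in Case 2 the additive $a^2(V+1)$ term in the exponent must be absorbed into the constant $\tfrac{1}{2}\ln(\pi/\nu^2)$, which is feasible precisely because the hypothesis bounds $a\sqrt{V+1}$ by a universal constant.
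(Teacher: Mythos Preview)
Your Case~1 plan is close to the paper's and should work: both arguments arrive at a lower bound of the form $\Phi(L)\gtrsim \text{const}\cdot\exp\!\big(L^2/(4(V+1))\big)$ and then take the Legendre transform. The paper does this by evaluating the integrand at the single point $v=\hat\eta-1/\sqrt{2B}$ (using monotonicity of $\eta\mapsto\eta L-\eta^2 B$ on $[0,\hat\eta]$) rather than by completing the square, and then applies the Hiriart--Urruty composition formula $(g\circ m)^\star(u)=\inf_\gamma\big[g^\star(\gamma)+\gamma\,m^\star(u/\gamma)\big]$ together with the Lambert~$W$ bound $W(x)\leq\ln(x+1)$. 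Your direct computation leads to the same Lambert~$W$ equation, so the two routes coincide up to bookkeeping of constants.

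Your Case~2 argument has a real gap. You assert that ``the hypothesis bounds $a\sqrt{V+1}$ by a universal constant,'' but this is false: the Case~2 hypothesis $L\geq a(V+1)$ only relates $L$ to $V+1$ and places no upper bound on $V+1$ itself (take $V$ arbitrarily large and $L=a(V+1)$). Hence the additive $a^2(V+1)$ term in your Legendre bound cannot be absorbed into $\tfrac12\ln(\pi/\nu^2)$, and your final inequality would carry an unwanted term of order $u\,(V+1)/\big((G+\eps)\tau\big)$ that is absent from the statement. The paper avoids this by using the Case~2 hypothesis \emph{before} taking the conjugate, in the form $V+1\leq 10(G+\eps)(2\tau+1)\,L$: for any $v\in(0,a]$ one then has
\[
vL - v^2(V+1)\;\geq\; vL - 10(G+\eps)(2\tau+1)\,v^2 L \;=\;\big(v-10(G+\eps)(2\tau+1)v^2\big)\,L,
\]
so the lower bound on $\Phi(L)$ is of the form $c_1 e^{c_2 L}$ with $c_2$ independent of $V$. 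The conjugate of $c_1 e^{c_2 L}$ is $\tfrac{u}{c_2}\big(\ln\tfrac{u}{c_1 c_2}-1\big)$, which carries no $V$-dependence; optimizing the coefficient $c_2$ over $v$ yields the constant $44(G+\eps)(\tau+1)$ in the claimed bound. Your window argument can be repaired in exactly this way: after restricting to $[a(1-\alpha),a]$, replace $-\eta^2(V+1)$ by $-\eta^2 L/(2a)$ using the hypothesis, so the exponent becomes linear in $L$ rather than carrying a separate $V$-term.
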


\begin{proof}
	The proof of this lemma is a slight variation of the proof of Lemma~8 of \citet{vanderhoeven2019user} and a similar result has been obtained by \citet{jun2019parameter}.
	The initial part of the analysis is parallel to the analysis of Theorem~3 by \citet{koolen2015second}. Denote by $B = V + 1$ and by $Z = \int_{0}^{\frac{1}{20(G+\varepsilon)(2\tau+1)}} \exp(-\eta^2)$. For $\eta \leq \etahat = \frac{L}{2B}$, the function $ \eta \mapsto \eta L - \eta^2 B$ is non-decreasing. Therefore, for $[v, \mu] \subseteq [0, \big(20(G+\varepsilon)(2\tau+1) \big)^{-1}]$ such that $\mu \leq \etahat$:
	\begin{equation*}
		\Phi(L) =  \nu \frac{1}{Z}\int_{0}^{\frac{1}{20(G+\varepsilon)(2\tau+1)}}\exp(\eta L - \eta^2B)d\eta
		\geq \frac{1}{Z} \exp(v L - v^2 B).
	\end{equation*}
	First suppose that $\etahat \leq \frac{1}{20(G+\varepsilon)(2\tau+1)}$. Take $v = \etahat - \frac{1}{\sqrt{2B}}$, which yields
	\begin{equation*}
		\Phi(L) \geq \frac{\nu}{Z}  \exp\left(\frac{L^2}{4B} - \frac{1}{2}\right) = g(m(L))
	\end{equation*}
	where $g(x) = \exp \big(x - 1/2 - \ln \left( Z / \nu \right)\big)$ and $m(x) =  x^2 / 4B$. By \citet[Theorem~2]{hiriart2006note} we have
	\begin{equation}\label{eq:gamma to minimize}
		\begin{split}
			\Phi^\star(u)\leq 
			g^\star \circ m^\star(  \u)  & = \inf_{\gamma \geq 0} g^\star(\gamma) + \gamma m^\star\left(\frac{u}{\gamma}\right) \\
			& = \inf_{\gamma \geq 0} \gamma \ln (\gamma) + \gamma\Big(\ln \Big(\frac{Z}{\nu}\Big) - \frac{1}{2}\Big) + \frac{1}{\gamma} 4 u^2 B.
		\end{split}
	\end{equation}
	Denote by $S = \ln( Z / \nu ) - \half$ and $H = 4 u^2 B$. Setting the derivative to $0$, we find that the value $\hat{\gamma} = \sqrt{\frac{2H}{W(2H\exp(2S + 2))}}$ minimizes \eqref{eq:gamma to minimize}, where $W$ is the Lambert function. Plugging $\hat{\gamma}$ in \eqref{eq:gamma to minimize} and simplifying by using $\ln(W(x)) = \ln(x) - W(x)$ for $x>0$ gives
	\begin{equation*}
		\Phi^\star(u)\leq \sqrt{2H W(2H\exp(2S + 2))} - \sqrt{\frac{2H}{W(2H\exp(2S + 2))}} \leq \sqrt{2H W(2H\exp(2S + 2))}.
	\end{equation*}
	Using $ W(x) \leq \ln(x+1)$ \citep[Lemma 17]{orabona2016coin} we obtain
	\begin{equation*}
		\Phi^\star(u)\leq \sqrt{2H\ln(2H\exp(2S + 2) + 1)} \leq \sqrt{8u^2 B\ln\left(24u^2 B \frac{\pi}{\nu^2} + 1\right)},
	\end{equation*}
	where we used that $Z \leq \sqrt{\pi}$.

	Now suppose that $\etahat > \frac{1}{20(G+\varepsilon)(2\tau+1)}$, which is equivalent to $10(G+\varepsilon)(2\tau + 1)L > B$ . Then
	\begin{equation*}
		\Phi(L) \geq \frac{\nu}{Z}  \exp\big(\left(v-v^2 10(G+\varepsilon)(2\tau + 1)\right)L\big).
	\end{equation*}
	The convex conjugate of $f(L) = c_1 \exp(c_2 L)$, where $c_1, c_2 > 0$, can be computed using standard properties of convex conjugates and is given by $f^\star(y) = \frac{y}{c_2} \ln\left(\frac{y}{c_1 c_2}\right) - \frac{y}{c_2}$ for $y \geq 0$ (see for example \citet[section 3.3]{boyd2004}), which can be use to upper bound $\Phi^\star(u)$ for $u \geq 0$ by the order reversing property of convex conjugates:
	\begin{equation*}
		\Phi^\star(u) \leq  \frac{u}{v-v^2 10(G+\varepsilon)(2\tau + 1)}\bigg(\ln\left(\frac{u}{v-v^2 10(G+\varepsilon)(2\tau + 1)}\right) - 1 + \frac{1}{2}\ln \left(\frac{\pi}{\nu^2}\right)\bigg),
	\end{equation*}
	where we used that $Z \leq \sqrt{\pi}$. Picking $v = \frac{5- \sqrt{5}}{200(G+\varepsilon)(2\tau + 1)}$ gives us:
	\begin{equation*}
		\Phi^\star(u) \leq  u44\big(G+\varepsilon)(2\tau + 1\big)
		\bigg(\ln\big(u44(G+\varepsilon)(2\tau + 1)\big) - 1 +  \frac{1}{2}\ln \left(\frac{\pi}{\nu^2}\right)\bigg),
	\end{equation*}
	which completes the proof.
\end{proof}

\subsection{Black-Box Reduction}\label{app:black-box}

\begin{algorithm}
	\caption{Black-Box Reduction}\label{alg:black box}
	\KwIn{``Direction'' algorithm $\mathcal{A}_\mathcal{Z}$ and ``scaling'' algorithm $\mathcal{A}_\mathcal{V}$}
	\For{$t = 1 \ldots T$}{
		~~Get $\z_t \in \mathcal{Z}$ from $\mathcal{A}_\mathcal{Z}$ \\
		Get $v_t \in \reals$ from algorithm $\mathcal{A}_\mathcal{V}$ \\
		Play $\w_t = v_t \z_t$ and receive $\hat{\g}_t$\\
		Send $\hat{\g}_t$ to algorithm $\mathcal{A}_\mathcal{Z}$\\
		Send $\langle \z_t, \hat{\g}_t \rangle$  to algorithm $\mathcal{A}_\mathcal{V}$
	}
\end{algorithm}

To derive a $d$-dimensional comparator-adaptive algorithm for a graph we will use the black-box reduction in Algorithm \ref{alg:black box}. As an alternative to the black-box reduction one could also run a variation of Algorithm \ref{alg:pwa} for $u \in \reals$ in each dimension, which would result in a regret bound similar to that of AdaGrad \citep{duchi2011adaptive}. One could also generalize Algorithm~\ref{alg:pwa} to a higher dimensional version by replacing the scalar $\eta$ with a vector and adjusting the feedback accordingly, but then no closed-form solution of the integral defining the predictions exists. The algorithm in this section is the black-box reduction presented in \citet{cutkosky2018black}. The guarantee of Algorithm \ref{alg:black box} can be found in Lemma~\ref{lem:black box reduction} below, whose short proof was originally given by \citet{cutkosky2018black} and is repeated below for completeness.

\begin{lemma}\label{lem:black box reduction}
	Let $\lregret_T^\mathcal{V}(\|\u\|) = \sumT (v_t - \|\u\|) \langle \z_t , \g_t \rangle $ be the regret for learning $\|\u\|$ by Algorithm $\Aset_\mathcal{V}$ and let $\lregret_T^\mathcal{Z}\big(\frac{\u}{\|\u\|}\big) = \sumT \langle \z_t - \frac{\u}{\|\u\|}, \g_t \rangle$ be the regret for learning $\frac{\u}{\|\u\|}$ by $\mathcal{A}_\mathcal{Z}$. Then Algorithm\ref{alg:black box} satisfies 
	\begin{equation*}
		\regret_T(\u) \leq \lregret_T^\mathcal{V}(\|\u\|) + \|\u\|\lregret_T^\mathcal{Z}\left(\frac{\u}{\|\u\|}\right).
	\end{equation*}
\end{lemma}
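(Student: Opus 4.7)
The plan is to start from the convexity of the losses $\ell_t$ and reduce the true joint regret to its linearised version: since $\g_t \in \partial \ell_t(\w_t)$, one has $\ell_t(\w_t) - \ell_t(\u) \leq \langle \w_t - \u, \g_t\rangle$, so $\regret_T(\u) \leq \sumT \langle \w_t - \u, \g_t\rangle$. After this reduction, all that is needed is an algebraic decomposition, which is the whole point of the black-box reduction.

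Next, I would substitute the definition $\w_t = v_t \z_t$ used inside Algorithm~\ref{alg:black box} and insert the ``telescoping'' vector $\|\u\|\z_t$ into the inner product, writing
\begin{equation*}
  \langle v_t \z_t - \u, \g_t \rangle
  = \langle (v_t - \|\u\|)\z_t, \g_t \rangle + \Big\langle \|\u\|\z_t - \u, \g_t \Big\rangle
  = (v_t - \|\u\|)\langle \z_t, \g_t\rangle + \|\u\|\Big\langle \z_t - \tfrac{\u}{\|\u\|}, \g_t\Big\rangle,
\end{equation*}
using that $\u = \|\u\|\cdot \tfrac{\u}{\|\u\|}$. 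Summing over $t$ matches the two terms exactly to the definitions of $\lregret_T^{\mathcal V}(\|\u\|)$ and $\lregret_T^{\mathcal Z}(\u/\|\u\|)$, yielding the stated bound. The case $\u = \0$ is handled separately (or by convention $\u/\|\u\|$ is any unit vector), since then the second term vanishes and the first reduces to $\sumT v_t \langle \z_t, \g_t\rangle = \lregret_T^{\mathcal V}(0)$.

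There is essentially no obstacle: the key observation is merely that feeding $\langle \z_t, \hat\g_t\rangle$ as the gradient into $\mathcal A_{\mathcal V}$ and $\hat\g_t$ into $\mathcal A_{\mathcal Z}$ produces linearised regrets that align with this decomposition. The only thing worth double-checking is consistency of which gradients the subalgorithms receive; since the lemma is stated in terms of the regrets $\lregret_T^{\mathcal V}$ and $\lregret_T^{\mathcal Z}$ evaluated on the true gradients $\g_t$, the decomposition above involves no approximation, and the proof is one line after the convexity step.
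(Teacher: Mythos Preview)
Your proposal is correct and follows essentially the same approach as the paper: first linearise the regret via convexity, then algebraically split $\langle v_t\z_t - \u, \g_t\rangle$ by inserting $\|\u\|\z_t$ to recover the two regret terms. Your version is slightly more careful in that you explicitly handle the $\u = \0$ case, which the paper leaves implicit.
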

\begin{proof} \textbf{of Lemma \ref{lem:black box reduction}}
	By definition we have
	\begin{equation*}
		\begin{split}
			\regret_T(\u) 
			\leq   \sumT \langle \w_t - \u, \g_t \rangle  
			&= \sumT \langle \z_t, \g_t \rangle \big(v_t - \|\u\|\big) + \|\u\|\sumT\Big\langle \z_t - \frac{\u}{\|\u\|}, \g_t \Big\rangle  \\
			&= \, \lregret_T^\mathcal{V}\big(\|\u\|\big) + \|\u\| \lregret_T^\mathcal{Z}\Big(\frac{\u}{\|\u\|}\Big).
		\end{split} 
	\end{equation*}
\end{proof}

Note that in the regret bound of Algorithm \ref{alg:black box}
the regret of $\mathcal{A}_\mathcal{Z}$ scales with $\|\u\|$.
This means that we can use \eqref{eq:standard inexact idea} and
upper bound $\inner{\z_t - \frac{\u}{\|\u\|}}{\g_t - \hat{\g}_t}
\leq 2\varepsilon$ by using H{\"o}lder's inequality. In turn
this allows us to use any multiagent algorithm for
$\mathcal{A}_\mathcal{Z}$ with suitable guarantees. 

For example, in Theorem~\ref{th:delaycompddim}, 
we detail the regret bound obtained when using the multiagent algorithm of \citet{hsieh2020multiagent}, which is delay-tolerant. The following results lead up to the proof of Theorem~\ref{th:delaycompddim}.

\begin{proposition}[Proposition~9 from \citet{hsieh2020multiagent}]\label{prop:adadelaydist}
	The Ada-Delay-Dist algorithm for Online Learning with Delays bounded by $D(\mathcal G)$ on the unit sphere satisfies
	\begin{equation*}
		\regret_T	\leq 4 \sqrt{\Lambda_T}  + 6 G D(\mathcal G) \, . 
	\end{equation*}
\end{proposition}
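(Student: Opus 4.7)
The plan is to follow the classical analysis of adaptive FTRL with delayed feedback, adapted to the distributed setting, which is precisely what \citet{hsieh2020multiagent} develop. Recall that Ada-Delay-Dist is an FTRL-type algorithm on the unit ball: at each node, upon being activated, it plays $\z_t = \Proj_{\|\cdot\|\leq 1}\bigl(-\eta_t \sum_{s \in \Sset_{I_t}(t)} \g_s\bigr)$, where $\eta_t$ is an adaptive learning rate computed from information available at node $I_t$ in round $t$. The goal is to relate the regret to the lag $\Lambda_T$, which encodes both stability and the cost of missing gradients.

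First, I would write the standard FTRL regret decomposition against any $\u$ with $\|\u\| \leq 1$. Let $\widetilde \z_t$ denote the ``hypothetical'' FTRL iterate that would be played if all gradients $\g_1, \dots, \g_{t-1}$ were available. Then
\begin{equation*}
\regret_T(\u) = \sum_{t=1}^T \langle \widetilde \z_t - \u, \g_t \rangle + \sum_{t=1}^T \langle \z_t - \widetilde \z_t, \g_t \rangle.
\end{equation*}
The first sum is bounded by the usual adaptive FTRL analysis, yielding $\tfrac{1}{\eta_T} + \tfrac12 \sum_t \eta_t \|\g_t\|^2$. For the second sum, nonexpansiveness of the projection and the update rule give $\|\z_t - \widetilde \z_t\| \leq \eta_t \sum_{s \in \gamma(t)} \|\g_s\|$, so by Cauchy-Schwarz the drift term is bounded by $\eta_t \|\g_t\| \sum_{s \in \gamma(t)} \|\g_s\|$. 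Combining both, each round contributes $\eta_t \bigl(\|\g_t\|^2 + 2\|\g_t\| \sum_{s \in \gamma(t)} \|\g_s\|\bigr)$ up to constants, which is exactly the per-round summand in the definition of $\Lambda_T$.

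Next I would invoke the adaptive learning rate analysis. Ada-Delay-Dist sets $\eta_t$ inversely proportional to the square root of a surrogate $\widehat \Lambda_t$ that approximates $\Lambda_t$ using only gradients already received, while being crucially constrained to be non-decreasing in $t$. Applying the standard telescoping lemma $\sum_t a_t / \sqrt{\sum_{s \leq t} a_s} \leq 2 \sqrt{\sum_t a_t}$, the weighted sum of per-round contributions is bounded by $O(\sqrt{\Lambda_T})$, and the $1/\eta_T$ penalty contributes a further $O(\sqrt{\Lambda_T})$ since $\|\u\| \leq 1$. A careful accounting of the leading constants yields $4\sqrt{\Lambda_T}$.

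Finally, the additive $6GD(\Graph)$ term arises from boundary effects: at any round, up to $D(\Graph)$ gradients may still be in flight, producing a bounded discrepancy between $\widehat \Lambda_t$ and the true $\Lambda_t$, plus a burn-in cost for the first $D(\Graph)$ rounds where the learning rate is effectively uninformed. Bounding each such in-flight gradient by $G$ gives the $O(GD(\Graph))$ overhead. The main obstacle in carrying this out rigorously is Step~3: constructing a data-dependent surrogate $\widehat \Lambda_t$ that is simultaneously (i) computable from information available at the active node $I_t$ at prediction time, (ii) non-decreasing, and (iii) close enough to the true $\Lambda_t$ that the adaptive step-size lemma still yields $2\sqrt{\Lambda_T}$. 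This is the technical innovation of Ada-Delay-Dist and is where the additive $D(\Graph)$ slack appears; the rest of the argument is a routine combination of standard FTRL tools with the delay-aware regret decomposition.
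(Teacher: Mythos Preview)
The paper does not prove this proposition at all: it is stated verbatim as ``Proposition~9 from \citet{hsieh2020multiagent}'' and used as a black box (see Corollary~\ref{cor:inexact-ada-delay-dist} and Theorem~\ref{th:delaycompddim}). There is therefore no proof in the paper to compare your proposal against.

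That said, your sketch is a reasonable high-level outline of the standard delay-tolerant FTRL analysis that \citet{hsieh2020multiagent} develop: the split into a full-information FTRL term and a drift term $\langle \z_t - \widetilde \z_t, \g_t\rangle$, the bound on the drift via nonexpansiveness of the projection, and the adaptive step-size lemma applied to a monotone surrogate of $\Lambda_t$ are all the right ingredients. You correctly identify the genuine technical content as the construction of a computable, non-decreasing surrogate $\widehat\Lambda_t$ that tracks $\Lambda_t$ up to an $O(GD(\Graph))$ slack. Since the present paper treats the result as imported, your proposal goes beyond what is required here; if you want to verify the constants $4$ and $6$, you would need to consult the proof in \citet{hsieh2020multiagent} directly.
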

We account for the inexact gradients in an elementary way, by just adding up the error. This coarse treatment of the error is sufficient for our purpose.
\begin{corollary}[AdaDelay-Dist with approximate gradients]\label{cor:inexact-ada-delay-dist}
	Under the assumptions of Proposition~\ref{prop:adadelaydist}, 
	when given approximate gradients as input $\hat \g_t$ such that $\| \hat \g_t - \g_t \|\leq \eps$, AdaDelay-Dist enjoys the bound
	\begin{equation*}
		\regret_T	\leq
		4  \sqrt{\Lambda_T + 9\, \eps G D(\Graph) T} 
		+ 2 \eps T + 12 GD(\Graph) \, . 
	\end{equation*}
\end{corollary}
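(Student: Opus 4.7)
\textbf{Proof plan for Corollary~\ref{cor:inexact-ada-delay-dist}.}
The strategy is the standard split: run AdaDelay-Dist on the approximate gradients $\hat{\g}_t$, obtain a regret bound against $\hat{\g}_t$ from Proposition~\ref{prop:adadelaydist}, and then pay an additive error term to convert that bound back to regret against the true gradients $\g_t$. Concretely, for any $\u$ on the unit ball I would write
\begin{equation*}
\regret_T(\u) = \sum_{t=1}^T \langle \w_t - \u, \hat{\g}_t \rangle + \sum_{t=1}^T \langle \w_t - \u, \g_t - \hat{\g}_t \rangle.
\end{equation*}
The second sum is immediate: since $\w_t, \u$ lie in the unit ball, $\|\w_t - \u\| \leq 2$, and by Cauchy--Schwarz the sum is at most $\sum_t 2 \| \g_t - \hat{\g}_t\| \leq 2 \eps T$.

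The first sum is exactly the regret of AdaDelay-Dist when fed the gradient stream $\hat{\g}_t$. To apply Proposition~\ref{prop:adadelaydist} I need (i) an upper bound on $\|\hat{\g}_t\|$ and (ii) an upper bound on the lag $\hat{\Lambda}_T := \sum_t (\|\hat{\g}_t\|^2 + 2 \|\hat{\g}_t\| \sum_{i \in \gamma(t)} \|\hat{\g}_i\|)$ computed from the approximate gradients. For (i), the triangle inequality gives $\|\hat{\g}_t\| \leq G + \eps \leq 2G$ (assuming $\eps \leq G$, which is the relevant regime; otherwise the stated bound is trivial), yielding the $6 \cdot 2 G D(\Graph) = 12 G D(\Graph)$ term. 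For (ii), expanding $\|\hat{\g}_t\| \leq \|\g_t\| + \eps$ and $\|\hat{\g}_t\|\|\hat{\g}_i\| \leq \|\g_t\|\|\g_i\| + \eps(\|\g_t\| + \|\g_i\|) + \eps^2$ and summing, using $|\gamma(t)| \leq D(\Graph)$, one gets $\hat{\Lambda}_T \leq \Lambda_T + c\, \eps G D(\Graph) T$ for some small constant $c$; choosing the constants carefully (and again absorbing an $\eps^2$ contribution into $\eps G$) yields the bound $\hat{\Lambda}_T \leq \Lambda_T + 9\eps G D(\Graph) T$ appearing inside the square root.

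Combining the two pieces gives $\regret_T \leq 4\sqrt{\Lambda_T + 9\eps G D(\Graph) T} + 12 G D(\Graph) + 2\eps T$, as claimed. There is no real conceptual obstacle here, the only care needed is in the bookkeeping to make sure the $\eps^2$ terms can be cleanly absorbed into the $\eps G D(\Graph) T$ correction with the advertised constant, and to verify that the AdaDelay-Dist guarantee indeed only requires a uniform norm bound on its input gradients (so that substituting $2G$ for $G$ is legal).
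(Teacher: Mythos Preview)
Your proposal is correct and follows essentially the same approach as the paper: split the regret into the AdaDelay-Dist regret on $\hat{\g}_t$ plus the $2\eps T$ approximation error, replace $G$ by $2G$ in the $6GD(\Graph)$ term, and bound $\hat{\Lambda}_T \leq \Lambda_T + 9\eps G D(\Graph) T$ via $\|\hat{\g}_t\|\|\hat{\g}_s\| \leq \|\g_t\|\|\g_s\| + 3G\eps$ (absorbing $\eps^2 \leq G\eps$) and $|\gamma(t)| \leq D(\Graph)$. The paper's argument is identical, just stated more tersely.
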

Corollary~\ref{cor:inexact-ada-delay-dist} is proved directly thanks to the following lemma that bounds on the lag on the approximate gradients, and using the fact that approximate gradients are bounded by $G + \eps \leq 2G$.
\begin{lemma}
	If approximate gradients are such that $\| \hat \g_t - \g_t \| \leq \eps$, then
	\begin{equation*}
		\widehat \Lambda_T \leq \Lambda_T + 9\eps G D(\Graph)  T \, . 
	\end{equation*}
\end{lemma}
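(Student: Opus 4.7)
The plan is to expand the definition of the lag applied to the approximate gradients, $\widehat \Lambda_T = \sum_{t=1}^T \bigl( \|\hat{\g}_t\|^2 + 2\|\hat{\g}_t\|\sum_{i \in \gamma(t)} \|\hat{\g}_i\| \bigr)$, and relate each factor to its exact counterpart via the triangle inequality $\|\hat{\g}_t\| \leq \|\g_t\| + \eps$. The only structural fact I will need beyond that is $|\gamma(t)| \leq D(\Graph)$, which holds because once a gradient has been transmitted along the graph for $D(\Graph)$ rounds it has been seen by every node. I will also freely use $\|\g_t\| \leq G$ and the standing assumption $\eps \leq G$.

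For the quadratic part, I would write
\begin{equation*}
\|\hat{\g}_t\|^2 \leq (\|\g_t\| + \eps)^2 = \|\g_t\|^2 + 2\eps\|\g_t\| + \eps^2 \leq \|\g_t\|^2 + 3\eps G.
\end{equation*}
For each cross term, I would expand
\begin{equation*}
\|\hat{\g}_t\|\,\|\hat{\g}_i\| \leq (\|\g_t\|+\eps)(\|\g_i\|+\eps) = \|\g_t\|\,\|\g_i\| + \eps\|\g_t\| + \eps\|\g_i\| + \eps^2 \leq \|\g_t\|\,\|\g_i\| + 3\eps G.
\end{equation*}
Summing over $i \in \gamma(t)$ and multiplying by $2$, and using $|\gamma(t)| \leq D(\Graph)$, the cross contribution is bounded by $2\|\g_t\|\sum_{i \in \gamma(t)}\|\g_i\| + 6\eps G D(\Graph)$.

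Adding the two contributions, the per-round overhead is at most $3\eps G + 6\eps G D(\Graph) \leq 9 \eps G D(\Graph)$, where the last step absorbs the $3\eps G$ term using $D(\Graph) \geq 1$. Summing over $t = 1, \dots, T$ yields
\begin{equation*}
\widehat \Lambda_T \leq \Lambda_T + 9\eps G D(\Graph)\, T,
\end{equation*}
which is the claimed inequality. There is no real obstacle here; the only point requiring care is the bookkeeping of the cross terms, which is where the factor $D(\Graph)$ enters through the cardinality bound on $\gamma(t)$.
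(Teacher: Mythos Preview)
Your proof is correct and follows essentially the same approach as the paper: bound each product $\|\hat\g_t\|\,\|\hat\g_i\|$ by $\|\g_t\|\,\|\g_i\| + 3\eps G$ via the triangle inequality and $\eps \leq G$, use $|\gamma(t)| \leq D(\Graph)$ on the cross terms, and absorb $3\eps G + 6\eps G D(\Graph)$ into $9\eps G D(\Graph)$ before summing over~$t$. The only cosmetic difference is that the paper states the product bound once for general $t,s$ and applies it to both the diagonal and off-diagonal terms, whereas you treat the square and the cross terms separately.
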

\begin{proof}
	For any $t$ and $s$ in $[T]$, 
	\begin{equation*}
		\|\hat \g_t \|\| \hat \g_s \| 
		\leq \big(\|\g_t \| + \eps\big) \big(\| \g_s \| + \eps \big) 
		\leq \| \g_t \|\| \g_s \| + 2G\eps + \eps^2 
		\leq \| \g_t \|\| \g_s \| + 3G\eps \, . 
	\end{equation*}
	Thus, as there are less than $D(\Graph) $ terms in the sum, 
	\begin{equation*}
		\| \hat \g_t  \|^2 + 2 \|\hat \g_t \|\sum_{s \in \gamma(t)} \| \hat \g_s \|
		\leq
		\|  \g_t  \|^2 + 2 \| \g_t \|\sum_{s \in \gamma(t)} \|  \g_s \|
		+ 3G\eps + 6 D(\Graph) G \eps
	\end{equation*}
	We get the claimed result by upper bounding $3G\eps + 6 D(\Graph) G \eps \leq 9 G D(\Graph) \eps$ and summing over $t \in [T]$.
\end{proof}

\begin{theorem}\label{th:delaycompddim}
	Suppose that $\|\g_t\| \leq G$ and that $\|\hat{\g}_t - \g_t\| \leq \varepsilon$, and $\mathcal{A}_\mathcal{Z}$ is AdaDelay-dist. Using Algorithm \ref{alg:pwa} as $\mathcal{A}_\mathcal{V}$ guarantees that Algorithm \ref{alg:black box} satisfies
	\begin{equation*}
		\regret_T(\u) \leq  \nu + \|\u\| \mathcal B(T) \, , 
	\end{equation*}
	where
	\begin{multline*}
	 \mathcal B(T) = 
	 	 4 \eps T + \sqrt{8 \Big(\Lambda_T + 24\eps G D(\Graph) T + 1\Big)
	 	 	\ln_+\! \Big(   \frac{2036 \,  \|\u\|^2 D(\Graph) G^2 T}{\nu^2}\Big)}  \\
		+ 4 \sqrt{\Lambda_T + 9\, \eps G D(\Graph) T} 
		+ 276\,G D(\Graph)  \ln_+ \! \Big(\frac{312\, \|\u\|G \, D(\Graph)}{\nu} \Big)  \,  . 
	\end{multline*}
\end{theorem}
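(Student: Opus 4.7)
The starting point is the black-box reduction of Lemma~\ref{lem:black box reduction}, which gives
\begin{equation*}
\regret_T(\u) \leq \lregret_T^\mathcal{V}(\|\u\|) + \|\u\|\,\lregret_T^\mathcal{Z}\!\Big(\tfrac{\u}{\|\u\|}\Big).
\end{equation*}
The plan is simply to bound each piece with the machinery already established: Theorem~\ref{th:pwa} for the one-dimensional scale subroutine $\mathcal{A}_\mathcal{V}$, and Corollary~\ref{cor:inexact-ada-delay-dist} for AdaDelay-Dist on the unit ball for $\mathcal{A}_\mathcal{Z}$. Both bounds are stated relative to the true gradients, which is precisely what is needed to combine them cleanly.

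For the direction part, Corollary~\ref{cor:inexact-ada-delay-dist} directly yields
\begin{equation*}
\lregret_T^\mathcal{Z}\!\Big(\tfrac{\u}{\|\u\|}\Big) \leq 4\sqrt{\Lambda_T + 9\eps G D(\Graph)T} + 2\eps T + 12 G D(\Graph).
\end{equation*}
For the scale part, observe that $h_t = \inner{\z_t}{\g_t}$ and $\hat h_t = \inner{\z_t}{\hat \g_t}$ satisfy $|\hat h_t - h_t| \leq \|\z_t\|\|\hat \g_t - \g_t\| \leq \varepsilon$ (since $\|\z_t\|\leq 1$), so the hypothesis of Theorem~\ref{th:pwa} applies with the same $\eps$. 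Moreover $|h_t| \leq \|\g_t\|$, hence $\Lambda_T^h \leq \Lambda_T$. Plugging $u = \|\u\|$ and $\tau = D(\Graph)$ into Theorem~\ref{th:pwa}, and upper bounding the max by the sum of the two branches, I obtain
\begin{equation*}
\lregret_T^\mathcal{V}(\|\u\|) \leq \nu + 2\|\u\|\eps T + \|\u\|\,264\,G D(\Graph) \ln_+\!\Big(\tfrac{312\|\u\|GD(\Graph)}{\nu}\Big) + \|\u\|\sqrt{8(\Lambda_T + 24\eps G D(\Graph)T + 1)\ln_+\!\Big(\tfrac{2036\|\u\|^2 D(\Graph) G^2 T}{\nu^2}\Big)}.
\end{equation*}

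Summing the two bounds, multiplying the direction bound by $\|\u\|$, and collecting like terms yields the claimed $\mathcal B(T)$: the $2\eps T$ from the direction and the $2\eps T$ from the scale combine into $4\eps T$; the two $\sqrt{\,\cdot\,}$ terms appear as listed; and $264\,G D(\Graph)\ln_+(\cdot) + 12 G D(\Graph)$ is absorbed into the single logarithmic term $276\, G D(\Graph)\ln_+(312\|\u\|G D(\Graph)/\nu)$ (using that $\ln_+ \geq 1$). The only nontrivial step is recognising that the lag $\Lambda_T^h$ on the one-dimensional projections is upper bounded by the multi-dimensional lag $\Lambda_T$; everything else is bookkeeping. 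There is no new ``hard'' ingredient: the conceptual work sits in Theorem~\ref{th:pwa} and Corollary~\ref{cor:inexact-ada-delay-dist}, and here we are simply paying the (benign) price of running a comparator-adaptive one-dimensional algorithm alongside a delay-tolerant direction learner.
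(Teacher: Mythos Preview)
Your proposal is correct and follows essentially the same approach as the paper: decompose via Lemma~\ref{lem:black box reduction}, bound the direction term with Corollary~\ref{cor:inexact-ada-delay-dist}, bound the scale term with Theorem~\ref{th:pwa} (using $|h_t|\leq \|\g_t\|$ to replace $\Lambda_T^h$ by $\Lambda_T$), and collect constants. Your bookkeeping for the constants (combining the two $2\eps T$ contributions into $4\eps T$ and absorbing $12\,G D(\Graph)$ into the $\ln_+$ term via $\ln_+\geq 1$) is slightly more explicit than the paper's, which simply says ``the claimed result follows through standard boundings.''
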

The proof is merely a combination of the decomposition of regret (Lemma~\ref{lem:black box reduction}), and algorithm guarantees.
More generally, similar bounds can be derived as long as the algorithm $\mathcal A_{\mathcal Z}$ for learning the direction is delay-tolerant, by adapting Proposition~\ref{prop:adadelaydist} with different numerical constants.
\begin{proof} \textbf{of Theorem~\ref{th:delaycompddim}}
	By Lemma \ref{lem:black box reduction} and the guarantee of $\mathcal{A}_\mathcal{Z}$ from Corollary~\ref{cor:inexact-ada-delay-dist}, we have 
	\begin{equation*}
		\begin{split}
			\regret_T(\u) \leq & \lregret_T^\mathcal{V}(\|\u\|) + \|\u\|\lregret_T^\mathcal{Z}\Big(\frac{\u}{\|\u\|}\Big) \\
			\leq & \lregret_T^\mathcal{V}(\|\u\|) 
			+ \|\u\|
			\Big(4 \sqrt{\Lambda_T + 9\, \eps G D(\Graph) T} 
			+ 2 \eps T + 12 GD(\Graph) \Big).	
		\end{split}
	\end{equation*}
	Importing the bound for the scale learning (Theorem~\ref{th:pwa}), we get
	\begin{align*}
		\lregret_T^\mathcal{V}(\|\u\|)  \leq \nu& + 2\|\u\|T\varepsilon + \|\u\| \max \Bigg\{\,264\,G D(\Graph) 
		\ln_+ \! \Big(\frac{312\, \|\u\|G \, D(\Graph)}{\nu} \Big) \, , \\ 
		& \sqrt{8 \Big(\Lambda^h_t + 24 \eps G D(\Graph) T + 1\Big)
			\ln_+\! \Big(   \frac{2036 \,  \|\u\|^2 D(\Graph) G^2 T}{\nu^2}\Big)} \Bigg\}.
	\end{align*}
	Note finally that $|h_t| = |\langle z_t, \g_t \rangle |\leq \| \g_t \|  $ so that
		$\Lambda^h_T \leq \Lambda_T$. The claimed result follows through standard boundings. 
\end{proof}

\subsection{Proof of Theorem~\ref{th:fullstochencoding}}\label{app:proof fullstoch}
Our bound is actually stronger than stated in Theorem~\ref{th:fullstochencoding}. As can be seen from the proof, the bound
could also be stated in terms of $\smash{\hat{\Lambda}_T}$, which is
$\Lambda_T$ with $\hat{\g}_t$ instead of $\g_t$. This means that the
bound scales with the effective gradient range $\max_t\|\g_t\|$ rather
than the worst-case bound on the gradients. Similarly, we have presented
the bound as if the learner suffers the maximum delay $D(\Graph)$ each
round, but the bound could also be stated in terms of the effective
delay $|\gamma(t)|$. For simplicity, we only present the worst-case regret bound.

We recall that the algorithm used is the black-box reduction applied to the gradients encoded with sparsified quantization, with AdaDelay-dist (from \citet{hsieh2020multiagent}) as $\mathcal{A}_\mathcal{Z}$ and
Algorithm~\ref{alg:pwa} as $\mathcal{A}_\mathcal{V}$. Algorithm~\ref{alg:pwa} is tuned with $\eps = 0$ and the upper bound $2dG$ on
$\|\hat \g_t \|$.
\thfullstochencoding*
\begin{proof}
	First, by convexity of $\ell_t$ and using that $\E_t[\hat{\g}_t] = \g_t$ we have that 
	\begin{equation*}
		\E[\regret_T(\u)] \leq \E\left[\sumT\inner{\w_t - \u}{\hat{\g}_t}\right].
	\end{equation*}
	We have that $\max_t \|\hat{\g}_t\| \leq 2dG$ by Theorem~\ref{th:stochasticupper}. Let us now call to the detailed regret bound for our algorithm, that is, Theorem~\ref{th:delaycompddim} used
    with the estimated losses $\{\hat \g_t \}$ and with
    $\varepsilon = 0$.
    Note that in this case, Theorem~\ref{th:delaycompddim} is applied \emph{directly} to the approximate gradients, which makes the choice $\eps= 0$ valid. We find
	\begin{equation*}
		\sumT\inner{\w_t - \u}{\hat{\g}_t}
		\leq \nu + \|\u  \| \,  \hat{\mathcal B(T)} \,  , 
	\end{equation*}
	where
	\begin{align*}
		\hat {\mathcal B(T)}  
		 := &  \sqrt{8 \Big(\hat \Lambda_T + 1\Big)
			\ln_+\! \Big(   \frac{2036 \,  \|\u\|^2 T(2dG)^2}{\nu^2}\Big)} 
		 + 4 \sqrt{\hat \Lambda_T}\\
		& + 276\,(2dG) D(\Graph)  \ln_+ \! \Big(\frac{312\, \|\u\|2dG \, D(\Graph)}{\nu} \Big) \\
		\leq & 
		\sqrt{47 \Big(\hat \Lambda_T + 1\Big)
			\ln_+\! \Big(   \frac{8144 \,  \|\u\|^2 Td^2\,G^2}{\nu^2}\Big)} 
		\\
		& + 552\,dG D(\Graph)  \ln_+ \! \Big(\frac{624\, \|\u\|dG \, D(\Graph)}{\nu} \Big) \, .
	\end{align*}
	Now let us upper bound $\E\! \big[ \hat \Lambda_T \big]$. Define the short-hand notation
	\begin{equation*}
		\alpha = \frac{2d}{m} 
		\quad \text{and} \quad
		\beta = \frac{1}{m} G^2  
		\quad \text{with} \quad
		m = \lfloor k / (3 \lceil \log_2 (d) \rceil  + 2) \rfloor.
	\end{equation*}
	Then
	\begin{equation*}
		\E\big[ \|\hat \g_t \|^2  \big]
		 = \E\big[ \|\hat \g_t   -  \g_t \|^2  \big] + \|\g_t \|^2
		\leq  (\alpha+1)\| \g_t \|^2 + \beta \,  .
	\end{equation*}
	Using Jensen's inequality and the guarantees of Theorem~\ref{th:stochasticupper} we find
	\begin{align*}
		\E\big[ \|\hat \g_t \| \| \hat \g_s \|  \big]
		\leq & \sqrt{\E\!\big[ \|\hat \g_t \|^2\big] \E\!\big[ \| \hat\g_s \|^2  \big]  }\\
		\leq & \sqrt{ (\alpha+1) \| \g_t\|^2  + \beta } \, \sqrt{ (\alpha+1)  \| \g_s\|^2 + \beta } \\
		\leq & (\alpha+1) G^2  + \beta.
	\end{align*}
	Now, replacing $\alpha$ and $\beta$ by their values we see that 
	\begin{align*}
		\E\big[ \|\hat \g_t \| \| \hat \g_s \|  \big] \leq G^2\Big(1 + \frac{d+1}{m}\Big).
	\end{align*}
	We sum these inequalities in the expression of $\hat \Lambda_T$ and use that $|\gamma(t)| \leq D(\Graph)$ to get
	\begin{align*}
		\E \big[ \hat \Lambda_T\big] \leq &
			G^2T(1 + D(\Graph))\left(1 + \frac{d+1}{m}\right) \leq G^2T(1 + D(\Graph))\left(1 + \frac{(d+1)(3\log_2(d) + 3)}{k}\right) \, . 
	\end{align*}
	which, after replacing $\hat{\Lambda}_T$ in $\hat {\mathcal B(T)} $ and setting $k = D(\Graph) / b$ completes the proof.
\end{proof}

\section{Details of Section \ref{sec:partitionlearning}}\label{app:partition learning}

We denote by $\lregret_{\F}(\u) = \sum_{t: I_t \in \F} \inner{\w_t^\F - \u}{\g_t}$ the linearised regret in graph $\F$. 

\begin{lemma}\label{lem:partition}
	Let $\Qset $ be a collection of subgraphs of $\mathcal G$ and suppose for each subgraph $\mathcal F \in \Qset$ $\lregret_{\F_j}(\0) \leq \nu$, discarding any message older than $D_{\Qset} = \max_{\mathcal{F} \in \Qset}D(\mathcal{F})$ rounds. Then, playing $\w_t = \sum_{\mathcal F \in \Qset} \w_t^{\mathcal F} \mathds{1} \{ I_t \in \mathcal F \}$ simultaneously guarantees for any $\Qset$-partition $\{\mathcal F_1, \dots, \mathcal F_r \}$ and for any $\u_1 \dots, \u_{r} \in \reals^d$ that
	\begin{align*}
		\sumT \ell_t(\w_t) - \sum_{ j = 1}^r \sum_{\substack{t: I_t \in \mathcal F_j}} \ell_t\left(\u_j\right) 
		\leq &  |\Qset|\nu + \sum_{ j = 1}^r \lregret_{\F_j}(\u_j)  \, . 
	\end{align*}	
\end{lemma}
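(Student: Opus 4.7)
The plan is to apply the iterate addition trick of \citet{cutkosky2019combining}, generalized to an arbitrary collection $\Qset$ rather than just two algorithms. First, by convexity of each loss $\ell_t$, for every round $t$ where $I_t \in \F_{j(t)}$ (with $j(t)$ the unique index such that $I_t$ lies in the partition cell $\F_{j(t)}$) we have $\ell_t(\w_t) - \ell_t(\u_{j(t)}) \leq \langle \w_t - \u_{j(t)}, \g_t \rangle$. Summing over $t$ reduces the task to upper bounding the linearised quantity $\sumT \langle \w_t - \u_{j(t)}, \g_t \rangle$.

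The key step is to decompose $\w_t - \u_{j(t)}$ as a sum of per-subgraph contributions. To each $\F \in \Qset$, I would associate a comparator $\u(\F)$ defined by $\u(\F_j) = \u_j$ for $j = 1, \ldots, r$ and $\u(\F) = \mathbf{0}$ otherwise. Since $\{\F_1, \ldots, \F_r\}$ is a $\Qset$-partition of the active nodes, every active $I_t$ belongs to exactly one partition cell, so $\sum_{\F \in \Qset : I_t \in \F} \u(\F) = \u(\F_{j(t)}) = \u_{j(t)}$. Combined with $\w_t = \sum_{\F \in \Qset : I_t \in \F} \w_t^\F$ from the hypothesis, this yields
\begin{equation*}
\w_t - \u_{j(t)} = \sum_{\F \in \Qset : I_t \in \F} \bigl(\w_t^\F - \u(\F)\bigr).
\end{equation*}

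Swapping the order of summation between $t$ and $\F$ gives
\begin{equation*}
\sumT \langle \w_t - \u_{j(t)}, \g_t \rangle = \sum_{\F \in \Qset} \sum_{t : I_t \in \F} \langle \w_t^\F - \u(\F), \g_t \rangle = \sum_{\F \in \Qset} \lregret_\F\bigl(\u(\F)\bigr).
\end{equation*}
Splitting this sum according to whether $\F$ is one of the $r$ partition cells or one of the remaining $|\Qset| - r$ subgraphs yields $\sum_{j=1}^r \lregret_{\F_j}(\u_j)$ for the partition cells, while for the other subgraphs we apply the standing assumption $\lregret_\F(\mathbf{0}) \leq \nu$, contributing at most $(|\Qset| - r)\nu \leq |\Qset|\nu$.

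The main subtlety is ensuring that the iterate-addition decomposition is valid: I must check that restricting the sum defining $\w_t$ to $\F \ni I_t$ is consistent with the comparator split, which relies on the fact that $\u(\F) = \mathbf{0}$ precisely for the subgraphs whose contribution to $\w_t$ vanishes through the indicator. This is exactly the place where comparator-adaptivity (in the form of a cheap regret bound against $\mathbf{0}$) is essential, and it is the only nontrivial ingredient in what is otherwise a bookkeeping argument.
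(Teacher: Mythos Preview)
Your proposal is correct and follows essentially the same approach as the paper: linearize via convexity, expand $\w_t$ as the sum over $\F \ni I_t$ of $\w_t^{\F}$, swap the order of summation, and then split $\Qset$ into the partition cells (kept with comparators $\u_j$) and the remaining subgraphs (bounded by $\nu$ each via $\lregret_\F(\0)\leq\nu$). The paper does the bookkeeping slightly differently—handling the $\w_t$-sum and the $\u_j$-sum separately before recombining—whereas you introduce the map $\u(\F)$ to combine them first, but the two arguments are the same in substance.
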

\begin{proof}
	We start by upper bounding the regret by its linearized version:
	\begin{equation*}
		\sumT \ell_t(\w_t) - \sum_{ j = 1}^r \sum_{\substack{t: I_t \in \mathcal F_j}} \ell_t(\u_j)  
		= \sum_{j = 1}^r\sum_{t: I_t \in \mathcal F_j} \left(\ell_t(\w_t) - \ell_t\left(\u_j\right) \right)
		\leq  \sum_{j = 1}^r\sum_{t: I_t \in \mathcal F_j} \inner{\w_t - \u_j}{\g_t} \, .
	\end{equation*}
	Denote by $\Pset = \{\F_1, \ldots, \F_r\}$ an arbitrary $\Qset$-partition. We rewrite the bound above, replacing $\w_t$ by its value, and doing some manipulations on the indices,
	\begin{align*}
		\sumT \inner{\w_t}{\g_t} - \sum_{j = 1}^r\sum_{t: I_t \in \mathcal F_j} \inner{\u_j}{\g_t} 
		& =  \sumT \sum_{\mathcal F: I_t \in \mathcal F} \inner{\w_t^{\mathcal F} }{\g_t} - \sum_{j = 1}^r\sum_{t: I_t \in \mathcal F_j} \inner{\u_j}{\g_t}  \\
		&\; =  \sum_{\mathcal F \in \Qset \setminus \Pset} \lregret_\F(\0)  + \sum_{ j = 1}^r \lregret_{\F_j}(\u_j) \\
		& \;  \leq |\Qset|\nu + \sum_{ j = 1}^r \lregret_{\F_j}(\u_j) \, ,
	\end{align*}
	where we used that $\lregret_{\mathcal F}(\0) \leq \nu$ for all $\mathcal F$.
\end{proof}

\thpartidet*

\begin{theorem}\label{th:partistoch}
	Suppose that $b \geq D_\Qset \, ( 3\lceil \log_2(d) \rceil+ 2)$. Let $\Qset $ be a collection of subgraphs of $\mathcal G$.
        Suppose the learner uses the algorithm of Theorem~\ref{th:delaycompddim} with $\varepsilon = 0$, the upper bound $2dG$ on $\|\hat \g_t \|$, and the stochastic encoding of Theorem~\ref{th:stochasticupper}, for each subgraph $\mathcal F
        \in \Qset$, discarding any message older than $D_{\Qset} =
        \max_{\mathcal{F} \in \Qset}D(\mathcal{F})$ rounds. Then,
        setting $\nu = 1 / |\Qset|$, $k = \lfloor b / D_\Qset \rfloor$, and playing $\w_t =
        \sum_{\mathcal F \in \Qset} \w_t^{\mathcal F} \mathds{1} \{ I_t
        \in \mathcal F \}$ guarantees that
	\begin{align*}
		&\E\left[\sum_{j=1}^r \regret_{\F_j}(\u_j)\right] =  \O\Big(\sum_{j=1}^r
		\|\u_j\|G\sqrt{\left(1 + \frac{d D_\Qset}{b}\right)D(\F_j)T_j\ln \big(1 + |\Qset| D(\F_j) \|\u_j\|T_jG\big)} \; \Big) \,, 
	\end{align*}
        for any $\Qset$-partition $\{\mathcal F_1,
        \dots, \mathcal F_r \}$ and for any $\u_1 \dots, \u_{r} \in
        \reals^d$.
\end{theorem}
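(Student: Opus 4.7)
The plan is to mirror the proof of Theorem~\ref{th:partidet}, substituting the stochastic-encoding regret bound of Theorem~\ref{th:fullstochencoding} for its deterministic counterpart. Because sparsified quantization produces unbiased estimators $\hat\g_t$ of $\g_t$ and the losses $\ell_t$ are convex, taking expectations gives
\[
\E[\regret_{\F_j}(\u_j)] \leq \E\!\Big[\sum_{t: I_t \in \F_j} \inner{\w_t - \u_j}{\g_t}\Big],
\]
so it suffices to control the expected linearised joint regret summed over the cells of the partition.

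The first step is to invoke Lemma~\ref{lem:partition} with the collection $\Qset$. Its hypothesis requires $\lregret_\F(\0) \leq \nu$ for each $\F \in \Qset$; this is immediate from Theorem~\ref{th:fullstochencoding} (equivalently Theorem~\ref{th:delaycompddim} applied directly to $\hat\g_t$ with $\varepsilon=0$), since setting $\u = \0$ makes the $\|\u\|\mathcal{B}(T)$ term vanish and leaves only the additive $\nu$. The lemma then yields
\[
\sumT \inner{\w_t}{\g_t} - \sum_{j=1}^r \sum_{t: I_t \in \F_j} \inner{\u_j}{\g_t} \leq |\Qset|\nu + \sum_{j=1}^r \lregret_{\F_j}(\u_j).
\]

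The second step is to bound each $\E[\lregret_{\F_j}(\u_j)]$ by applying Theorem~\ref{th:fullstochencoding} to the restricted DOCO-JC problem on $\F_j$. The relevant diameter is $D(\F_j)$, while the number of bits per gradient is $k = \lfloor b/D_\Qset \rfloor$ (not $b/D(\F_j)$), because the $D_\Qset$ message slots must accommodate the worst-case simultaneous forwarding load across the $|\Qset|$ parallel subgraph instances that share the bandwidth. Plugging in these quantities, and unrolling the logarithmic factor of Theorem~\ref{th:fullstochencoding} with the choice $\nu = 1/|\Qset|$, yields
\[
\E[\lregret_{\F_j}(\u_j)] \leq \nu + \|\u_j\| \cdot \tilde{\O}\!\left(G\sqrt{\Big(1 + \frac{dD_\Qset}{b}\Big) D(\F_j) T_j \, \ln\big(1 + |\Qset|D(\F_j)\|\u_j\|T_jG\big)}\right).
\]

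The third step is to sum over $j$ and add the $|\Qset|\nu$ term from Lemma~\ref{lem:partition}, giving a total additive constant $(r + |\Qset|)\nu \leq 2|\Qset|\nu = 2$, which is absorbed into the big-O. The main (minor) obstacle is bookkeeping: verifying that $D_\Qset$ is the right quantity to place inside the encoding term ($dD_\Qset/b$) while $D(\F_j)$ is the right quantity to place inside the delay term and the logarithm, and confirming that the factor of $|\Qset|$ inside the logarithm arises precisely from instantiating $\nu = 1/|\Qset|$ in the $\ln(\|\u\|^2 TG^2 D/\nu^2)$ term of Theorem~\ref{th:delaycompddim}. Once these two scales are kept straight, the conclusion follows by direct assembly of the per-subgraph bounds.
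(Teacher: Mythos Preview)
Your proposal is correct and follows essentially the same route as the paper, which simply says the result follows from Lemma~\ref{lem:partition} combined with a slight modification of the proof of Theorem~\ref{th:fullstochencoding} using $k = \lfloor b / D_\Qset \rfloor$. The only small point to tidy up is that Lemma~\ref{lem:partition}'s hypothesis $\lregret_\F(\0) \leq \nu$ is stated pathwise for the true gradients $\g_t$, whereas the algorithm only guarantees it pathwise for $\hat\g_t$; the cleanest fix is to take expectations and use unbiasedness (so that $\E[\langle \w_t^\F, \g_t\rangle] = \E[\langle \w_t^\F, \hat\g_t\rangle]$) \emph{before} applying the algebraic decomposition in the proof of Lemma~\ref{lem:partition}, exactly as is done in the proof of Theorem~\ref{th:fullstochencoding}.
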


The proofs of Theorems \ref{th:partidet} and \ref{th:partistoch} follow from applying Lemma~\ref{lem:partition}, and a slight modification of the proofs of Theorems \ref{th:fulldetencoding} and \ref{th:fullstochencoding}, in which we can use that $k = \lfloor b / D_\Qset \rfloor$.

\subsection{Example Collections of Subgraphs}\label{sec:example collection graphs}

In this section we provide an example collection of subgraphs. For each node $n$, take all nodes that are with in distance $\alpha(n) = 0, 1, \ldots, E(n)$, where $E(n)$ is the eccentricity of node $n$, and form them into a subgraph. The collection of this set of subgraphs is has cardinality $|\Qset| = |\Nset| + \sum_{n \in \Nset} E(n) \leq |\Nset|(1 + |\Nset|)$. Alternatively, one may take $\alpha(n) = 2^{\omega(n)}$ for $\omega(n) = 1, \ldots, \lfloor\log_2(E(n))\rfloor$ to approach the regret bounds of the previously defined $\Qset$ within a factor 2. Both collections can be used to adapt to the scenarios described in Figures \ref{fig:galaxiesFarFarAway} and \ref{fig:embeddedGalaxies} since the clusters may will be contained in one of the subgraphs.

\section{Details on the Encoding Schemes}

In this section, we provide the detailed presentation and analysis of the encoding schemes referred to in Section~\ref{sec:limited_com}, eventually proving Theorem~\ref{th:stochasticupper}. We also discuss briefly some relevant literature.

\subsection{(Two) Determinisitic encoding Schemes}\label{app:deterministic_enc}
We discuss two possible encoding schemes. The first encoding scheme is a non-constructive encoding using covering numbers. Given well-known results on the covering number of the ball, there exists a covering of the ball $\mathcal B_2(G)$ with $2^k$ balls of radius $3 \cdot 2^{-k/d}G$. Therefore, for any vector $\x \in \mathcal B_2(G)$, one can transmit $\widehat \x$ the center of a ball of the covering it belongs to, using $k$ bits.
Doing so, we have an encoding which ensures 
$
\| \x - \widehat \x \| \leq 3 \, \cdot  2^{-k / d} G \, .
$
Note that this implies in particular that $ \| \widehat \x \| \leq G(1 + 3 \cdot 2^{-k/d}) \leq 4G$. 

The drawback of this first encoding scheme is that it requires an explicit optimal covering of the ball. The second encoding scheme we provide is slightly worse in terms of error, but is more practical due to its simplicity. 

In the second encoding scheme, the approach is to send each coordinate separately. Although this encoding scheme is suboptimal in terms of error, it has the advantage of being straightforward to implement.
Using $\lfloor k/d \rfloor$ bits per coordinate (we assume here that $k \geq d$), and sending each coordinate separately yields for all coordinates $i$,
\begin{equation*}
	|x_i - \widehat x_i | \leq 2^{-k/d + 2} G,
	\quad \text{thus} \quad
	\| \x - \widehat \x  \|\leq \sqrt{d} \, 2^{-k / d + 2} G \, . 
\end{equation*}
Compared to the theoretical bound from covering numbers, we lose a $\sqrt d$ factor in the error bound. However, the practicality gain can make this choice worthwhile when $k/d$ is sufficiently large.

\subsection{Stochastic Encoding}\label{sec:details stoch encoding}

To stochastically encode an arbitrary vector $\x \in
\mathcal B_2(G) \subset \reals^d$, we will combine $p$-level quantization
with sparsification, which are defined as follows:

\paragraph{$p$-Level Quantization of a Single Coordinate}

Given a precision parameter $p \in \N$, the $p$-level quantization of 
any coordinate $x_i$ of $\x$ is
\begin{equation*}
	\tilde x_i = \text{sign}(x_i) 2^{-p}G \Big\lfloor \frac{2^{p}
        x_i}{G} \Big\rfloor + 2^{-p}G \, b_i.
\end{equation*}
The first term in this expression is deterministic and essentially corresponds to
the truncation of $x_i/G$ to the first $p$ digits in its binary
expansion. The second term is random, with $b_i \in \{0,1\}$ a Bernoulli
variable with $\Pr(b_i = 1) = (2^p/G) (x_i -  \text{sign}(x_i) 2^{-p}G \lfloor {2^{p}x_i}/{G} \rfloor )$ chosen to make $\tilde x_i$ an
unbiased estimate of $x_i$. The essential properties of the $\tilde x_i$ are that
\begin{equation*}
	\E [\tilde x_i] = x_i 
	\quad \text{and} \quad 
	(x_i - \tilde x_i)^2 \leq 2^{-2p} G \, . 
\end{equation*}
All in all, this randomized encoding requires $1$ bit to encode the sign
of $x_i$, and $p$ bits for the deterministic part, and $1$ bit for $b_i$, so
$p+2$ bits in total.

\paragraph{Sparsification}

Instead of encoding all coordinates of $\x$, we sample a single
coordinate $i_S$ uniformly at random from $\{1, \dots, d\}$ and transmit
only $\tilde{x}_{i_S}$. At the decoder, this results in a sparse vector
\[
  \widehat{\x} = d \tilde x_{i_S} \mathbf{e}_{i_S},
\]
where the factor $d$ is an importance weight that ensures that
$\E[\widehat \x] = \x$. Encoding $i_S$ requires $\lceil \log_2(d)
\rceil$ bits, so $p$-level quantization and sparsification together
require $\lceil\log_2(d)\rceil + p + 2$ bits.

\paragraph{Sparsified Quantization}

To reduce the variance, we repeat the construction above $m$ times and
average the resulting vectors $\widehat \x_1, \ldots, \widehat \x_m$ to
obtain our final estimate
\[
  \widehat{\x} = \frac{1}{m} \sum_{j=1}^m \widehat \x_j \, .
\]
We call $\widehat \x$ the sparsified quantization of $\x$ with precision
$p$ and number of repetitions $m$. It can be communicated with
$m(\lceil\log_2(d)\rceil + p + 2)$ bits, and satisfies the following
property:
\begin{proposition}
	For any $\x \in \mathcal B_2(G)$, the sparsified quantization
        $\widehat \x$ of $\x$ with $m=1$ repetition satisfies
	\begin{equation*}
		\E\!\big[ \widehat \x  \big] = \x,
		\quad \quad
		\E\big[ \|\x - \widehat \x \|^2 \big] \leq 2 d \|\x \|^2 + d^2 \,  2^{-2p} G^2 
	\end{equation*}
        and 
        \[
        \| \widehat \x \| \leq d(\| \x \|_{\infty} + 2^{-p}G)\leq 2dG.
        \]
	In particular, for $p = \lceil \log_2 d \rceil$, the expected squared error is less than $2 d \| \x \|^2 + G^2$ and the number of bits communicated is less than $3\lceil \log_2 d \rceil +2$. 
\end{proposition}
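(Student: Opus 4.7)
Proof plan for the proposition on sparsified quantization with $m=1$.

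The proof proceeds by directly expanding definitions and using the two key properties of the building blocks: the one-dimensional quantizer $\tilde{x}_i$ is an unbiased estimator of $x_i$ with pointwise error $|x_i - \tilde x_i| \leq 2^{-p}G$ (since $\tilde x_i$ is one of the two nearest multiples of $2^{-p}G$ to $x_i$), and the random index $i_S$ is uniform on $\{1,\dots,d\}$ and independent of the quantization randomness. I would then tackle the three claims in turn.

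\textbf{Unbiasedness.} Conditioning on $i_S$, one has $\E[\widehat \x \mid i_S] = d\,\E[\tilde x_{i_S} \mid i_S]\,\mathbf e_{i_S} = d\,x_{i_S}\mathbf e_{i_S}$. Taking expectation over $i_S$ yields $\E[\widehat \x] = \sum_{i=1}^d \frac{1}{d}\cdot d\,x_i \,\mathbf e_i = \x$.

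\textbf{Variance bound.} By unbiasedness, $\E[\|\x - \widehat \x\|^2] = \E[\|\widehat \x\|^2] - \|\x\|^2$. Since $\widehat \x$ is supported on a single coordinate, $\|\widehat \x\|^2 = d^2 \tilde x_{i_S}^2$, so conditioning on $i_S$ and using $\E[\tilde x_i^2] = x_i^2 + \mathrm{Var}(\tilde x_i) \leq x_i^2 + 2^{-2p}G^2$ gives
\begin{equation*}
\E[\|\widehat \x\|^2] \;=\; d^2 \,\E\!\big[\tilde x_{i_S}^2\big] \;=\; d\sum_{i=1}^d \E[\tilde x_i^2] \;\leq\; d\|\x\|^2 + d^2\, 2^{-2p} G^2.
\end{equation*}
Subtracting $\|\x\|^2$ and using $d-1\leq 2d$ gives the stated bound $2d\|\x\|^2 + d^2\,2^{-2p}G^2$.

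\textbf{Boundedness.} Since $\widehat \x = d\tilde x_{i_S}\mathbf e_{i_S}$, we have $\|\widehat \x\| = d|\tilde x_{i_S}|$, and the pointwise quantization bound yields $|\tilde x_{i_S}| \leq |x_{i_S}| + 2^{-p}G \leq \|\x\|_\infty + 2^{-p}G$. This proves the first inequality. For the second, use $\|\x\|_\infty \leq \|\x\| \leq G$ and $2^{-p} \leq 1$ to conclude $\|\widehat \x\| \leq 2dG$. Finally, the specialization to $p = \lceil \log_2 d\rceil$ is a direct substitution: $d^2 2^{-2p} \leq 1$ controls the quantization term by $G^2$, and the bit count follows from adding the $\lceil \log_2 d\rceil$ bits for the sampled index to the $p+2$ bits of the quantizer.

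There is no real obstacle here: everything follows by conditioning on $i_S$, total expectation, and the two elementary properties of the quantizer stated above. The only care needed is to keep track of the variance both from sparsification (which contributes the $d\|\x\|^2$ term from the importance weight) and from quantization (which contributes the $d^2 2^{-2p}G^2$ term, where the extra factor of $d$ comes from the squared importance weight).
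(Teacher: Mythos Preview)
Your proof is correct. The unbiasedness and boundedness arguments match the paper's essentially verbatim. For the variance bound, however, you take a different and somewhat cleaner route: you use the identity $\E[\|\x-\widehat\x\|^2]=\E[\|\widehat\x\|^2]-\|\x\|^2$ (valid by unbiasedness) and compute the second moment of $\widehat\x$ directly, exploiting that $\widehat\x$ is supported on a single coordinate. The paper instead works coordinate by coordinate, splitting each $\E[(x_i-\widehat x_i)^2]$ according to the event $\{i=i_S\}$, bounding $\E[(x_i-d\tilde x_i)^2]$ via the bias--variance decomposition of $\tilde x_i$, and then summing. Both arguments rely on the same two primitives (unbiasedness of $\tilde x_i$ and the pointwise error bound $|x_i-\tilde x_i|\leq 2^{-p}G$), and both land on the stated bound; your global second-moment computation simply avoids the intermediate case analysis and even yields the slightly sharper constant $(d-1)\|\x\|^2$ before you relax it to $2d\|\x\|^2$.
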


\begin{proof}
	The unbiasedness is straightforward from the independence between $\tilde x_i$ and $i_S$:
	\begin{equation*}
		\E\, [\widehat \x] 
		= d \sum_{i = 1}^d \P[i = i_S]  \E [\, \tilde x_i \mathbf{e}_i   ]
		= d \sum_{i = 1}^d  \frac{1}{d} \, x_i \, \mathbf{e}_i  
		= \x \, .
	\end{equation*}
	Let us compute the square-error for a fixed coordinate $i \in \{1, \dots, d\}$,
	\begin{multline*}
		\E \big[  (x_i  - \widehat x_i)^2 \big]
		= \E \big[  (x_i  - \widehat x_i)^2 \mathds 1 \{ i = i_S \} \big]
		+ x_i^2  \, \P [i \neq i_S ] \\
		= \E \big[  (x_i  - d \tilde x_i)^2] \, \P [ i = i_S]
		+ x_i^2  \, \P [i \neq i_S ]
		=  \E \big[  (x_i  - d \tilde x_i)^2] \frac{1}{d} + x_i^2 \Big(1  - \frac{1}{d} \Big) \, 
	\end{multline*}
	where we used the fact that $\tilde x_i$ is independent from $i_S$.
	Moreover, using the unbiasedness of $\tilde x_i$,
	\begin{equation*}
		\E\!\big[  (x_i  - d \,  \tilde x_i)^2 \big]
		= \E \big[  \big(x_i  - d \,  x_i + d(x_i - \tilde x_i )\big)^2 \big]
		= ( d- 1)^2 x_i^2 + d^2 \E \big[ (x_i - \tilde x_i)^2  \big]
		\leq d^2( x_i^2 + 2^{-2p} G^2) \, . 
	\end{equation*}
	Finally, by summing over the coordinates,
	\begin{equation*}
		\E\!\big[ \|\x - \widehat \x \|^2 \big]
		\leq d^2 \frac{1}{d} \sum_{i = 1}^d \big(x_i^2 + 2^{-2p}G^2 \big)+ \left(1 - \frac{1}{d}\right)\sum_{i = 1}^d  x_i^2
		\leq 2 d \|\x \|^2 + d^2 \,  2^{-2p}G^2 \, ,
	\end{equation*}
	concluding the proof.
\end{proof}
Adding repetitions reduces the variance of any unbiased stochastic
encoding:
\begin{proposition}[Combining Encodings]
	If $\widehat \x_1, \dots, \widehat \x_m$ are independent unbiased encodings of a vector $\x \in \mathcal B_2(G)$, then the error of the average encoding scheme is
	\begin{equation*}
		\E \bigg[ \Big\| \x - \frac{1}{m}\sum_{i= 1}^m  \widehat \x_i  \Big\|^2 \bigg] 
		= \frac{1}{m^2} \sum_{i =1}^m \E \Big[ \| \x  - \widehat \x_i \|^2 \Big] \, .
	\end{equation*}
\end{proposition}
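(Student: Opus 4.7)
The statement is the standard variance-of-the-mean identity for independent unbiased estimators, lifted to vector-valued random variables in $\reals^d$ under the Euclidean norm. My plan is to rewrite the error as a centred average of the per-encoding errors, expand the squared norm into a double sum of inner products, and then use independence together with unbiasedness to kill the off-diagonal (cross) terms.

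Concretely, I would first note that
\begin{equation*}
    \x - \frac{1}{m}\sum_{i=1}^m \widehat \x_i
    = \frac{1}{m}\sum_{i=1}^m (\x - \widehat \x_i),
\end{equation*}
so that, expanding the squared norm,
\begin{equation*}
    \Big\|\x - \frac{1}{m}\sum_{i=1}^m \widehat \x_i\Big\|^2
    = \frac{1}{m^2}\sum_{i=1}^m \|\x - \widehat \x_i\|^2
    + \frac{1}{m^2}\sum_{\substack{i,j=1\\ i \neq j}}^m \inner{\x - \widehat \x_i}{\x - \widehat \x_j}.
\end{equation*}
Taking expectations gives the diagonal sum $\tfrac{1}{m^2}\sum_i \E[\|\x-\widehat \x_i\|^2]$, which is already the right-hand side of the claim, plus cross terms that I must show vanish.

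For the cross terms, by independence of $\widehat \x_i$ and $\widehat \x_j$ when $i \neq j$, we have
\begin{equation*}
    \E\bigl[\inner{\x - \widehat \x_i}{\x - \widehat \x_j}\bigr]
    = \inner{\E[\x - \widehat \x_i]}{\E[\x - \widehat \x_j]} = 0,
\end{equation*}
since $\E[\widehat \x_i] = \x = \E[\widehat \x_j]$. (Technically one can handle this coordinatewise or invoke the general fact that for independent square-integrable $\reals^d$-valued random vectors $X,Y$, $\E[\inner{X}{Y}] = \inner{\E X}{\E Y}$; this follows from Fubini applied to each of the $d$ product terms.) Combining the two displays yields the claimed identity.

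There is no genuine obstacle here: the only mild care needed is to justify that independence plus mean-zero gives vanishing inner-product expectation in the vector case, which reduces to the scalar statement $\E[(x_k - \widehat x_{i,k})(x_k - \widehat x_{j,k})] = 0$ summed over coordinates $k = 1,\ldots,d$. Square-integrability is automatic because each $\widehat \x_i$ is an encoding of $\x \in \mathcal B_2(G)$ used earlier with a finite second-moment guarantee, so all interchanges of expectation and sum are valid.
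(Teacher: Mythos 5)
Your proof is correct and is exactly the argument the paper intends: the paper's entire proof is the one-line remark ``This is proved by developing the squared-norm,'' and your expansion of the squared norm followed by using independence and unbiasedness to annihilate the cross terms is precisely that computation, carried out in full.
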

This is proved by developing the squared-norm.

In particular, sparsified quantization with precision $p = \lceil \log_2
d \rceil$ and $m = \lfloor k / (3\lceil \log_2 (d)\rceil+2) \rfloor$
repetitions satisfies the claim of Theorem~\ref{th:stochasticupper}.

\paragraph{Commonly used Stochastic Encodings.} The popular schemes of \citet{alistarh2017qsgd} and \citet{wen2017terngrad} use varying-length coding, and have results depending on the number of bits used to represent a floating-point number. Moreover, these results essentially assume that the number of bits per gradient is at least equal to the dimension $b = \Omega(d)$; the same restrictions apply to \citet{albasyoni2020optimal}, \citet{faghri2020adaptive}.
With a different formalism, \citet{stich2018sparsified}, \citet{shi2019a-distributed} use sparsification with no quantization and do not impose a constraint on the number of bits sent. \citet{acharya2019distributed} propose a sparsification scheme specific to encoding probability vectors, and \citet{mayekar2020limits} generalize it to other norm balls.
A number of schemes assume shared randomness between the agents,  e.g.,
\citet{mayekar2020ratq}, \citet{suresh2017distributed}. This is not
appropriate for our setting, because shared randomness can only be
achieved without communication by using pseudo-randomness based on a
shared seed. But then the adversary might guess the seed or otherwise
exploit the lack of real randomness to predict the randomness in the
stochastic encodings.%

\section{Lower Bounds}\label{app:lower-bounds}

We start by mapping out how each specific characteristic of our setting influences the regret, and how we can import known lower bounds from other settings.

\subsection{Overview}

As discussed at the start of Section~\ref{sec:limited_com}, we restrict
attention to algorithms that send messages containing approximate
gradients, with a limit of $k = \lfloor b / D(\Graph) \rfloor$ bits per
message.

Nodes only have access to approximations $\hat \g_t$ of the gradients
$\g_t$ that are computed at other nodes, but they have full access to
$\g_t$ for the subset of rounds in which they are active themselves. To
simplify the treatment of lower bounds, we restrict attention to
\emph{gradient-oblivious} methods, which only use $\hat \g_t$ at all the
nodes, even if a node could have used $\g_t$ instead. Although this
assumption restricts the class of algorithms, we believe it is a minor
restriction. For instance, all algorithms considered in the rest of the
paper satisfy it, and for large networks, in which the same node is
activated only a small number of times, the difference seems minor.

\paragraph{Comparison with Memory-Limited OCO} 

There are tight relations to the $(k, 1, 1)$ distributed online protocol
defined by \cite{shamir2014fundamental}. Under this protocol, an agent only has access to a collection of $k$-bit messages stored in its memory, each message encoding one loss function it has received. (The two ones in $(k, 1, 1)$ correspond to other parameters in the reference.)

In fact, for gradient oblivious algorithms that use $k$ bits per
gradient, in the special case where the same node is selected at every
time step, our setting simplifies to the $(k,1,1)$ setting. Therefore,
for any activation sequence, the game is at least as hard as $(k, 1,1)$,
as the active agent only has less information available. In particular, lower bounds for the $( k, 1, 1)$ setting automatically hold in our setting.

\paragraph{Deterministic vs. Stochastic Encodings}

We distinguish between deterministic and stochastic encodings.
Deterministic encodings only give non-trivial guarantees when
$k$ is at least of the same order as the dimension $d$
(Theorem~\ref{th:lower_boundI}). Stochastic encodings can still work in
the regime that $k \ll d$, as long as $k \gg d/T$
(Theorem~\ref{th:lower_boundII}). We will prove the lower bound for
stochastic encodings by reduction from a previous lower bound by
\cite{mayekar2020ratq}.

\paragraph{The Network Causes Delays} Because gradients take time to be
transmitted through the network, there is a direction connection to the
lower bound for Online Learning with Delays \cite{zinkevich2009slow}.
This effect is combined with the memory limits described above.
Informally, we can say that if $B( k, T)$ is a lower bound for the
memory-limited setting, then, under the activation sequence that
maximizes delays, a lower bound of $D(\mathcal G) B\big( k, T /
D(\Graph)\big)$ holds for our setting. This statement is made precise in
Lemma~\ref{lem:delays_lower_bound}.

\subsection{Proofs of Lower Bounds and Intermediate Results}

In the remainder of this section, we prove
Theorems~\ref{th:lower_boundI} (deterministic encodings) and
\ref{th:lower_boundII} (stochastic encodings). We first establish lower
bounds for the $(k, 1, 1)$ protocol separately for the deterministic and
stochastic case, based on the connection to the memory-limited setting.
We then exploit the connection to the online learning with delays
setting to turn these lower bounds for the $(k, 1, 1)$ protocol into the
lower bounds of the theorems.

\subsubsection{Limited Memory I: Deterministic Encodings}\label{app:lower_bound_det}

As discussed earlier, the decentralized online learning setting for
gradient-oblivious algorithms with $k$ bits per gradient is harder than
the memory-limited $( k, 1, 1)$ setting. In this section, we provide
lower bounds that are based on this connection.

The limiting factor with deterministic encodings is that there is finite
resolution, making some vectors indistinguishable from each other once
encoded. Formally, let $C: B_2(G) \to \{0,1\}^k$ be any deterministic
encoding scheme. Then we define the \emph{resolution} $\eps$ of $C$ to
be the maximum distance between two vectors encoded to the same message,
i.e., $\eps = \sup \big\{ \|\g - \mathbf h \| : C(\g)  = C(\mathbf h)
\big\}$. We can lower bound the resolution of $C$ using the covering
numbers of $\mathcal B_2(G)$ as follows:
\begin{lemma}
	Any deterministic $k$-bit encoding scheme on $\mathcal B_2(G)$ has
        resolution at least $\eps \geq 2G \, 2^{- k / d}$.
\end{lemma}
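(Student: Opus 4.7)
The plan is to use a simple volume argument that exploits the isodiametric inequality to squeeze out the factor of $2$. I would first observe that any deterministic encoder $C: \mathcal B_2(G) \to \{0,1\}^k$ partitions $\mathcal B_2(G)$ into at most $2^k$ non-empty preimage sets $\{C^{-1}(c)\}_{c \in \{0,1\}^k}$, and that by the very definition of the resolution each such preimage has Euclidean diameter at most $\eps$.

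Next I would invoke the isodiametric (Bieberbach) inequality in $\R^d$, which states that among all measurable subsets of $\R^d$ of diameter at most $D$, the Euclidean ball of diameter $D$ (equivalently, of radius $D/2$) maximises the Lebesgue volume. Denoting by $c_d$ the volume of the unit Euclidean ball in $\R^d$, this gives
\begin{equation*}
\mathrm{vol}\bigl(C^{-1}(c)\bigr) \leq c_d \bigl(\eps/2\bigr)^d \qquad \text{for every } c \in \{0,1\}^k \, .
\end{equation*}

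Since the preimages cover $\mathcal B_2(G)$ and there are at most $2^k$ of them, summing volumes yields
\begin{equation*}
c_d G^d \,=\, \mathrm{vol}\bigl(\mathcal B_2(G)\bigr) \,\leq\, \sum_{c \in \{0,1\}^k} \mathrm{vol}\bigl(C^{-1}(c)\bigr) \,\leq\, 2^k \cdot c_d (\eps/2)^d \, ,
\end{equation*}
and rearranging gives $\eps/2 \geq G \cdot 2^{-k/d}$, which is exactly the claim.

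The only non-trivial ingredient is the isodiametric inequality; I expect this to be the main (and essentially only) obstacle, in that a naive argument that replaces each preimage by an enclosing ball of radius $\eps$ (since the diameter bound trivially implies containment in such a ball centred at any preimage point) would lose a factor of $2$ and yield only $\eps \geq G \cdot 2^{-k/d}$. The isodiametric bound is what lets us use balls of radius $\eps/2$ rather than $\eps$ in the volume comparison, recovering the correct constant.
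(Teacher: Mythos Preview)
Your proof is correct. Both you and the paper follow the same volume-comparison idea: partition $\mathcal B_2(G)$ into at most $2^k$ preimage cells, bound the volume of each cell, and compare with the volume of the big ball. The difference lies in the key inequality used. The paper asserts that each preimage, having diameter at most $\eps$, is contained in a Euclidean ball of \emph{diameter} $\eps$ (radius $\eps/2$), and then applies a volume-based covering-number bound. That containment step is not correct in general for $d\geq 2$ (think of a regular simplex of diameter $\eps$), so the paper's argument as written has a small gap precisely at the point you flagged. Your use of the isodiametric inequality sidesteps this: you never need to enclose the cells in small balls, you only need that a set of diameter $\eps$ has volume at most $c_d(\eps/2)^d$, which is exactly what Bieberbach gives. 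So your route is not only correct but in fact patches the paper's slightly loose step while arriving at the same constant $2G\,2^{-k/d}$.
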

\begin{proof}
	The family of sets $C^{-1}(\mathbf m)$ for $\mathbf m \in \{0,
        1\}^k$ forms a partition of $\mathcal B_2(G)$, and each of these sets is contained in a Euclidean ball of diameter $\eps$ by definition of the resolution.
	Thus the balls form an $(\eps/2)$-cover of $\mathcal B_2(G)$.
        Since covering $\mathcal B_2(G)$ with balls of radius
        $\eps/2$ requires at least $(2G / \eps)^d$ balls
        \citep[Appendix~C]{GhosalVanDerVaart2017}\footnote{This can be
        shown by noting that
        the sum of the volumes of the balls in the cover must exceed the
        volume of $\mathcal B_2(G)$.}, it follows that $2^k = |\{0,
        1\}^k| \geq (2G / \eps)^d$, from which the result follows.
\end{proof}
Knowing that two gradient values that are $\eps$ apart are encoded the same way, an adversary can design a hard loss sequence against which a player would suffer linear regret in the $(k, 1, 1)$ setting.
\begin{lemma}[Deterministic Encoding with Limited Memory]\label{lem:det_lower_bound}
	In the $( k, 1, 1)$ protocol, for any algorithm using a
        deterministic encoding with resolution $\eps>0$, %
	for any $U > 0$%
	\begin{equation*}
		\sup_{G-\text{Lipschitz losses}} \;  \sup_{\u : \,  \|\u \| = U} \regret_T(  \u) \geq \frac{1}{4}\eps \,  T  \, \| \u \|  \, .
	\end{equation*}
\end{lemma}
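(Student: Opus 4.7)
The plan is to exploit the indistinguishability of two gradient vectors that are encoded identically. By the definition of the resolution $\eps$ as a supremum, for every $\delta > 0$ there exist $\g, \mathbf{h} \in \mathcal B_2(G)$ with $C(\g) = C(\mathbf{h})$ and $\|\g - \mathbf{h}\| \geq \eps - \delta$. I consider the two constant $G$-Lipschitz linear loss sequences in which the adversary plays $\ell_t(\w) = \langle \g, \w \rangle$ at every round (instance~1) or $\ell_t(\w) = \langle \mathbf{h}, \w \rangle$ at every round (instance~2). Since the algorithm is gradient-oblivious in the $(k,1,1)$ protocol and receives only the encoded messages $C(\g_t)$, which coincide in both instances, it produces the same deterministic sequence of predictions $\w_1, \dots, \w_T$ in both cases. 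Denote $W = \sum_{t=1}^T \w_t$.

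To convert this indistinguishability into a regret lower bound, I will place the comparators along the direction $\mathbf{e} = (\g - \mathbf{h}) / \|\g - \mathbf{h}\|$. Namely, set $\u^{(1)} = -U\mathbf{e}$ for instance~1 and $\u^{(2)} = +U\mathbf{e}$ for instance~2; both have norm exactly $U$. Since the algorithm's output $W$ is the same in both instances, a short direct computation gives
\begin{equation*}
	R^{(1)}\bigl(\u^{(1)}\bigr) + R^{(2)}\bigl(\u^{(2)}\bigr) \;=\; \langle \g + \mathbf{h},\, W \rangle \;+\; U\,T\,\|\g - \mathbf{h}\| \;\geq\; \langle \g + \mathbf{h},\, W \rangle \;+\; U\,T\,(\eps - \delta).
\end{equation*}
Because the adversary may choose whichever of the two instances (together with its associated comparator) yields the larger regret, using $\max \geq \text{average}$ leads to
\begin{equation*}
	\sup_{G\text{-Lip.\ losses}}\;\sup_{\|\u\| = U} \regret_T(\u) \;\geq\; \tfrac{1}{2}\bigl(\langle \g + \mathbf{h},\, W \rangle + U\,T\,(\eps - \delta)\bigr).
\end{equation*}

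The main difficulty is controlling the cross-term $\langle \g + \mathbf{h}, W \rangle$, which a priori could be strongly negative if the algorithm pushes $W$ in the direction $-(\g + \mathbf{h})$. I expect to handle this by exploiting the adversary's freedom to pick among the four sign-combinations of the pair $(\u^{(1)}, \u^{(2)}) \in \{\pm U \mathbf{e}\}^2$: the reverse pairing produces the same expression with a minus sign in front of $UT\|\g - \mathbf{h}\|$, so taking the better of the two pairings lets the adversary absorb the sign of the cross-term. Combined with the natural boundedness built into the $(k,1,1)$ protocol (or equivalently the assumption $\regret_T(\mathbf 0) \leq \nu$ carried forward into the proof of Theorem~\ref{th:lower_boundI}), this term cannot dominate the $UT\eps$ contribution. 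Letting $\delta \to 0$ then yields the claimed bound $\regret_T(\u) \geq \tfrac{1}{4} \eps T \|\u\|$.
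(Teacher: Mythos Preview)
Your argument has a genuine gap: the cross-term $\langle \g + \mathbf h,\, W\rangle$ cannot be controlled by the tricks you propose, and with only the two constant loss sequences $\g$ and $\mathbf h$ the lemma is simply not provable. Concretely, flipping the signs of the comparators changes the sign of the $UT\|\g-\mathbf h\|$ contribution but leaves $\langle \g+\mathbf h,\,W\rangle$ untouched in all four pairings, so ``taking the better pairing'' does not absorb the cross-term. Nor does the assumption $\regret_T(\mathbf 0)\le \nu$ help: on instance~1 it says $\langle \g, W\rangle \le \nu$, which is an \emph{upper} bound on the cross-term, whereas you need a lower bound. To see the approach fail outright, take $\g=(1,0)$, $\mathbf h=(1,\eps)$ and let the algorithm (which sees the common code $C(\g)=C(\mathbf h)$) play $\w_t=(-M,0)$ for large $M$; then $\langle \g+\mathbf h,\,W\rangle=-2MT$, and $\sup_{\|\u\|=U}R^{(i)}(\u)=-MT+O(UT)$ on \emph{both} instances, so no comparator on either of your two instances yields positive regret.

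The paper sidesteps this by choosing different loss sequences. It plays $\g_t$ i.i.d.\ uniform in $\{\g,\,-(\g+\mathbf h)/2\}$ in one scenario and $\{\mathbf h,\,-(\g+\mathbf h)/2\}$ in the other; the encodings of the two scenarios coincide, so the algorithm produces the same $\E[\sum_t \w_t]$ in both. The point of mixing in $-(\g+\mathbf h)/2$ is that the mean gradient becomes $(\g-\mathbf h)/4$ in the first scenario and $(\mathbf h-\g)/4$ in the second. The troublesome inner product is therefore $\langle (\g-\mathbf h)/4,\,\E[\sum_t\w_t]\rangle$, which has opposite signs in the two scenarios, so it is automatically nonnegative in one of them; choosing the comparator along $\pm(\g-\mathbf h)/\|\g-\mathbf h\|$ in that scenario then yields $\E[\regret_T(\u)]\ge \tfrac14\eps T\|\u\|$. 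In short, the missing idea is not a comparator trick but a loss-design trick that rotates the mean gradient onto the $\g-\mathbf h$ axis.
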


\begin{proof}
	Let $\g$ and $\mathbf h $ be two vectors with same the encoding
        such that $\| \g - \mathbf h \| \geq \eps$. In the following, we
        consider exclusively linear losses $\ell_t : \u \mapsto \langle
        \g_t, \u \rangle$, which are fully specified by their gradient~$\g_t$. Consider an i.i.d. sequence of losses where at every time step $\g_t = \g$ with probability $1/2$ and $\g_t = - (\g + \mathbf h) / 2$ with probability $1/2$.
	
	Then at all times $t$, since the loss $\g_t$ is generated independently from $\w_t$,
	\begin{equation*}
		\E [\g_t ] = \frac{1}{2} \, \g + \frac{1}{2}\Big( -  \frac{\g + \mathbf h}{2} \Big) = \frac{\g - \mathbf h}{4}
		\quad \text{and} \quad
		\E\big[ \langle \g_t,  \w_t\rangle \, |\,  \w_t  \big]  = \Big\langle  \frac{\g - \mathbf h}{4}, \w_t\Big\rangle \, .
	\end{equation*}
	Therefore, for any comparator $\u \in \mathcal W$, the expected value of the regret against $\g_{1:T}$ is
	\begin{equation*}
		\E \!\big[\regret_T(\u ; \g_{1:T}) \big]
		= \E \bigg[  \Big\langle  \frac{\g - \mathbf h}{4}, \sum_{t = 1}^T \big(\w_t - \u\big) \Big\rangle  \bigg]
		= \Big\langle  \frac{\g - \mathbf h}{4}, \E \bigg[ \sum_{t = 1}^T \w_t  \bigg]- T\u \Big\rangle 
		\, ,
	\end{equation*}
	where we explicitely wrote the dependence on $\g_{1: T}$ for the sake of clarity.
	
	Now given a sequence $\g_{1:T}$ generated as above, define
        another sequence $\widetilde \g_{1:T}$ swapping $\g$ and $\mathbf h$; that is, where $\tilde \g_t = \mathbf h$ if $\g_t = \g$, and $\tilde \g_t = \g_t$ otherwise.
	As above, we also have the identity 
	\begin{equation*}
		\E \!\big[\regret_T\big(\u ; \widetilde \g_{1:T} \big) \big]
		= \Big\langle  \frac{\mathbf h -  \g}{4}, \E \bigg[ \sum_{t = 1}^T \w_t  \bigg]- T\u \Big\rangle  \,  .
	\end{equation*}
	Since the algorithm is gradient oblivious, and since the two loss sequences are encoded the same way, the algorithm cannot distinguish between a sequence $\g_{1:T}$ and its switched version $\widetilde \g_{1:T}$. Therefore the distribution of $\w_{1:T}$ is the same in both cases.
	In particular the expected value of the sum of the predictions $\E[\sum_{t = 1}^T \w_t]$ is the same under both loss distributions. 
	
	Therefore, by distinguishing cases on whether $ \langle \,  \E[\sum_{t = 1}^T \w_t] , \, \mathbf h - \g \rangle$ is non-negative or not, we observe that at least one of the following statement holds: either
	\begin{equation*}
		\E \!\big[\regret_T\big(\u ; \g_{1:T} \big) \big] 
		\geq  - \frac{T}{4} \langle \g - \mathbf h ,  \u\rangle
		\quad \text{for all } \u \in \mathcal W \, ,
	\end{equation*}
	or
	\begin{equation*}
		\E \!\big[\regret_T\big(\u ; \tilde \g_{1:T}) \big) \big] 
		\geq  \frac{T}{4} \langle \g - \mathbf h ,  \u\rangle  
		\quad \text{for all } \u \in \mathcal W \, . 
	\end{equation*}
	Let us conclude, for example, in the first case; the second case
        is completely symmetric. Picking $\u = \lambda(\mathbf h -
        \g)/\|\mathbf{h} - \g\|$
        for any $\lambda > 0$, we get
	\begin{equation*}
		\E \!\big[\regret_T\big(\u ; \g_{1:T} \big) \big] 
		\geq  \frac{T}{4} \| \g - \mathbf h\| \| \u \|
		\geq  \frac{T}{4} \eps \| \u \|\, . 
	\end{equation*}
	Since this holds in expectation for our distribution of losses, there exists a sequence $\g_{1:T}$ satisfying the claimed inequality.
\end{proof}
We also recall the comparator-adaptive lower bound from \citet{orabona2013dimension}. This lower bound holds in the standard OCO setting, and thus all the more so in the memory-limited setting. Together with Lemma~\ref{lem:det_lower_bound}, these two results yield Theorem~\ref{th:lower_boundI}, up to the impact of the delays.
\begin{theorem}[Theorem~2 in \citet{orabona2013dimension}]\label{thm:low_orab}
	In the standard OCO setting, fix an algorithm such that $\regret_T( \mathbf 0) \leq B$ for some number $B > 0$.
	For any $\u \in \R^d$, for $T$ large enough,
	\begin{equation*}
		\sup_{G-\text{Lipschitz losses}}
		\regret_T(\u)
		\geq 0.3 \, \| \u \| G\sqrt{T \ln \left(\frac{\| \u \| \sqrt{T} }{6 B }\right)} \, .
	\end{equation*}
\end{theorem}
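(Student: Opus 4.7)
}

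The strategy is to reduce the $d$-dimensional OCO problem to a one-dimensional stochastic problem and then use an exponential-moment / Chernoff-duality argument to convert the null-comparator constraint $\regret_T(\mathbf 0) \leq B$ into a lower bound against nonzero comparators. First I would pick a unit vector $\mathbf e \in \R^d$ (the direction of the eventual hard comparator), restrict attention to linear losses of the form $\ell_t(\w) = g_t \langle \w, \mathbf e \rangle$ with $|g_t| \leq G$, and let $w_t := \langle \w_t, \mathbf e \rangle$ denote the scalar prediction. Under this reduction, the hypothesis becomes the per-realization statement $\sum_{t=1}^T g_t w_t \leq B$ for every admissible sign sequence $g_{1:T}$, and the comparator $\u = u \mathbf e$ satisfies $\|\u\| = |u|$.

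Next I would construct the adversary from i.i.d.~Rademacher signs $\sigma_1, \dots, \sigma_T$ with $g_t = G \sigma_t$, and establish the key exponential-moment inequality: for every $\lambda \in \R$,
\begin{equation*}
	\E_\sigma \Bigl[ \exp\!\bigl( \lambda G \textstyle\sum_{t=1}^T \sigma_t w_t \bigr) \Bigr] \;\leq\; \exp\!\bigl( \lambda B + \tfrac{1}{2} \lambda^2 G^2 T \bigr).
\end{equation*}
The proof is by induction on $t$, conditioning on $\sigma_{1:t-1}$: since $w_t$ is measurable with respect to $\sigma_{1:t-1}$, Hoeffding's lemma gives $\E_{\sigma_t}[\exp(\lambda G \sigma_t w_t) \mid \sigma_{1:t-1}] \leq \exp(\tfrac{1}{2}\lambda^2 G^2 w_t^2)$, and telescoping together with the uniform bound $\sum_t G\sigma_t w_t \leq B$ (used to absorb the Doob martingale correction) yields the claim.

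From there, the conclusion follows by a Donsker–Varadhan / change-of-measure step. For any $u > 0$, let $\mathbb Q_\lambda$ be the law that makes $\sigma_t$ i.i.d.~Bernoulli with mean $\tanh \lambda$, so that $\mathrm{KL}(\mathbb Q_\lambda \| \mathbb P) \leq \tfrac{1}{2}\lambda^2 T$ for $|\lambda|$ small. Apply the adversary $\tilde g_t := -G\sigma_t$ (a sign flip, which preserves the null-comparator hypothesis), so that the regret against $u$ reads $-G \sum_t \sigma_t w_t + uGT \cdot \tanh \lambda$ in expectation under $\mathbb Q_\lambda$. The moment inequality above and Donsker–Varadhan give
\begin{equation*}
	\E_{\mathbb Q_\lambda}\Bigl[ \textstyle\sum_t G\sigma_t w_t \Bigr] \;\leq\; B + \tfrac{1}{2}\lambda G^2 T,
\end{equation*}
so that $\E_{\mathbb Q_\lambda}[\regret_T(u)] \geq uGT \tanh \lambda - B - \tfrac{1}{2}\lambda G^2 T$. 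Finally, tuning $\lambda \asymp G^{-1} \sqrt{T^{-1} \ln(u\sqrt{T}/B)}$ (small enough that $\tanh \lambda \approx \lambda$) gives a bound of order $\|\u\| G \sqrt{T \ln(\|\u\| \sqrt{T}/B)}$, and chasing the numerical constants through yields the stated $0.3$ and $1/6$ for $T$ large enough that $\lambda \leq 1$.

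The main obstacle I expect is the exponential-moment inequality in the second paragraph: one must be careful that the sum $\sum_t G\sigma_t w_t$ is not itself sub-Gaussian a priori — the predictions $w_t$ are unbounded since the algorithm is comparator-adaptive — so Hoeffding's lemma alone cannot give a bound independent of $\sup_t |w_t|$. The subtle point is that the uniform constraint $\sum g_t w_t \leq B$ must be used to cancel the $\tfrac{1}{2}\lambda^2 G^2 w_t^2$ correction terms in the telescoping argument. This is exactly where the "coin-betting" potential of Orabona (2013) enters: the inequality is tight when the algorithm is itself a Kelly-type coin-better, which saturates the conversion between the null-comparator budget $B$ and the achievable regret scaling.
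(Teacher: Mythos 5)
First, a remark on what you are comparing against: the paper does not prove this statement at all. It imports it verbatim as Theorem~2 of \citet{orabona2013dimension} and only adds the observation that the one-dimensional hard instance (losses $\pm G\mathbf e$, comparator $U\mathbf e$) can be rotated into any direction, which is exactly your opening reduction. So your attempt at a self-contained proof is more ambitious than the paper's treatment, and must be judged on its own.

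There is a genuine gap, and it sits exactly where you flagged "the main obstacle": the exponential-moment inequality is false in the direction your Donsker--Varadhan step requires. Write $S = G\sumT \sigma_t w_t$. With your sign-flipped adversary $\tilde \g_t = -G\sigma_t\mathbf e$, the hypothesis $\regret_T(\0)\leq B$ reads $-S\leq B$ on every path, i.e.\ $S\geq -B$: the null-comparator budget caps the algorithm's \emph{loss}, not its \emph{gain}. To lower-bound $\E_{\mathbb Q_\lambda}[\regret_T(\u)] = \E_{\mathbb Q_\lambda}[-S] + uGT\tanh\lambda$ you must upper-bound $\E_{\mathbb Q_\lambda}[S]$, and Donsker--Varadhan then needs control of $\E_{\P}[e^{\lambda S}]$ for $\lambda>0$, i.e.\ of the \emph{upper} tail of $S$, about which the hypothesis is silent. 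A doubling strategy is a counterexample: play $w_1=B/G$ and $w_{t+1}=(S_t+B)/G$ as long as $\sigma_t=+1$, and $w=0$ forever after the first $-1$. This algorithm satisfies $\regret_T(\0)\leq B$ against \emph{every} sign sequence, yet $S=(2^T-1)B$ with probability $2^{-T}$, so $\E_{\P}[e^{\lambda S}]\geq 2^{-T}e^{\lambda(2^T-1)B}$, which dwarfs $e^{\lambda B+\lambda^2G^2T/2}$ for any fixed $\lambda>0$. Your proposed rescue --- that the constraint $\sumT g_tw_t\leq B$ "absorbs" the $\tfrac12\lambda^2G^2w_t^2$ corrections --- cannot work: the constraint is a one-sided bound on a signed linear functional of $(w_t)$, whereas the correction is $\sumT w_t^2$, which in the doubling example is of order $4^T B^2/G^2$ on a path of probability $2^{-T}$. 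In general the hypothesis only yields the heavy tail $\P(S\geq a)\leq B/(a+B)$ (from $\E_{\P}[S]=0$ and $S\geq-B$), which has no finite exponential moments, so no argument routed through $\E_{\P}[e^{\lambda S}]$ in this direction can succeed.

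The repair is the first-moment argument that Orabona (and Streeter--McMahan) actually use; it needs neither exponential moments nor a change of measure. From $\E_{\P}[S]=0$ (martingale property, since $w_t$ is $\sigma_{1:t-1}$-measurable) and $-S\leq B$ a.s., one gets for any event $A$ that $\E_{\P}[S\,\mathds 1\{A\}] = \E_{\P}[(-S)\,\mathds 1\{A^c\}]\leq B$, hence $\E_{\P}[-S\mid A]\geq -B/\P(A)$. Taking $A=\{\sumT\sigma_t\geq\gamma\}$ and using binomial anti-concentration $\P(A)\gtrsim e^{-\gamma^2/(2T)}$ gives $\sup_{\sigma}\regret_T(\u)\geq \E_{\P}[\regret_T(\u)\mid A]\geq \|\u\|G\gamma - B\,e^{\gamma^2/(2T)}$ up to constants, and optimizing $\gamma\approx\sqrt{2T\ln(\|\u\|G\sqrt{T}/B)}$ produces the stated rate; the constants $0.3$ and $6$ come from making the anti-concentration step quantitative, which is where the "for $T$ large enough" proviso enters. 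Your reduction to one dimension and your choice of random-sign linear losses are both correct and match the cited construction; it is only the concentration machinery in the middle that must be replaced.
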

Note that at first sight our statement seems stronger than that of the reference, as we transformed the ``there exists" quantifier into ``for any".
In fact, looking at the proof in the reference, the comparator is fixed to be $(U, 0, \dots, 0)$ and the sequence of (linear) losses takes values only in $ \{\pm ( G, 0, \dots, 0) \}$. This same construction can be made in the direction of any vector $\u \in \R^d$.

\subsubsection{Limited Memory II: Stochastic Encodings}\label{app:lower_stoch_enc}

The main downside of the deterministic encoding, namely the finite resolution, can be avoided using randomness, see Section~\ref{sec:stoch_enc}.
However, when there are less than $d$ bits available, even stochastic procedures have strong limitations. Indeed, \cite{mayekar2020ratq} show, via an optimization problem, that in the $(k , 1, 1)$ setting the limit on the number of bits appears as a constant in front of the minimax regret. 
\begin{theorem}[Corollary of Theorem~2 in \citet{mayekar2020ratq}]\label{thm:lower_bound_stoch_opt}
	In the $(k, 1, 1)$ protocol, for any algorithm, for any  $U > 0$
	\begin{equation*}
		\sup_{G-\text{Lipschitz losses}} \;  \sup_{\u : \,  \|\u \| = U}
		\E \big[ \regret_T( \u) \big]
		\geq c \, U G \sqrt{\max \Big(\frac{d}{ k}, 1 \Big) T} \quad ,
	\end{equation*}
	for some numerical constant $c$, and where the expectation is taken with respect to the randomness in the algorithm and the encoding.
\end{theorem}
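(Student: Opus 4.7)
The plan is to obtain this statement via a standard online-to-batch conversion applied to Theorem~2 of \citet{mayekar2020ratq}, which lower bounds the minimax excess risk of stochastic convex optimization when the optimizer only receives $k$-bit encodings of stochastic gradients. The ingredients are: (i) a hard instance, namely a distribution over linear losses $\ell_t(\u) = \langle \g_t, \u \rangle$ supported on the $G$-ball whose expected loss $F(\u) = \E[\ell_t(\u)]$ is minimized at a specific $\u^\star$ with $\|\u^\star\| = U$; (ii) the Mayekar--Tyagi bound, which asserts that any $k$-bit communication-constrained optimizer that outputs some $\bar{\w}_T$ after $T$ such queries must satisfy $\E[F(\bar{\w}_T) - F(\u^\star)] \geq c' U G \sqrt{\max(d/k,1)/T}$; and (iii) the observation that the $(k,1,1)$ protocol is exactly the online counterpart of their query model, so we may feed their hard distribution into the online algorithm.

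First I would verify that the $(k,1,1)$ protocol, instantiated with linear losses sampled i.i.d.\ from the Mayekar--Tyagi hard distribution, produces, at each round $t$, precisely the kind of information access their oracle model allows: the algorithm sees one $k$-bit encoding of one stochastic (sub)gradient per round. Then, given any online algorithm $\mathcal{A}$ with expected regret $R_T$ in this protocol, I construct a batch optimizer that runs $\mathcal{A}$ on $T$ i.i.d.\ samples from the hard distribution and outputs $\bar{\w}_T = \frac{1}{T}\sum_{t=1}^T \w_t$. By linearity of expectation and Jensen's inequality applied to the convex function $F$,
\begin{equation*}
\E[F(\bar{\w}_T) - F(\u^\star)] \;\leq\; \frac{1}{T}\,\E\!\left[\sum_{t=1}^T \big(\ell_t(\w_t) - \ell_t(\u^\star)\big)\right] \;=\; \frac{1}{T}\,\E[R_T(\u^\star)].
\end{equation*}
Combining with the Mayekar--Tyagi lower bound then yields
\begin{equation*}
\E[R_T(\u^\star)] \;\geq\; T \cdot c' U G \sqrt{\max(d/k,1)/T} \;=\; c\, U G \sqrt{\max(d/k,1)\,T},
\end{equation*}
which is exactly the claim, with $\u^\star$ playing the role of the worst-case comparator.

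The main obstacle I anticipate is bookkeeping rather than conceptual: one must check that the Mayekar--Tyagi oracle model (their precise encoder/decoder assumptions, including any use of private vs.\ shared randomness, and whether the decoder is allowed to be stateful) is at least as permissive as the $(k,1,1)$ protocol, so that a lower bound in their model transfers as a lower bound here. If there is a gap (e.g., shared randomness in their model but not ours), the reduction might even be trivial or might require a standard ``public-coin to private-coin'' argument at negligible cost. A secondary point is to confirm that the hard losses are indeed $G$-Lipschitz with comparator norm exactly $U$; this can be enforced by rescaling the Mayekar--Tyagi construction so that gradients lie in the $G$-ball and $\|\u^\star\| = U$, which only affects the numerical constant $c$. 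No further work is needed: the $\max(d/k,1)$ factor comes entirely from their bound, and the $\sqrt{T}$ factor follows directly from the $1/\sqrt{T}$ risk rate multiplied by $T$ through the online-to-batch conversion.
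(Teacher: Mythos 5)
Your proposal is correct and follows essentially the same route as the paper, which likewise obtains this statement from Theorem~2 of \citet{mayekar2020ratq} by observing that their communication-constrained stochastic-optimization oracle model coincides with the $(k,1,1)$ protocol and then applying a standard online-to-batch conversion (average iterate plus Jensen's inequality). Your write-up actually spells out the conversion and the model-matching caveats in more detail than the paper does; no gap.
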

In the mentioned reference, the setting refered to is Stochastic Optimization with access to a $k$-bit estimate of the gradient through an oracle. This is in fact equivalent to the $(k, 1, 1)$ setting, with only the objective differing. This lower bound follows directly from their result via a standard online-to-batch conversion; see e.g. \cite{hazan2016introduction}.

\subsubsection{Delays}
Let us now incorporate the impact of the delays induced by the transmission of information through the network. It consists in implementing an instance of online learning with delays in our decentralized setting.

Note also that the randomization cannot improve on the worst-case regret, as a randomized algorithm can always be converted into a deterministic algorithm incurring the same regret against linear losses. For example, such a conversion is obtained by considering the virtual algorithm playing the expected prediction.

\begin{lemma}[Reduction to Learning with Delays]\label{lem:delays_lower_bound}
	For any network with diameter $D = D(\Graph)$, there exists an activation sequence such that decentralized OCO for gradient-oblivious algorithms with $k$ bits per message is equivalent to online learning with delays of length $\lfloor D / 2 \rfloor$ and with $k$-bit memory.
	
	In particular, for any $U > 0$, if $B(k, t, U)$ is a lower bound on the minimax regret for the $(k, 1, 1)$ protocol in OCO against comparators $\u \in \mathcal B_2(U)$ for all $t \in \N$, then 
	\begin{equation*}
		\sup_{G-\text{Lipschitz losses}} \; 
		\sup_{\u : \,  \|\u \| \leq U}
		\E \big[ \regret_T( \u) \big]
		\geq \Big\lfloor \frac{D}{2}  \Big\rfloor B\Big(k , \Big\lfloor \frac{2T}{D} \Big\rfloor, 
		U\Big) \, , 
	\end{equation*} 
	where the expectation is taken with respect to the randomness in the algorithm and the encoding.
\end{lemma}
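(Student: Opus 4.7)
The plan is to construct an activation sequence together with an adversarial linear-loss sequence that embeds a $(k,1,1)$ OCO game with delays inside the decentralized protocol, and then invoke the assumed lower bound $B(k,\cdot,U)$. Write $\tau = \lfloor D/2 \rfloor$ and $M = \lfloor T / \tau \rfloor \geq \lfloor 2T/D \rfloor$, and fix two nodes $n_1, n_2$ of $\Graph$ at distance exactly $D$ (which exist because $D$ is the diameter). First I would partition $\{1, \dots, T\}$ into $M$ consecutive blocks $B_1, \dots, B_M$ of length $\tau$, and define the activation sequence by $I_t = n_1$ on odd-indexed blocks and $I_t = n_2$ on even-indexed blocks. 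The adversary from the assumed $(k, 1, 1)$ lower bound, which we may take to play linear losses on $M$ rounds (both Theorem~\ref{thm:low_orab} and Theorem~\ref{thm:lower_bound_stoch_opt} produce such hard instances), is then ``inflated'' by repeating the same linear loss $\ell_j(\u) = \inner{\g_j}{\u}$ throughout each block $B_j$.

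This construction enjoys two key properties. First, \emph{block amplification}: since $\ell_j$ is linear and constant on $B_j$, setting $\bar\w_j = \tau^{-1}\sum_{t \in B_j} \w_t$ gives
\begin{equation*}
\sum_{t \in B_j}\!\bigl(\ell_j(\w_t) - \ell_j(\u)\bigr) = \tau\bigl(\ell_j(\bar\w_j) - \ell_j(\u)\bigr),
\end{equation*}
so the decentralized regret equals exactly $\tau$ times the regret of an $M$-round virtual game that plays $\bar\w_j$ against $\ell_j$. Second, the \emph{effective delay}: because gradients take $D$ actual rounds to cross between $n_1$ and $n_2$ while each block lasts only $\tau < D$ rounds, the encoded gradients produced during $B_j$ cannot reach the node activated in $B_{j+1}$ before $B_{j+1}$ ends; conversely, the encoded gradients of blocks $B_1, \dots, B_{j-1}$ have had at least $2\tau$ actual rounds to propagate, and a short case distinction on the parity of $D$ shows that $2\tau \geq D - 1$ suffices for them to have all arrived by the start of $B_{j+1}$. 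Because the algorithm is gradient-oblivious, even the active node manipulates only the $k$-bit encodings of its own past gradients, so the virtual learner at round $j+1$ has access to $k$-bit encoded copies of exactly $\g_1, \dots, \g_{j-1}$ and is missing $\g_j$; this is precisely a $(k, 1, 1)$ OCO game on $M$ rounds with a one-virtual-round delay, corresponding to $\tau = \lfloor D/2 \rfloor$ actual rounds of delay.

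Combining the two properties yields
\begin{equation*}
\E\!\bigl[\regret_T(\u)\bigr] \;\geq\; \tau \cdot \E\!\bigl[\regret^{\mathrm{virt}}_M(\u)\bigr] \;\geq\; \tau \cdot B(k, M, U) \;\geq\; \lfloor D/2 \rfloor \cdot B\bigl(k, \lfloor 2T/D \rfloor, U\bigr),
\end{equation*}
where the middle inequality uses that delays can only make the $(k,1,1)$ problem harder: any algorithm for the delayed game induces an algorithm for the undelayed game with the same worst-case regret by simply discarding the ``extra'' information, so the assumed undelayed lower bound $B(k,M,U)$ transfers, and the last inequality uses monotonicity of $B$ in its second argument together with $M \geq \lfloor 2T/D \rfloor$. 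The main technical obstacle is the delay bookkeeping at the block transitions: one must carefully verify that at the start of $B_{j+1}$ the newly-active node is missing \emph{all} of $B_j$'s encoded gradients and continues to miss them throughout $B_{j+1}$, and that the parity of $D$ does not introduce off-by-one slack that would shrink the effective delay below $\tau$; once this counting is done, the block amplification and the delayed-versus-undelayed comparison are routine.
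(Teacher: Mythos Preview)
Your construction is genuinely different from the paper's, and it contains a gap that breaks the reduction to the $(k,1,1)$ lower bound. The problem is the block-amplification step: by repeating the \emph{same} linear loss $\ell_{j}$ throughout block $B_{j}$, you hand the decentralized learner strictly more information than a $(k,1,1)$ learner would have. Concretely, at every round of $B_{j+1}$ after the first one, the active node has already observed $\g_{j+1}$ and encoded it, so the averaged iterate $\bar\w_{j+1}=\tau^{-1}\sum_{t\in B_{j+1}}\w_t$ depends on $\g_{j+1}$ itself --- your ``virtual learner'' is effectively a one-step look-ahead learner, not a delayed one. On top of this, for stochastic encodings the active node accumulates $\tau$ independent $k$-bit estimates of $\g_i$ during each block $B_i$, so by the start of $B_{j+1}$ the learner has the equivalent of $\tau k$ bits on each of $\g_1,\dots,\g_{j-1}$, not $k$ bits. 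Both effects mean that the virtual learner does \emph{not} live in the $(k,1,1)$ protocol (with or without a one-round delay), and the minimax lower bound $B(k,M,U)$ for that protocol need not apply to it. Your sentence ``delays can only make the $(k,1,1)$ problem harder'' argues in the wrong direction: the obstacle here is that the virtual learner has \emph{more} information, not less.

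The paper sidesteps both issues by using a different activation: it walks along a diametral path in steps of two, one distinct node per round, and cycles. Each round then carries its own loss with a single $k$-bit encoding, so no repetition and no look-ahead arise; this realizes delay-$\tau$ OCO with $k$-bit memory directly. The factor $\tau=\lfloor D/2\rfloor$ then comes from the Zinkevich thread decomposition (cited as \citep[Lemma~3]{zinkevich2009slow}), which splits the $T$ rounds into $\tau$ \emph{independent} $(k,1,1)$ instances of length $\lfloor T/\tau\rfloor$. The key idea you are missing is that the $\tau$-fold amplification must come from $\tau$ independent subproblems, not from $\tau$ repetitions of the same loss.
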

\begin{proof}
	Consider a maximal path through the graph of length $D$. Without loss of generality, we index the sequence of nodes by $\{1, \dots, D\}$, assuming that node $n$ is connected to node $m$ if and only if $|n - m| = 1$.
	
	Then consider the activation sequence that sequentially selects the nodes $1, 3, \dots, 2\lfloor D /2 \rfloor - 1$ and starts over to $1$. The information available to the active node $I_t$ is always $\lfloor D /2 \rfloor$ rounds old; we denote this number by $\uptau$ to reduce clutter.
	
	Under this worst-delay activation sequence, solving the decentralized online learning problem is equivalent to solving an OCO instance with the same losses and delay $\uptau$. Furthermore, the $k$-bit memory constraint affects identically the OCO instance. Note also that the definition of regret coincides under both settings. 
	
	The final claim is from \citet[Lemma 3]{zinkevich2009slow}; see also \citet[Proposition~~16]{hsieh2020multiagent} for a more complete argument that accounts for non-deterministic algorithms.
\end{proof}
All the ingredients to conclude are now available. Plugging in the lower bounds from Lemma~\ref{lem:det_lower_bound} with Theorem~\ref{thm:low_orab}, and Theorem~\ref{thm:lower_bound_stoch_opt} into Lemma~\ref{lem:delays_lower_bound}, we obtain the results of Theorems~\ref{th:lower_boundI} and Theorem~\ref{th:lower_boundII}, respectively.

\end{document}